\newtheorem{theorem}{Theorem}
\newtheorem{lemma}{Lemma}
\newtheorem{definition}{Definition}
\newtheorem{proposition}{Proposition}
\newtheorem{condition}{Condition}
\DeclareMathOperator*{\argmin}{arg\,min}
\DeclareMathOperator{\diag}{diag}
\DeclareMathOperator{\eE}{\mathbb{E}}
\newcommand{\rR}{\mathbb{R}}
\def\numofproj{{P}}
\definecolor{edit}{rgb}{0,0,0}
\begin{document}
%
% paper title
% Titles are generally capitalized except for words such as a, an, and, as,
% at, but, by, for, in, nor, of, on, or, the, to and up, which are usually
% not capitalized unless they are the first or last word of the title.
% Linebreaks \\ can be used within to get better formatting as desired.
% Do not put math or special symbols in the title.
\title{Necessary and Sufficient Conditions and a Provably Efficient
  Algorithm for Separable Topic Discovery}
%A Provably Efficient Algorithm for \\Separable Topic Discovery}
%
%
% author names and IEEE memberships
% note positions of commas and nonbreaking spaces ( ~ ) LaTeX will not break
% a structure at a ~ so this keeps an author's name from being broken across
% two lines.
% use \thanks{} to gain access to the first footnote area
% a separate \thanks must be used for each paragraph as LaTeX2e's \thanks
% was not built to handle multiple paragraphs
%

\author{Weicong Ding,$^*$~\IEEEmembership{}
        Prakash Ishwar,~\IEEEmembership{IEEE Senior Member}
        and~Venkatesh Saligrama,~\IEEEmembership{IEEE Senior Member }% <-this % stops a space
%\thanks{W.Ding, P. Ishwar and V. Saligrama are with the Department of Electrical and Computer Engineering, Boston University, 8 Saint Mary's Street, Boston, MA, 02215 USA. e-mails: $\{$dingwc,pi,srv$\}$@bu.edu.}% <-this % stops a space
%\thanks{J. Doe and J. Doe are with Anonymous University.}% <-this % stops a space
%\thanks{Manuscript received April 19, 2005; revised September 17, 2014.}
}

% note the % following the last \IEEEmembership and also \thanks - 
% these prevent an unwanted space from occurring between the last author name
% and the end of the author line. i.e., if you had this:
% 
% \author{....lastname \thanks{...} \thanks{...} }
%                     ^------------^------------^----Do not want these spaces!
%
% a space would be appended to the last name and could cause every name on that
% line to be shifted left slightly. This is one of those "LaTeX things". For
% instance, "\textbf{A} \textbf{B}" will typeset as "A B" not "AB". To get
% "AB" then you have to do: "\textbf{A}\textbf{B}"
% \thanks is no different in this regard, so shield the last } of each \thanks
% that ends a line with a % and do not let a space in before the next \thanks.
% Spaces after \IEEEmembership other than the last one are OK (and needed) as
% you are supposed to have spaces between the names. For what it is worth,
% this is a minor point as most people would not even notice if the said evil
% space somehow managed to creep in.

% The paper headers
\markboth{
%Journal of Special Issue on Signal Processing
}%
{
%Ding \MakeLowercase{\textit{et al.}}: A Provably Efficient Algorithm for Separable Topic Discovery
}
% The only time the second header will appear is for the odd numbered pages
% after the title page when using the twoside option.
% 
% *** Note that you probably will NOT want to include the author's ***
% *** name in the headers of peer review papers.                   ***
% You can use \ifCLASSOPTIONpeerreview for conditional compilation here if
% you desire.

% If you want to put a publisher's ID mark on the page you can do it like
% this:
%\IEEEpubid{0000--0000/00\$00.00~\copyright~2014 IEEE}
% Remember, if you use this you must call \IEEEpubidadjcol in the second
% column for its text to clear the IEEEpubid mark.

% use for special paper notices
%\IEEEspecialpapernotice{(Invited Paper)}

% make the title area
\maketitle
%%%%**************************************************************
% As a general rule, do not put math, special symbols or citations
% in the abstract or keywords.
\begin{abstract}
We develop necessary and sufficient conditions and a novel provably
consistent and efficient algorithm for discovering topics (latent
factors) from observations (documents) that are realized from a
probabilistic mixture of shared latent factors that have certain
properties. Our focus is on the class of topic models in which each
shared latent factor contains a novel word that is unique to that
factor, a property that has come to be known as separability. Our
algorithm is based on the key insight that the novel words correspond
to the extreme points of the convex hull formed by the row-vectors of
a suitably normalized word co-occurrence matrix. We leverage this
geometric insight to establish polynomial computation and sample
complexity bounds based on a few isotropic random projections of the
rows of the
%an appropriately 
normalized word co-occurrence matrix. Our proposed
random-projections-based algorithm is naturally amenable to an
efficient distributed implementation and is attractive for modern
web-scale distributed data mining applications.
\end{abstract}

% Note that keywords are not normally used for peerreview papers.
\begin{IEEEkeywords}
Topic Modeling, Separability, Random Projection, Solid Angle,
Necessary and Sufficient Conditions.
\end{IEEEkeywords}

% For peer review papers, you can put extra information on the cover
% page as needed:
% \ifCLASSOPTIONpeerreview
% \begin{center} \bfseries EDICS Category: 3-BBND \end{center}
% \fi
%
% For peerreview papers, this IEEEtran command inserts a page break and
% creates the second title. It will be ignored for other modes.
\IEEEpeerreviewmaketitle

\section{Introduction}
% The very first letter is a 2 line initial drop letter followed
% by the rest of the first word in caps.
% 
% form to use if the first word consists of a single letter:
% \IEEEPARstart{A}{demo} file is ....
% 
% form to use if you need the single drop letter followed by
% normal text (unknown if ever used by IEEE):
% \IEEEPARstart{A}{}demo file is ....
% 
% Some journals put the first two words in caps:
% \IEEEPARstart{T}{his demo} file is ....
% 
% Here we have the typical use of a "T" for an initial drop letter
% and "HIS" in caps to complete the first word.
\IEEEPARstart{T}{opic} modeling refers to a family of generative
models and associated algorithms for discovering the (latent) topical
structure shared by a large corpus of documents. They are important
for organizing, searching, and making sense of a large text corpus
\cite{Blei2012Review:ref}. In this paper we describe a novel geometric
approach, with provable statistical and computational efficiency
guarantees, for learning the latent topics in a document collection.
This work is a culmination of a series of recent publications on
certain structure-leveraging methods for topic modeling with provable
theoretical guarantees \cite{Dingetal2013ICASSP:ref, DDP:ref,
  Ding13b:ref, Ding14:ref}.

% overall models
We consider a corpus of $M$ documents, indexed by $m = 1, \ldots, M$,
each composed of words from a fixed vocabulary of size $W$. The
distinct words in the vocabulary are indexed by $w = 1,\ldots,W$. Each
document $m$ is viewed as an unordered ``bag of words'' and is
represented by an empirical $W\times 1$ word-counts vector
$\mathbf{X}^m$, where $X_{w,m}$ is the number of times that word $w$
appears in document $m$
\cite{LDA:ref,Blei2012Review:ref,Arora2:ref,Ding14:ref}. The entire
document corpus is then represented by the $W\times M$ matrix
$\mathbf{X} = \left[\mathbf{X}^{1},\ldots, \mathbf{X}^{M} \right]$.
\footnote{When it is clear from the context, we will use $X_{w,m}$ to
  represent either the empirical word-count or, by suitable
  column-normalization of $\mathbf{X}$, the empirical word-frequency.}
A ``topic'' is a $W\times 1$ distribution over the vocabulary.  A
topic model posits the existence of $K < \min(W, M)$ latent topics
that are {\it shared} among all $M$ documents in the corpus. The
topics can be collectively represented by the $K$ columns
$\bm\beta^{1}, \ldots, \bm\beta^{K}$ of a $W\times K$
column-stochastic ``topic matrix'' $\bm\beta$.
Each document $m$ is conceptually modeled as being generated
independently of all other documents through a two-step process:
1) first draw a $K \times 1$ document-specific distribution over
topics $\bm\theta^m$ from a prior distribution $\Pr(\bm\alpha)$ on the
probability simplex with some hyper-parameters $\bm\alpha$;
2) then draw $N$ iid words according to a $W \times 1$
document-specific word distribution over the vocabulary given by
$\mathbf{A}^{m} = \sum_{k=1}^{K}\bm\beta^{k}\theta_{k,m}$ which is a
convex combination (probabilistic mixture) of the latent topics.
Our goal is to estimate $\bm\beta$ from the matrix of empirical
observations $\mathbf{X}$.
%{\it primary goal} in this paper is to present (i) a novel
%theoretic framework, (ii) algorithms with provable theoretical
%guarantees, and (iii) experimental results for the problem
%of estimating $\bm\beta$ from the matrix of empirical observations
%$\mathbf{X}$.
%
To appreciate the difficulty of the problem, consider a typical
benchmark dataset such as a news article collection from the New York
Times (NYT) \cite{UCIdataset:ref} that we use in our experiments.
%after pruning the vocabulary based on document frequencies and then
%removing a standard stop-word list \cite{Arora2:ref,Ding14:ref}, we
%get
In this dataset, after suitable pre-processing, $W = 14,943$, $M
=300,000$, and, on average, $N = 298$. Thus, $N \ll W \ll M$,
$\mathbf{X}$ is very sparse, and $M$ is very large. Typically, $K
\approx 100 \ll \min(W,M)$.
%
%

% difficulties, separability, 
This estimation problem in topic modeling has been extensively studied. The prevailing approach is to compute the MAP/ML estimate \cite{Blei2012Review:ref}. The true
posterior of $\bm\beta$ given $\mathbf{X}$, however, is intractable to
compute and the associated MAP and ML estimation problems are in fact
NP-hard in the general case \cite{A12:ref, Sontag11:ref}. This
necessitates the use of sub-optimal methods based on approximations
and heuristics such as Variational-Bayes and MCMC
\cite{LDA:ref,Griffiths04Gibbs:ref,wainwright2008graphical,
  MMLVM14:ref}.
While they produce impressive empirical results on many real-world
datasets, guarantees of asymptotic consistency or efficiency for these
approaches are either weak or non-existent.
This makes it difficult to evaluate {\it model fidelity}:  failure to
produce satisfactory results in new datasets could be due to the use
of approximations and heuristics or due to model mis-specification
which is more fundamental.
Furthermore, these sub-optimal approaches are computationally intensive
for large text corpora \cite{Arora2:ref,Ding14:ref}.

To overcome the hardness of the topic estimation problem in its full
generality, a new approach has emerged to learn the topic model by
imposing additional structure on the model parameters
\cite{A12:ref,Arora2:ref,DDP:ref,Ding14:ref,anandkumar2014tensor,Kumar13:ref}.
This paper focuses on a key structural property of the topic matrix
$\bm\beta$ called {\bf topic separability}
\cite{Arora2:ref,DDP:ref,Ding14:ref,Kumar13:ref} wherein every latent
topic contains at least one word that is {\bf novel} to it, i.e., the
word is unique to that topic and is absent from the other topics. This
is, in essence, a property of the support of the latent topic matrix
$\bm\beta$.
The topic separability property can be motivated by the fact that for many
real-world datasets, the empirical topic estimates produced by popular
Variational-Bayes and Gibbs Sampling approaches are approximately
separable \cite{Arora2:ref,Ding14:ref}.
Moreover, it has recently been shown that the separability property
will be approximately satisfied with high probability when the
dimension of the vocabulary $W$ scales sufficiently faster than the
number of topics $K$ and $\bm\beta$ is a realization of a Dirichlet
prior that is typically used in practice \cite{Ding15HighProb:ref}.
Therefore, separability is a {\it natural} approximation for {\it
  most} high-dimensional topic models.

Our approach exploits the following geometric implication of the key
separability structure.
If we associate each word in the vocabulary with a row-vector of a
suitably normalized empirical word co-occurrence matrix, {\bf the set
  of novel words correspond to the extreme points} of the convex hull
formed by the row-vectors of all words.
We leverage this geometric insight and develop a provably consistent
and efficient algorithm. Informally speaking, we establish the
following result:
\begin{theorem}
\label{thm:informal-statement}
If the topic matrix is separable and the mixing weights satisfy a
minimum information-theoretically necessary technical condition, then
our proposed algorithm runs in {\bf polynomial time} in $M,W,N,K$, and
estimates the topic matrix {\bf consistently} as $M\rightarrow\infty$
with $N \geq 2$ held fixed.
Moreover, our proposed algorithm can estimate $\bm\beta$ to within an
$\epsilon$ element-wise error with a probability at least $1-\delta$
if $M \geq \textbf{Poly}\left( W, 1/N, K, \log(1/\delta), 1/\epsilon
\right)$.
\end{theorem}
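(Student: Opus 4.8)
The plan is to establish the theorem in four stages, moving from an exact population-level geometry to a finite-sample robustness analysis. First, I would make the central geometric claim precise at the population level. Define the population word co-occurrence matrix $\mathbf{E} := \eE[\mathbf{A}\mathbf{A}^\top]$ (equivalently the limit of a suitably symmetrized second moment of $\mathbf{X}$), and row-normalize it so that each word $w$ is mapped to a point $\mathbf{\bar{E}}_w$ on the probability simplex. Under separability, each topic $k$ has a novel word whose co-occurrence profile is a scaled copy of the $k$-th row of the topic-mixing second-moment matrix, whereas every non-novel word's profile is a convex combination of the novel-word profiles, with weights given by its topic loadings. I would then show that, provided the second-moment matrix of the mixing weights $\eE[\bm\theta\bm\theta^\top]$ is full rank (the minimum technical condition), the $K$ novel-word profiles are affinely independent and constitute exactly the extreme points of $\conv\{\mathbf{\bar{E}}_w : w = 1,\ldots,W\}$, while all non-novel words lie in the relative interior.

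Second, I would analyze extreme-point detection by isotropic random projections. For each of $\numofproj$ random directions, the algorithm records which row attains the maximal projection; a standard fact is that a point is returned with probability equal to the normalized solid angle subtended by the convex hull at that extreme point. Here the full-rank condition does double duty: it guarantees each novel word subtends a strictly positive solid angle, so that taking $\numofproj$ polynomial in $K$ and $\log(1/\delta)$ suffices to recover all $K$ extreme points with high probability, and it also lower-bounds these solid angles, which controls the number of projections needed.

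Third, I would pass from the population matrix $\mathbf{\bar{E}}$ to its empirical counterpart built from $\mathbf{X}$. This is where $N\geq 2$ is essential: forming an unbiased estimate of word co-occurrences requires at least two words per document. I would establish an element-wise concentration bound showing that the empirical normalized co-occurrence matrix converges to $\mathbf{\bar{E}}$ at rate $O(1/\sqrt{M})$ (with constants depending polynomially on $W,1/N,K$), using independence across documents and a matrix/vector Bernstein argument. The key robustness lemma is then that the random-projection extreme-point test is stable under perturbations small relative to the minimum solid-angle gap, so the correct novel words are still identified once $M$ is large enough.

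Finally, having identified the novel words and clustered them by topic, I would recover $\bm\beta$ by solving the induced constrained (nonnegative, normalized) linear systems expressing every word's profile in terms of the novel-word profiles, and propagate the $O(1/\sqrt{M})$ estimation error through this well-conditioned inversion to obtain the $\epsilon$ element-wise guarantee. Collecting the sample requirements from the concentration step, the solid-angle gap, and the conditioning of the recovery step yields $M \geq \textbf{Poly}(W,1/N,K,\log(1/\delta),1/\epsilon)$, and every step is polynomial-time in $M,W,N,K$. I expect the main obstacle to be the robustness lemma: quantifying how a finite number of random projections, perturbed by finite-sample noise, still isolates every extreme point requires carefully lower-bounding the solid angles in terms of the conditioning of $\eE[\bm\theta\bm\theta^\top]$ and matching that against the $1/\sqrt{M}$ perturbation---precisely the interface where the geometry, the probabilistic detection, and the statistics all meet.
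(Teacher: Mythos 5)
Your overall architecture matches the paper's: form a normalized word co-occurrence matrix from split documents (this is where $N\geq 2$ enters), show novel words are the extreme points of its rows, detect them via isotropic random projections with success probability governed by normalized solid angles, prove element-wise concentration of the empirical co-occurrence matrix at rate $O(1/\sqrt{MN})$, and recover $\bm\beta$ by constrained regression onto the novel-word rows followed by renormalization. All of those ingredients appear in the paper (its Lemmas on extreme points, solid angles, the co-occurrence concentration bound, and the two main finite-sample theorems), so the skeleton of your argument is sound.

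There is, however, a genuine gap at the interface with the theorem's hypothesis. The statement asserts consistency under ``a minimum information-theoretically necessary technical condition'' on the mixing weights, and the paper's entire point is to identify that condition exactly: the simplicial condition on $\bar{\mathbf{R}} = \diag(\mathbf{a})^{-1}\eE[\bm\theta\bm\theta^{\top}]\diag(\mathbf{a})^{-1}$ is necessary and sufficient for novel word detection, and affine-independence of the rows of $\bar{\mathbf{R}}$ is necessary and sufficient for estimating $\bm\beta$. You instead assume $\eE[\bm\theta\bm\theta^{\top}]$ is full rank and let that condition ``do double duty.'' Full rank is strictly stronger than affine-independence: the paper exhibits the $3\times 3$ matrix with rows $(1,0,1)$, $(0,1,1)$, $(1,1,2)$, which is affine-independent (hence simplicial) but singular, so a topic prior with this normalized second moment is provably estimable yet excluded by your hypothesis. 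Consequently your argument, as written, proves a weaker theorem than the one claimed. To close the gap you would need (a) the necessity direction --- the paper's constructions of two distinct separable topic matrices inducing the same distribution on $\mathbf{X}$ whenever $\bar{\mathbf{R}}$ fails to be simplicial (resp.\ affine-independent) --- and (b) to rerun your positive-solid-angle and unique-decomposition steps using only the simplicial constant $\gamma_s$ and the affine-independence constant $\gamma_a$ rather than a minimum eigenvalue; the paper does exactly this, bounding the solid-angle gap and the regression conditioning in terms of $\gamma_s$, $\gamma_a$, $a_{\min}$, and $\lambda_{\max}$. A secondary, smaller issue: non-novel words need not lie in the relative interior of the hull (they can sit on a facet), which is why the paper rank-orders estimated solid angles and uses a clustering threshold $\zeta$ to separate true extreme points from outlier ones; your proposal should make that step explicit rather than asserting interiority.
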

%
%an approach that can consistently recover the topic matrix $\bm\beta$ when $N$ is fixed and the number of documents $M$ increases to infinity, and have polynomial sample and computation complexity with respect to all  model parameters ($M,N,K,W$).
%
\noindent The asymptotic setting $M\rightarrow\infty$ with $N$ held
fixed is motivated by text corpora in which the number of words in a
single document is small while the number of documents is large.
We note that our algorithm can be applied to any family of topic
models whose topic mixing weights prior $\Pr(\bm\alpha)$ satisfies a
minimum information-theoretically necessary technical condition.  In
contrast, the standard Bayesian approaches such as Variational-Bayes
or MCMC need to be hand-designed separately for each specific topic
mixing weights prior.
The highlight of our approach is to identify the novel words as
extreme points through appropriately defined {\bf random projections}.
Specifically, we project the row-vector of each word in an
appropriately normalized word co-occurrence matrix along a few
independent and isotropically distributed random directions. The
fraction of times that a word attains the maximum value along a random
direction is a measure of its degree of robustness as an extreme
point.
This process of random projections followed by counting the
number of times a word is a maximizer can be efficiently computed and
is robust to the perturbations induced by sampling noise associated
with having only a very small number of words per document $N$.
In addition to being computationally efficient, it turns out that this
random projections based approach $(1)$ requires the {\it minimum}
information-theoretically necessary technical conditions on the topic
prior for asymptotic consistency, and $(2)$ can be naturally
parallelized and distributed. As a consequence, it can provably
achieve the efficiency guarantees of a centralized method while
requiring insignificant communication between {\it distributed}
document collections \cite{Ding14:ref}. This is attractive for
web-scale topic modeling of large distributed text corpora.

Another advance of this paper is the identification of necessary and
sufficient conditions on the mixing weights for consistent separable
topic estimation.
In previous work we showed that a {\it simplicial} condition on the
mixing weights is both necessary and sufficient for consistently {\it
  detecting} all the novel words \cite{Ding13b:ref}. In this paper we
complete the characterization by showing that an {\it affine
  independence} condition on the mixing weights is necessary and
sufficient for consistently {\it estimating} a separable topic matrix.
These conditions are satisfied by practical choices of topic priors
such as the Dirichlet distribution \cite{LDA:ref}.
All these necessary conditions are information-theoretic and
algorithm-independent, i.e., they are irrespective of the specific
statistics of the observations or the algorithms that are used.
The provable statistical and computational efficiency guarantees of
our proposed algorithm hold true under these necessary and sufficient
conditions.
%

% organization of thie paper
% [revisit this paragraph at the end]
The rest of this paper is organized as follows. We review related work
on topic modeling as well as the separability property in various
domains in Sec.~\ref{section:related_work}. We introduce the
separability property on $\bm\beta$, the simplicial and affine
independence conditions on mixing weights, and the extreme point
geometry that motivates our approach in
Sec.~\ref{section:ideal_geometry}. We then discuss how the solid
angle can be used to identify robust extreme points to deal with a
finite number of samples (words per document) in
Sec.~\ref{section:finite_geometry}. We describe our overall
algorithm and sketch its analysis in
Sec.~\ref{section:algorithm}. We demonstrate the performance of our
approach in Sec.~\ref{section:experiments} on various synthetic and
real-world examples.
%
% *************PI: remember to say this:
Proofs of all 
%new 
results appear in the appendices.
%
%%%%%%%%%%%%%%%%%%%%%%%%%%%
\section{Related Work}
\label{section:related_work}
The idea of modeling text documents as mixtures of a few semantic
topics was first proposed in \cite{pLSI:ref} where the mixing weights
were assumed to be deterministic.  Latent Dirichlet Allocation (LDA)
in the seminal work of \cite{LDA:ref} extended this to a probabilistic
setting by modeling topic mixing weights using Dirichlet priors. This
setting has been further extended to include other topic priors such
as the log-normal prior in the Correlated Topic Model
\cite{Blei07:ref}. LDA models and their derivatives have been
successful on a wide range of problems in terms of achieving good
empirical performance \cite{Blei2012Review:ref,MMLVM14:ref}.

% To estimate and inference the topic models,
The prevailing approaches for estimation and inference problems in
topic modeling are based on MAP or ML estimation
\cite{Blei2012Review:ref}.
However, the computation of posterior distributions conditioned on
observations $\mathbf{X}$ is intractable \cite{LDA:ref}. Moreover, the
MAP estimation objective is non-convex and has been shown to be
$\mathcal{NP}$-hard \cite{Sontag11:ref,A12:ref}.
Therefore various approximation and heuristic strategies have been
employed. These approaches fall into two major categories -- sampling
approaches and optimization approaches.
Most sampling approaches are based on Markov Chain Monte Carlo (MCMC)
algorithms that seek to generate (approximately) independent samples
from a Markov Chain that is carefully designed to ensure that the
sample distribution converges to the true posterior
\cite{Griffiths04Gibbs:ref,Wallach09:ref}.
% A conjugate prior on topic weights are favorable to achieve efficient Gibbs Sampling.
%
Optimization approaches are typically based on the so-called
Variational-Bayes methods. These methods optimize the parameters of a
simpler parametric distribution so that it is close to the true
posterior in terms of KL divergence
\cite{LDA:ref,wainwright2008graphical}. Expectation-\-Maximization-\-type
algorithms are typically used in these methods.  In practice, while
both Variational-Bayes and MCMC algorithms have similar performance,
Variational-Bayes is typically faster than MCMC
\cite{hoffman2010online,Blei2012Review:ref}.

Nonnegative Matrix Factorization (NMF) is an alternative approach for
topic estimation. NMF-based methods exploit the fact that both the
topic matrix $\bm{\beta}$ and the mixing weights are nonnegative and
attempt to decompose the empirical observation matrix $\mathbf{X}$
into a product of a nonnegative topic matrix $\bm\beta$ and the matrix
of mixing weights by minimizing a cost function of the form
\cite{nmfLS:ref,NMFbook:ref,recht2012factoring,hoffman2010online}
\begin{equation*}
%
%\min 
\sum_{m=1}^{M} d(\mathbf{X}^{m}, \bm\beta\bm\theta^{m}) +\lambda
\psi(\bm\beta, \bm\theta^{1},\ldots,\bm\theta^{M}),
\end{equation*}
where $d( , )$ is some measure of closeness and $\psi$ is a
regularization term which enforces desirable properties, e.g.,
sparsity, on $\bm\beta$ and the mixing weights.  The NMF problem,
however, is also known to be non-convex and $\mathcal{NP}$-hard
\cite{Vava09:ref} in general. Sub-optimal strategies such as
alternating minimization, greedy gradient descent, and heuristics are
used in practice \cite{NMFbook:ref}.

In contrast to the above approaches, a new approach has recently
emerged which 
%emerging theme 
is based on imposing additional structure on the model parameters
\cite{A12:ref,Arora2:ref,DDP:ref,Ding14:ref,anandkumar2014tensor,Kumar13:ref}.
These approaches show that the topic discovery problem lends itself to
provably consistent and polynomial-time solutions by making
assumptions about the {\it structure} of the topic matrix $\bm\beta$
and the distribution of the mixing weights. In this category of
approaches are methods based on a tensor decomposition of the moments
of $\mathbf{X}$ \cite{Anan13:ref, anandkumar2014tensor}.
The algorithm in \cite{Anan13:ref} uses second order empirical moments
and is shown to be asymptotically consistent when the topic matrix
$\bm\beta$ has a special sparsity structure.
The algorithm in \cite{anandkumar2014tensor} uses the third order
tensor of observations. It is, however, strongly tied to the specific
structure of the Dirichlet prior on the mixing weights and requires
knowledge of the concentration parameters of the Dirichlet
distribution \cite{anandkumar2014tensor}.
Furthermore, in practice these approaches are computationally
intensive and require some initial coarse dimensionality reduction,
gradient descent speedups, and GPU acceleration to process large-scale
text corpora like the NYT dataset \cite{anandkumar2014tensor}.
%Furthermore, these approaches for large scale text corpus are
%computationally intensive and require gradient descent speedups and
%GPU acceleration \cite{anandkumar2014tensor}.

%
Our work falls into the family of approaches that exploit the
separability property of $\bm\beta$ and its geometric implications
\cite{A12:ref,Arora2:ref,DDP:ref,Ding14:ref,Kumar13:ref,Awasthi15VB:ref,bansal2014provable}.
An asymptotically consistent polynomial-time topic estimation
algorithm was first proposed in \cite{A12:ref}.
However, this method requires solving $W$ linear programs, each with
$W$ variables and is computationally impractical.
Subsequent work improved the computational efficiency
\cite{recht2012factoring, Kumar13:ref}, but theoretical guarantees of
asymptotic consistency (when $N$ fixed, and the number of documents
$M\rightarrow \infty$) are unclear.
Algorithms in \cite{Arora2:ref} and \cite{DDP:ref} are both practical
and provably consistent. Each requires a stronger and slightly
different technical condition on the topic mixing weights than
\cite{A12:ref}.
Specifically, \cite{Arora2:ref} imposes a full-rank condition on the
second-order correlation matrix of the mixing weights and proposes a
Gram-Schmidt procedure to identify the extreme points. Similarly,
\cite{DDP:ref} imposes a diagonal-dominance condition on the same
second-order correlation matrix and proposes a random projections
based approach.
These approaches are tied to the specific conditions imposed and they
both fail to detect all the novel words and estimate topics when the
imposed conditions (which are sufficient but not necessary for
consistent novel word detection or topic estimation) fail to hold in
some examples \cite{Ding14:ref}.
The random projections based algorithm proposed in \cite{Ding14:ref}
is both practical and provably consistent. Furthermore, it requires
fewer constraints on the topic mixing weights.

We note that the separability property has been exploited in other
recent work as well \cite{bansal2014provable,Awasthi15VB:ref}.  In
\cite{bansal2014provable}, a singular value decomposition based
approach is proposed for topic estimation. In \cite{Awasthi15VB:ref},
it is shown that the standard Variational-Bayes approximation can be
asymptotically consistent if $\bm\beta$ is separable. However, the
additional constraints proposed 
%in \cite{bansal2014provable} 
%% IS THIS THE CORRECT REFERENCE TO CITE HERE? SEEMS LIKE IT SHOULD BE
%% AWASTHI15VB
essentially boil down to the 
%condition 
requirement that each document contain predominantly only one topic.
In addition to assuming the existence of such ``pure'' documents,
\cite{Awasthi15VB:ref} also requires a strict initialization. It is
thus unclear how this can be achieved using only the observations
$\mathbf{X}$.

The separability property has been re-discovered and exploited in the
literature across a number of different fields and has found
application in several problems.
% different fields.
%
To the best of our knowledge, this concept was first introduced as the
{\it Pure Pixel Index} assumption in the Hyperspectral Image unmixing
problem \cite{boardman1993automating:ref}. This work assumes the
existence of pixels in a hyper-spectral image containing predominantly
one species.
Separability has also been studied in the NMF literature in the
context of ensuring the uniqueness of NMF \cite{Donhunique:ref}.
Subsequent work has led to the development of NMF algorithms that
exploit separability
%in different aspects in the context of NMF
\cite{recht2012factoring,Gillis2014:ref}.
The uniqueness and correctness results in this line of work has
primarily focused on the noiseless case.
We finally note that separability has also been recently exploited in
the problem of learning multiple ranking preferences from pairwise
comparisons for personal recommendation systems and information
retrieval \cite{Farias09:ref,topicRank2:ref} and has led to provably
consistent and efficient estimation algorithms.
%
%
%******************************************************************
\section{Topic Separability, Necessary and Sufficient Conditions, and the Geometric Intuitions}
\label{section:ideal_geometry}

In this section, we unravel the key ideas that motivate our
algorithmic approach by focusing on the ideal case where there is no
``sampling-noise'', i.e., each document is infinitely long ($N =
\infty$). In the next section, we will turn to the finite $N$ case.
%
%This section formally introduces the key structural property, namely
%topic separability, and the information-theoretic necessary and
%sufficuent conditions on the topic mixing weights that guarantee the
%uniqueness of topic recovery.
%
We recall that $\bm\beta$ and $\mathbf{X}$ denote the $W\times K$
topic matrix and the $W\times M$ empirical word counts/frequency
matrix respectively. Also, $M, W$, and $K$ denote, respectively, the
number of documents, the vocabulary size, and the number of topics.
For convenience, we group the document-specific mixing weights, the
$\bm\theta^{m}$'s, into a $K \times M$ weight matrix $\bm\theta =
\left[\bm\theta^{1},\ldots, \bm\theta^{M}\right]$ and the
document-specific distributions, the $\mathbf{A}^{m}$'s, into a $W
\times M$ document distribution matrix $\mathbf{A}=
\left[\mathbf{A}^{1},\ldots, \mathbf{A}^{M}\right]$. The generative
procedure that describes a topic model then implies that $\mathbf{A} =
\bm\beta \bm\theta$. In the ideal case considered in this section ($N
= \infty$), the empirical word {\it frequency} matrix $\mathbf{X} =
\mathbf{A}$.
{\bf Notation:} A vector $\mathbf{a}$ without specification will
denote a column-vector, $\mathbf{1}$ the all-ones column vector of
suitable size, $\mathbf{X}^{i}$ the $i$-th column vector and
$\mathbf{X}_j$ the $j$-th row vector of matrix $\mathbf{X}$, and
$\bar{\mathbf{B}}$ a suitably row-normalized version (described later)
of a nonnegative matrix $\mathbf{B}$. Also, $[n] :=\lbrace 1,\ldots,
n \rbrace$.
\subsection{Key Structural Property: Topic Separability}
\label{sec:subsetction:separability}
We first introduce separability as a key structural property of a
topic matrix $\bm\beta$. Formally,
\begin{definition}
\label{definiton:separability}
{\bf(Separability)}
A topic matrix ${\bm \beta}\in\rR^{W\times K}$ is {separable} if for
each topic $k$, there is some word $i$ such that ${\bm \beta}_{i,k}>0$
and ${\bm\beta}_{i,l}=0$, $\forall ~ l\neq k$.
\end{definition}
Topic separability implies that each topic contains word(s) which
appear only in that topic. We refer to these words as the {\bf novel
  words} of the $K$ topics.
%
%\vglue -3ex
\begin{figure}[!htb]
\centering
\includegraphics[width=3.5in]{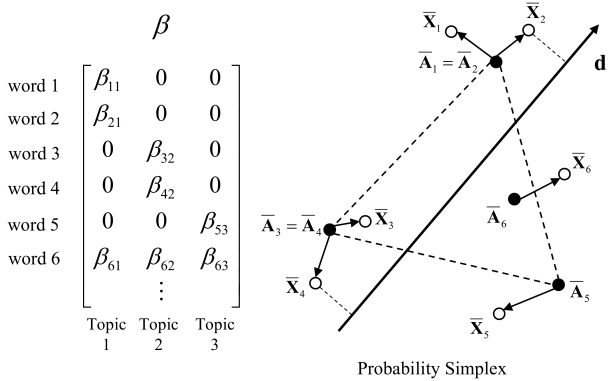}
%
% where an .eps filename suffix will be assumed under latex, 
% and a .pdf suffix will be assumed for pdflatex; or what has been declared
% via \DeclareGraphicsExtensions.
%
\vglue -3ex
\caption{An example of separable topic matrix $\bm\beta$ (left) and
  the underlying geometric structure (right) of the row space of the
  normalized document distribution matrix $\bar{\mathbf{A}}$. Note:
  the word ordering is only for visualization and has no bearing on
  separability. Solid circles represent {\bf rows} of
  $\bar{\mathbf{A}}$. Empty circles represent {\bf rows} of
  $\bar{\mathbf{X}}$ when $N$ is finite (in the ideal case,
  $\bar{\mathbf{A}} = \bar{\mathbf{X}}$). Projections of
  $\bar{\mathbf{A}}_w$'s (resp.~$\bar{\mathbf{X}}_w$'s) along a random
  isotropic direction $\mathbf{d}$ can be used to identify novel
  words.}
\label{fig:separable_and_extreme}
\vglue -3ex
\end{figure}
Figure~\ref{fig:separable_and_extreme} shows an example of a separable
$\bm\beta$ with $K=3$ topics. Words $1$ and $2$ are novel to topic
$1$, words $3$ and $4$ to topic $2$, and word $5$ to topic $3$. Other
words that appear in multiple topics are called non-novel words (e.g.,
word $6$). Identifying the novel words for $K$ distinct topics is the
key step of our proposed approach.

We note that separability has been empirically observed to be
approximately satisfied by topic estimates produced by Variational-Bayes and
MCMC based algorithms \cite{Arora2:ref,Ding14:ref,Awasthi15VB:ref}.
%
%Yet it might appear to be restrictive and is only an assumption of
%convenience.
%
More fundamentally, in very recent work \cite{Ding15HighProb:ref}, it
has been shown that topic separability is an inevitable consequence of
having a relatively small number of topics in a very large vocabulary
(high-dimensionality).
In particular, when the $K$ columns (topics) of $\bm\beta$ are
independently sampled from a Dirichlet distribution (on a
$(W-1)$-dimensional probability simplex), the resulting topic matrix
$\bm\beta$ will be (approximately) separable with probability tending
to $1$ as $W$ scales to infinity sufficiently faster than $K$. A
Dirichlet prior on $\bm\beta$ is widely-used in smoothed settings of
topic modeling \cite{Blei2012Review:ref}.
%\cite{LDA:ref}.
% high probability if the topics are iid sampled from a Dirichlet distribution and $W$ scales sufficiently faster than $K$ \cite{Ding15HighProb:ref}. 
%

As we will discuss next in Sec.~\ref{sec:subsection:geometric-RP}, the
topic separability property combined with additional conditions on the
second-order statistics of the mixing weights leads to an intuitively
appealing geometric property that can be exploited to develop a
provably consistent and efficient topic estimation algorithm.
%
%We note that the separability first occurs in early work on hyperspecral imaging analysis \cite{boardman1993automating:ref}, and has been introduced in \cite{Donhunique:ref} to guarantee the uniqueness of NMF \cite{Gillis2014:ref}.  
%
%%%%%%%%%%%%%%%%%%%%%%%%%%%%%%%%
\subsection{Conditions on the Topic Mixing Weights}
\label{sec:subsection:theta}
\begin{figure*}[!t]
\centering
{
\begin{tabular}{ccccc}
$
\begin{pmatrix}
1&0&0\\
0&1&0\\
0&0&1\\
0&0&1\\
&\ldots &
\end{pmatrix}
$
& 
$ \begin{pmatrix}
\leftarrow & \bm{\theta}_1& \rightarrow \\
\leftarrow & \bm{\theta}_2 & \rightarrow \\
\leftarrow & 0.5\bm{\theta}_1 +0.5\bm{\theta}_2& \rightarrow 
\end{pmatrix} $
&
=
&
$ \begin{pmatrix}
1&0&0\\
0&1&0\\
0&0&1\\
0.5&0.5&0\\
&\ldots &
\end{pmatrix} $
&
$ \begin{pmatrix}
\leftarrow & \bm{\theta}_1& \rightarrow \\
\leftarrow & \bm{\theta}_2 & \rightarrow \\
\leftarrow & 0.5\bm{\theta}_1 +0.5\bm{\theta}_2& \rightarrow 
\end{pmatrix} $ \\

$\bm{\beta}^{(1)} $& $\bm{\theta}$ & & $\bm{\beta}^{(2)} $ & $\bm{\theta}$
\end{tabular}
}
\vglue -0ex
\caption{Example showing that topic separability {\bf alone} does not
  guarantee a unique solution to the problem of estimating $\bm\beta$
  from $\mathbf{X}$.  Here, $\bm \beta_1 \bm \theta = \bm \beta_2 \bm
  \theta = \mathbf{A}$ is a document distribution matrix that is
  consistent with two different topic matrices $\bm\beta^{(1)}$ and
  $\bm\beta^{(2)}$ that are both separable. }
\label{fig:simplicial_counterexample}
\vglue -1ex
\end{figure*}
Topic separability alone does not guarantee that there will be a
unique $\bm\beta$ that is consistent with all the observations
$\mathbf{X}$. This is illustrated in
Fig.~\ref{fig:simplicial_counterexample} \cite{Ding13b:ref}.
Therefore, in an effort to develop provably consistent topic
estimation algorithms, a number of different conditions have been
imposed on the topic mixing weights $\bm\theta$ in the literature
\cite{A12:ref,Arora2:ref,DDP:ref,Ding14:ref,Kumar13:ref}. Complementing
the work in \cite{Ding13b:ref} which identifies necessary and
sufficient conditions for consistent {\it detection} of novel words,
in this paper we identify necessary and sufficient conditions for
consistent {\it estimation} of a separable topic matrix.
Our necessity results are {\it information-theoretic and
  algorithm-independent} in nature, meaning that they are independent
of any specific statistics of the observations and the algorithms
used.
%
%We note that 
The novel words and the topics can only be identified up to a
permutation and this is accounted for in our results.
% this section we first identify the necessary and sufficient
% conditions for $(1)$ novel word discovery and $(2)$ separable topic
% estimation.

Let $\mathbf{a} := \mathbb{E}(\bm\theta^{m})$ and $\mathbf{R} :=
\mathbb{E}(\bm\theta^{m}\bm\theta^{m\top})$ be the $K\times 1$
expectation vector and the $K\times K$ correlation matrix of the
weight prior $\Pr(\bm\alpha)$. Without loss of generality, we can
assume that the elements of $\mathbf{a}$ are strictly positive since
otherwise some topic(s) will not appear in the corpus. A key quantity
is $\bar{\mathbf{R}} :=
\diag(\mathbf{a})^{-1}\mathbf{R}\diag(\mathbf{a})^{-1}$ which may be
viewed as a ``normalized'' second-moment matrix of the weight
vector. The following conditions are central to our results.
% play a central role in our results.
%
%%%%%%%%%%%%%%%%%%%%%%%%%%%%%%%%%
\begin{condition}
\label{def:simplicial} {\bf(Simplicial Condition)}
A matrix $\mathbf{B}$ is (row-wise) $\gamma_s$-simplicial if any
row-vector of $\mathbf{B}$ is at a Euclidean distance of at least
$\gamma_s >0$ from the convex hull of the remaining row-vectors.
A topic model is $\gamma_s$-simplicial if its normalized second-moment
$\bar{{\mathbf R}}$ is $\gamma_s$-simplicial.
\end{condition}
%
%%%%%%%%%%%%%%%%%%%%%%%%%%%%%%%%%
%
\begin{condition}
\label{def:affine-independent}
{\bf(Affine-Independence)} A matrix $\mathbf{B}$ is (row-wise)
$\gamma_a$-affine-independent if
%
%$\Vert {\bm\lambda}^{\mathsf{T}} \mathbf{B} \Vert_2 \geq \gamma_a
%\Vert \bm\lambda \Vert_2 > 0$, 
%
$ \min_{\bm\lambda} \Vert \sum_{k=1}^{K} \lambda_{k} \mathbf{B}_k
\Vert_2 / \Vert \bm\lambda \Vert_2 \geq \gamma_a >0$, where
$\mathbf{B}_k$ is the $k$-th row of $\mathbf{B}$ and the minimum is
over all $\bm\lambda\in\mathbb{R}^{K}$ such that $\bm\lambda \neq
\mathbf{0}$ and $\sum_{k=1}^{K}\lambda_{k} = 0$.
% the only $\lambda_1,\ldots,\lambda_K\in\mathbb{R}$ that satisfying $\sum \lambda_k = 0$ and $\sum \lambda_k \mathbf{x}_k  = \mathbf{0}$ are $\lambda_1= \ldots = \lambda_K = 0$.
%
A topic model is $\gamma_a$-affine-independent if its normalized
second-moment $\bar{\mathbf{R}}$ is $\gamma_a$-affine-independent.
\end{condition}
\noindent Here, $\gamma_s$ and $\gamma_a$ are called the simplicial
and affine-independence constants respectively. They are condition
numbers which measure the degree to which the conditions that they are
respectively associated with hold. The larger that these condition
numbers are, the easier it is to estimate the topic matrix.
Going forward, we will say that a matrix is simplicial (resp.~affine
independent) if it is $\gamma_s$-simplicial
(resp.~$\gamma_a$-affine-independent) for some $\gamma_s > 0$
(resp.~$\gamma_a > 0$).
The simplicial condition was first proposed in \cite{A12:ref} and then
further investigated in \cite{Ding13b:ref}.
% Geometrically, it requires the $K$ rows of $\bar{\mathbf{R}}$ to be extreme points of the convex hull they themselves formed. 
%
This paper is the first to identify affine-independence
%is first identified in this paper 
as both {\it necessary and sufficient} for consistent separable topic
estimation.
Before we discuss their geometric implications, we point out that
affine-independence is stronger than the simplicial condition:
\begin{proposition}
\label{prop:simplicial-vs-affine}
$\bar{\mathbf{R}}$ is $\gamma_a$-affine-independent $\Rightarrow$
$\bar{\mathbf{R}}$ is at least $\gamma_a$-simplicial. The reverse
implication is false in general.
\end{proposition}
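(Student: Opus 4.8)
The plan is to prove the contrapositive for the forward implication: I would show that if $\bar{\mathbf{R}}$ fails to be $\gamma_a$-simplicial for any $\gamma_a > 0$ — i.e.\ some row lies within distance less than $\gamma_a$ of the convex hull of the other rows — then the affine-independence constant must also be small. Concretely, suppose row $\bar{\mathbf{R}}_j$ is at Euclidean distance $\epsilon$ from $\conv\{\bar{\mathbf{R}}_k : k \neq j\}$, so that there exist convex weights $\mu_k \geq 0$ with $\sum_{k\neq j}\mu_k = 1$ and $\Vert \bar{\mathbf{R}}_j - \sum_{k\neq j}\mu_k \bar{\mathbf{R}}_k\Vert_2 = \epsilon$. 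I would then build a candidate $\bm\lambda$ for the affine-independence minimization by setting $\lambda_j = 1$ and $\lambda_k = -\mu_k$ for $k\neq j$. This vector is nonzero and satisfies the constraint $\sum_k \lambda_k = 1 - \sum_{k\neq j}\mu_k = 0$, so it is feasible. Evaluating the objective gives $\Vert\sum_k \lambda_k \bar{\mathbf{R}}_k\Vert_2 = \epsilon$, while $\Vert\bm\lambda\Vert_2 \geq |\lambda_j| = 1$. Hence the affine-independence ratio is at most $\epsilon$, so $\gamma_a \leq \epsilon$. Taking the contrapositive: if $\gamma_a > 0$ is the affine-independence constant, then every row is at distance at least $\gamma_a$ from the convex hull of the rest, which is exactly the statement that $\bar{\mathbf{R}}$ is $\gamma_a$-simplicial. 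This establishes the quantitative claim that affine-independence with constant $\gamma_a$ implies simplicial with a constant at least $\gamma_a$.

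For the second part — that the reverse implication fails in general — I would exhibit a small explicit counterexample, which is the cleanest route. The key intuition is that the simplicial condition only controls the distance of each single row from the convex hull (a \emph{convex}, nonnegative-coefficient combination) of the others, whereas affine-independence controls all \emph{affine} combinations (coefficients summing to zero, allowed to be negative and large). A configuration of rows that are pairwise well-separated and each outside the convex hull of the rest can still be affinely dependent: for instance, three collinear but distinct points in the plane are each at positive distance from the segment joining the other two (hence simplicial), yet they are affinely dependent, forcing $\gamma_a = 0$. I would take $\bar{\mathbf{R}}$ to have rows given by three such collinear points (embedding into the required $K$-dimensional setting by padding), verify the positive simplicial constant by direct distance computation, and verify that the affine-independence objective attains value $0$ at the $\bm\lambda$ encoding the collinearity relation.

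The main obstacle I anticipate is not the forward direction, which is a clean feasibility-and-evaluation argument, but making the counterexample genuinely consistent with the structural constraints the paper places on $\bar{\mathbf{R}}$: recall $\bar{\mathbf{R}} = \diag(\mathbf{a})^{-1}\mathbf{R}\diag(\mathbf{a})^{-1}$ where $\mathbf{R} = \eE(\bm\theta^m \bm\theta^{m\top})$ is a genuine correlation matrix of weights on the simplex. A naive collinear-points construction may not correspond to any valid positive-semidefinite $\mathbf{R}$ arising from an admissible weight prior. I would therefore need to either (i) argue that the rows of $\bar{\mathbf{R}}$ can realize the desired collinear-but-separated geometry under a legitimate choice of prior, or (ii) present the counterexample purely at the level of abstract matrices $\mathbf{B}$ in the Conditions, since both conditions are stated for general matrices $\mathbf{B}$ and only specialized to $\bar{\mathbf{R}}$ afterward. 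Option (ii) is the safer path: since Conditions~1 and~2 are defined for arbitrary matrices, it suffices to exhibit \emph{any} matrix $\mathbf{B}$ that is simplicial but not affine-independent, and the collinear-points matrix serves directly without any PSD or prior-realizability concern.
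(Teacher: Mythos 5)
Your forward direction is correct and is essentially the paper's own argument written in contrapositive form: both proofs use the candidate $\bm\lambda$ with $\lambda_i = 1$ and $\lambda_k = -\mu_k$ for convex weights $\mu_k$, observe that $\sum_k \lambda_k = 0$ and $\Vert\bm\lambda\Vert_2 \geq 1$, and conclude that the distance from $\bar{\mathbf{R}}_i$ to the convex hull of the remaining rows is at least $\gamma_a$. No issue there, and your ``option (ii)'' instinct --- stating the counterexample at the level of an abstract matrix $\mathbf{B}$ rather than verifying realizability as a normalized second moment --- matches what the paper actually does.

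The counterexample for the reverse implication, however, is broken. Three collinear distinct points are never simplicial: whichever of the three lies between the other two sits \emph{on} the segment joining them, so its distance to the convex hull of the remaining rows is exactly zero and the simplicial constant of that configuration is $0$. (Only the two endpoints are at positive distance from the opposite segments.) What you need is a configuration in which every row is an extreme point of the joint convex hull with positive margin, yet the rows are affinely dependent; the minimal such example requires four points, e.g., the vertices of a parallelogram or any convex quadrilateral lying in a two-dimensional affine subspace, where each vertex lies strictly outside the triangle formed by the other three while a relation such as $\mathbf{B}_1 + \mathbf{B}_2 - \mathbf{B}_3 - \mathbf{B}_4 = \mathbf{0}$ witnesses the failure of affine independence. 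This is precisely the paper's $4\times 4$ example, whose rows satisfy the $(1,1,-1,-1)$ dependence while the matrix remains simplicial. Replacing your collinear triple with such a quadrilateral repairs the proof.
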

{\noindent\bf The Simplicial Condition is both Necessary and
  Sufficient for Novel Word Detection:} We first focus on detecting
all the novel words of the $K$ distinct topics. For this task, the
simplicial condition is an algorithm-independent,
information-theoretic necessary condition. Formally,
\begin{lemma}
\label{lem:simplicial-necessary}{(Simplicial Condition is Necessary
  for Novel Word Detection \cite[Lemma~1]{Ding13b:ref})}
Let $\bm\beta$ be separable and $W > K$. If there exists an algorithm
that can consistently identify all novel words of all $K$ topics from
$\mathbf{X}$, then $\bar{\mathbf{R}}$ is simplicial.
\end{lemma}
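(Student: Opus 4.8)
The plan is to prove the contrapositive by an information-theoretic, algorithm-independent indistinguishability argument: assuming $\bar{\mathbf{R}}$ is \emph{not} simplicial, I will exhibit two separable topic models that induce \emph{identical} distributions on the observations $\mathbf{X}$ (for every fixed $N\geq 2$) yet have \emph{different} sets of novel words. Any detection rule, being a function of $\mathbf{X}$ alone, must then return the same answer under both models and hence be inconsistent for at least one of them; consistency therefore forces $\bar{\mathbf{R}}$ to be simplicial.

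The algebraic heart of the argument is to convert non-simpliciality of $\bar{\mathbf{R}}$ into an almost-sure linear relation among the mixing weights. Writing $\tilde{\bm\theta}:=\diag(\mathbf{a})^{-1}\bm\theta^{m}$, one checks that $\bar{\mathbf{R}}=\mathbb{E}[\tilde{\bm\theta}\tilde{\bm\theta}^{\top}]$ and $\mathbb{E}[\tilde\theta_{k}]=1$. If $\bar{\mathbf{R}}$ is not simplicial, some row $\bar{\mathbf{R}}_{k^{*}}$ lies in the convex hull of the others, i.e.\ $\bar{\mathbf{R}}_{k^{*}}=\sum_{l\neq k^{*}}\mu_{l}\bar{\mathbf{R}}_{l}$ with $\mu_{l}\geq 0$, $\sum_{l}\mu_{l}=1$. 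Setting $Z:=\tilde\theta_{k^{*}}-\sum_{l\neq k^{*}}\mu_{l}\tilde\theta_{l}$, this identity says $\mathbb{E}[Z\,\tilde\theta_{j}]=0$ for every $j$, and taking the combination $\mathbb{E}[Z\tilde\theta_{k^{*}}]-\sum_{l}\mu_{l}\mathbb{E}[Z\tilde\theta_{l}]$ yields $\mathbb{E}[Z^{2}]=0$. Hence $Z=0$ almost surely, i.e.\ $\theta_{k^{*}}=\sum_{l\neq k^{*}}\nu_{l}\theta_{l}$ a.s.\ with $\nu_{l}:=a_{k^{*}}\mu_{l}/a_{l}\geq 0$. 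I regard this slick second-moment step as the cleanest part of the proof.

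Next I would exploit this relation geometrically. Almost surely, $\mathbf{A}^{m}=\bm\beta\bm\theta^{m}=\sum_{l\neq k^{*}}(\bm\beta^{l}+\nu_{l}\bm\beta^{k^{*}})\theta_{l}^{m}$, so the document distributions $\mathbf{A}^{m}$—and therefore the law of the word counts $\mathbf{X}^{m}$, which given the document are $N$ i.i.d.\ draws from $\mathbf{A}^{m}$, for \emph{every} fixed $N$—admit a second separable representation in which topic $k^{*}$ has been folded into the remaining topics. Under this alternative model a word $i^{*}$ that was novel to $k^{*}$ (so $\bm\beta_{i^{*}}=\beta_{i^{*},k^{*}}\mathbf{e}_{k^{*}}^{\top}$) acquires the row $\beta_{i^{*},k^{*}}\sum_{l\neq k^{*}}\nu_{l}\mathbf{e}_{l}^{\top}$ and is no longer novel, while the observation law is unchanged. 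This is precisely the non-uniqueness phenomenon illustrated in Fig.~\ref{fig:simplicial_counterexample}. Two statistically identical models with genuinely different novel-word sets establish the impossibility of consistent detection, completing the contrapositive.

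The main obstacle is making the alternative model a \emph{bona fide} $K$-topic separable topic matrix rather than a reduced $(K-1)$-topic one: rerouting the mass $\beta_{i^{*},k^{*}}$ out of column $k^{*}$ disturbs column-stochasticity and may strip topic $k^{*}$ of its only novel word. Here the hypothesis $W>K$ is essential—the surplus words give the slack needed to reinstate a $K$-th topic (supported on a spare word) and to rebalance the columns, if necessary after a compensating adjustment of the prior $\Pr(\bm\alpha)$ that preserves the law of $\mathbf{A}^{m}$. Verifying nonnegativity, column-stochasticity, and separability of both constructed models simultaneously, while keeping their observation laws exactly equal, is the delicate bookkeeping on which the argument ultimately rests.
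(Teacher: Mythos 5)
Your overall strategy and the algebraic core are exactly right and match the paper: non-simpliciality gives $\mathbf{e}^{\top}\bar{\mathbf{R}}\mathbf{e}=0$ for $\mathbf{e}=[-1,\mu_2,\ldots,\mu_K]^{\top}$, which by the quadratic-form identity $\mathbf{e}^{\top}\bar{\mathbf{R}}\mathbf{e}=\eE\Vert\bm\theta^{m\top}\diag(\mathbf{a})^{-1}\mathbf{e}\Vert^2$ forces the almost-sure linear relation $\theta_{k^*}^m=\sum_{l\neq k^*}\nu_l\theta_l^m$ with $\nu_l\geq 0$. The gap is in what you do with that relation. Your construction folds the entire column $\bm\beta^{k^*}$ into the other columns, producing $\bm\beta^{l}+\nu_l\bm\beta^{k^*}$; this yields a $(K-1)$-topic model with non-stochastic columns, and your proposed repairs (reinstating a $K$-th topic on a spare word, adjusting the prior) are precisely the part you leave unverified. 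They do not go through easily: a reinstated topic must have strictly positive expected weight to be a genuine topic in this framework (the paper assumes $\mathbf{a}>0$ entrywise), and changing the prior changes $\bar{\mathbf{R}}$ itself, so you would no longer be exhibiting two indistinguishable models under the \emph{given} non-simplicial prior, which is what the contrapositive requires. The "delicate bookkeeping" you defer is the actual content of the proof.

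The paper avoids all of this by perturbing two \emph{rows} (words) rather than a whole \emph{column} (topic). Define $\mathbf{b}_1:=b[a_1^{-1},0,\ldots,0]$ and $\mathbf{b}_2:=b[(1-\alpha)a_1^{-1},\alpha c_2a_2^{-1},\ldots,\alpha c_Ka_K^{-1}]$ with $0<\alpha<1$; the a.s.\ relation gives $\mathbf{b}_1\bm\theta^m\stackrel{a.s.}{=}\mathbf{b}_2\bm\theta^m$. Put $(\mathbf{b}_1,\mathbf{b}_2)$ as rows $1,2$ of $\bm\beta^{(1)}$ and $(\mathbf{b}_2,\mathbf{b}_1)$ as rows $1,2$ of $\bm\beta^{(2)}$, and fill the remaining $W-2$ rows of both with a common separable block $\mathbf{B}(I_K-\diag(\mathbf{b}_1+\mathbf{b}_2))$, taking $b$ small enough that every entry of $\mathbf{b}_1+\mathbf{b}_2$ is below $1$. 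Both matrices then have $K$ topics, are column-stochastic and separable under the \emph{same} prior, satisfy $\bm\beta^{(1)}\bm\theta\stackrel{a.s.}{=}\bm\beta^{(2)}\bm\theta$ (hence identical laws for $\mathbf{X}$ for every $N$), yet word $1$ is novel in one and non-novel in the other. I would rework your third and fourth paragraphs along these lines; as written, the proof is incomplete at its decisive step.
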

The key insight behind this result is that when $\bar{\mathbf{R}}$ is
non-simplicial, we can construct two distinct separable topic matrices
with different sets of novel words which induce the same distribution
on the empirical observations $\mathbf{X}$.
Geometrically, the simplicial condition guarantees that the $K$ rows
of $\bar{\mathbf{R}}$ will be extreme points of the convex hull that
they themselves form. Therefore, if $\bar{\mathbf{R}}$ is not
simplicial, there will exist at least one redundant topic which is
just a convex combination of the other topics.

It turns out that $\bar{\mathbf{R}}$ being simplicial is also
sufficient for consistent novel word detection. This is a direct
consequence of the consistency guarantees of our approach as outlined
in Theorem~\ref{thm:novel-word-detection}.

{\noindent\bf Affine-Independence is Necessary and Sufficient for
  Separable Topic Estimation:} We now focus on estimating a separable
topic matrix $\bm\beta$, which is a stronger requirement than
detecting novel words. It naturally requires conditions that are
stronger than the simplicial condition. Affine-independence turns out
to be an algorithm-independent, information-theoretic necessary
condition. Formally,
\begin{lemma}
\label{lem:affine-necessary}{(Affine-Independence is Necessary for Separable Topic Estimation)}
Let $\bm\beta$ be separable with $W \geq 2+K$. If there exists an
algorithm that can consistently estimate $\bm\beta$ from $\mathbf{X}$,
then its normalized second-moment $\bar{\mathbf{R}}$ is
affine-independent.
\end{lemma}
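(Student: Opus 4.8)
The plan is to argue by contraposition: assuming $\bar{\mathbf{R}}$ is \emph{not} affine-independent, I will exhibit two genuinely different separable topic models that are statistically indistinguishable from the observations $\mathbf{X}$, so that no estimator can be consistent. Concretely, I first reduce indistinguishability to a second-order statement. In the asymptotic regime of interest ($M\to\infty$ with $N\ge 2$ fixed), the statistic that provably concentrates is the (row-normalized) empirical word co-occurrence matrix, whose population limit is $\mathbf{Q}:=\bm\beta\mathbf{R}\bm\beta^\top$ together with the first moment $\bm\beta\mathbf{a}$. Hence it suffices to construct two valid models $(\bm\beta^{(1)},\mathbf{R}^{(1)},\mathbf{a}^{(1)})$ and $(\bm\beta^{(2)},\mathbf{R}^{(2)},\mathbf{a}^{(2)})$, with $\bm\beta^{(2)}$ not a column permutation of $\bm\beta^{(1)}$, that induce the same $\mathbf{Q}$ and the same first moment. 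This mirrors the construction used for the simplicial necessity in \cite[Lemma~1]{Ding13b:ref}, but now at the level of topic \emph{values} rather than novel-word \emph{sets}.

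Second, I translate the problem into the extreme-point geometry of the paper. After row-normalization, each word-row of $\mathbf{Q}$ can be written as a convex combination $\sum_{k} p_{w,k}\mathbf{v}_k$ of $K$ vertex vectors $\mathbf{v}_k=\bar{\mathbf{R}}_{k,:}\diag(\mathbf{a})\bm\beta^\top$, where $p_{w,k}\propto\beta_{w,k}a_k$ are the barycentric weights and every novel word lands exactly on a vertex $\mathbf{v}_k$. Because $\bm\beta$ is separable (hence of rank $K$) and $\diag(\mathbf{a})\succ 0$, the vertices $\{\mathbf{v}_k\}$ are affinely independent if and only if the rows of $\bar{\mathbf{R}}$ are. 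Consequently, recovering $\bm\beta$ from the observable point cloud is exactly the problem of recovering, for every non-novel word, its barycentric coordinates with respect to the vertices.

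Third, I build the alternative model. Since $\bar{\mathbf{R}}$ is not affine-independent, choose $\bm\lambda\neq\mathbf{0}$ with $\mathbf{1}^\top\bm\lambda=0$ and $\bar{\mathbf{R}}\bm\lambda=\mathbf{0}$; then $\sum_k\lambda_k\mathbf{v}_k=\mathbf{0}$, so the vertices are affinely dependent and the barycentric coordinates are non-unique: the shift $p_{w,\cdot}\mapsto p_{w,\cdot}+t\bm\lambda$ fixes every observed point while preserving the sum-to-one constraint. I implement this ambiguity globally through a change of variables $\bm\beta^{(2)}=\bm\beta^{(1)}\mathbf{G}$, $\mathbf{R}^{(2)}=\mathbf{G}^{-1}\mathbf{R}^{(1)}\mathbf{G}^{-\top}$, $\mathbf{a}^{(2)}=\mathbf{G}^{-1}\mathbf{a}^{(1)}$, with an invertible $\mathbf{G}$ built from $\bm\lambda$ and satisfying $\mathbf{1}^\top\mathbf{G}=\mathbf{1}^\top$; this leaves $\mathbf{Q}$ and $\bm\beta\mathbf{a}$ exactly invariant and makes $\bm\beta^{(2)}$ automatically column-stochastic. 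The hypothesis $W\ge K+2$ guarantees at least two non-novel words whose barycentric coordinates are strictly interior, and this positivity margin is what lets $\mathbf{G}$ be taken nontrivial (not a permutation) while keeping $\bm\beta^{(2)}$ nonnegative and separable.

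The hard part — and where I expect to spend most of the effort — is verifying the simultaneous validity of the second model. I must check that $\mathbf{G}$ can be chosen so that $\bm\beta^{(2)}=\bm\beta^{(1)}\mathbf{G}$ is nonnegative and still separable (i.e. still exhibits one pure row per topic), that the perturbed moments $(\mathbf{a}^{(2)},\mathbf{R}^{(2)})$ remain the mean and correlation of a genuine distribution on the probability simplex $\Delta_{K-1}$ (positive semidefiniteness of $\mathbf{R}^{(2)}-\mathbf{a}^{(2)}\mathbf{a}^{(2)\top}$ together with simplex feasibility), and that $\bm\beta^{(2)}$ is not merely a permutation of $\bm\beta^{(1)}$. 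The affine-dependence direction $\bm\lambda$ is precisely the degree of freedom that makes such a non-permutation $\mathbf{G}$ available — equivalently, the degenerate, lower-dimensional vertex simplex admits a second enclosing vertex set — while the strict interiority furnished by $W\ge K+2$ supplies the slack to keep every constraint satisfied for a sufficiently small perturbation. Establishing that these constraints are mutually satisfiable is the crux; once done, both models yield identical observable statistics, so no algorithm can consistently decide between $\bm\beta^{(1)}$ and $\bm\beta^{(2)}$, contradicting consistent estimability and proving that affine-independence is necessary.
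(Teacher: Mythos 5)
Your high-level strategy (contraposition via two indistinguishable separable models) is the same as the paper's, but the reduction and the construction both have genuine gaps. First, the reduction: you argue it suffices to match the first moment $\bm\beta\mathbf{a}$ and the second-order statistic $\mathbf{Q}=\bm\beta\mathbf{R}\bm\beta^{\top}$ because "the statistic that provably concentrates is the word co-occurrence matrix." That is the statistic \emph{this paper's algorithm} uses, not the only consistent statistic available to an arbitrary algorithm; for $N\geq 3$ the third-order within-document co-occurrence tensor also concentrates (this is exactly what the tensor methods cited in Sec.~II exploit), so two models agreeing only up to second order need not be information-theoretically indistinguishable. The paper avoids this entirely by keeping the prior on $\bm\theta$ fixed and constructing $\bm\beta^{(1)},\bm\beta^{(2)}$ with $\bm\beta^{(1)}\bm\theta \stackrel{a.s.}{=} \bm\beta^{(2)}\bm\theta$: the affine dependence $\bm\lambda^{\top}\bar{\mathbf{R}}\bm\lambda=0$ forces $\bm\theta^{m\top}\diag(\mathbf{a})^{-1}\bm\lambda\stackrel{a.s.}{=}0$, so the two row vectors $\mathbf{b}_1,\mathbf{b}_2$ built from the positive and negative parts of $\bm\lambda$ satisfy $\mathbf{b}_1\bm\theta^m\stackrel{a.s.}{=}\mathbf{b}_2\bm\theta^m$, and the conditional law of $\mathbf{X}$ given $\bm\theta$ — hence the full law for every $N$ — is identical under both models. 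Your change of variables could in principle be upgraded to full distributional equivalence by also pushing $\bm\theta$ through $\mathbf{G}^{-1}$, but you would then owe a proof that the transformed weights remain on the simplex, which you do not supply.

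Second, the construction $\bm\beta^{(2)}=\bm\beta^{(1)}\mathbf{G}$ with a non-monomial $\mathbf{G}$ generically destroys separability: a novel-word row of $\bm\beta^{(1)}$ is proportional to $\mathbf{e}_k^{\top}$ and maps to a multiple of the $k$-th row of $\mathbf{G}$, which has more than one nonzero entry unless $\mathbf{G}$ is a scaled permutation — precisely the case you must exclude. The "interiority slack" from $W\geq K+2$ does not repair this, since separability is a support condition on the novel rows, not a margin condition on the interior ones. You correctly flag the mutual satisfiability of all constraints as "the crux," but that crux is exactly where the argument currently fails, and the paper's resolution is structurally different: it localizes the ambiguity to two word rows (swapping $\mathbf{b}_1$ and $\mathbf{b}_2$, which have disjoint nonempty supports, so the swap is not a column permutation) and reserves the remaining $W-2$ rows for an untouched separable block $\mathbf{B}(I_K-\diag(\mathbf{b}_1+\mathbf{b}_2))$ that certifies separability of both matrices. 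That is also the actual role of the hypothesis $W\geq K+2$: two rows for the ambiguous pair plus at least $K$ rows to host one novel word per topic.
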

Similar to Lemma~\ref{lem:simplicial-necessary}, if $\bar{\mathbf{R}}$
is not affine-independent, we can construct two distinct and separable
topic matrices that induce the same distribution on the observation
which makes consistent topic estimation impossible.
Geometrically, every point in a convex set can be decomposed {\it
  uniquely} as a convex combination of its extreme points, if, and
only if, the extreme points are affine-independent.
Hence, if $\bar{\mathbf{R}}$ is not affine-independent, a non-novel
word can be assigned to different subsets of topics.
%
% decomposing a point in a convex hull as convex combination of its extreme points is not unique unless the extreme points are affine-independent. 
%
%The proof of this result appears in the appendix.
%

The sufficiency of the affine-independence condition in separable
topic estimation is again a direct consequence of the consistency
guarantees of our approach as in
Theorems~\ref{thm:novel-word-detection}
and~\ref{thm:topic-estimation}.  We note that since
affine-independence implies the simplicial condition
(Proposition~\ref{prop:simplicial-vs-affine}), affine-independence is
sufficient for novel word detection as well. 

{\noindent\bf Connection to Other Conditions on the Mixing Weights:}
We briefly discuss other conditions on the mixing weights $\bm\theta$
that have been exploited in the literature. In
\cite{Arora2:ref,Kumar13:ref}, $\mathbf{R}$ (equivalently
$\bar{\mathbf{R}}$) is assumed to have full-rank (with minimum
eigenvalue $\gamma_r>0$). In \cite{DDP:ref}, $\bar{\mathbf{R}}$ is
assumed to be diagonal-dominant, i.e., $\forall i,j, i\neq j,
\bar{\mathbf{R}}_{i,i} - \bar{\mathbf{R}}_{i,j} \geq \gamma_d
>0$. They are both sufficient conditions for detecting all the novel
words of all distinct topics. The constants $\gamma_r$ and $\gamma_d$
are condition numbers which measure the degree to which the full-rank
and diagonal-dominance conditions hold respectively. They are
counterparts of $\gamma_s$ and $\gamma_a$ and like them, the larger
they are, the easier it is to consistently detect the novel words and
estimate $\bm\beta$.
The relationships between these conditions are summarized in
Proposition~\ref{prop:all-conditions} and illustrated in
Fig.~\ref{fig:conditions_theta}.
\begin{figure}[!htb]
\centering
\includegraphics[width=2.5in]{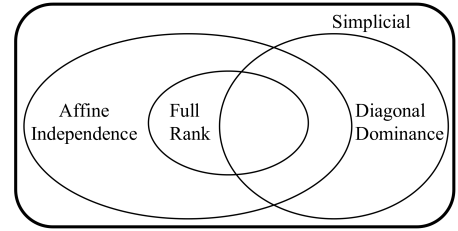}
%
% where an .eps filename suffix will be assumed under latex, 
% and a .pdf suffix will be assumed for pdflatex; or what has been declared
% via \DeclareGraphicsExtensions.
%
\vglue -1ex
\caption{Relationships between Simplicial, Affine-Independence, Full
  Rank, and Diagonal Dominance conditions on the normalized
  second-moment $\bar{\mathbf{R}}$.}
\label{fig:conditions_theta}
\vglue -3ex
\end{figure}
%
%
%%%%%%%%%%%%%%%%%%%%%%%%%%%%%%
\begin{proposition}
\label{prop:all-conditions}
Let $\bar{\mathbf{R}}$ be the normalized second-moment of the topic
prior. Then,
\begin{enumerate}
\item $\bar{\mathbf{R}}$ is full rank with minimum eigenvalue
  $\gamma_r$ $\Rightarrow$ $\bar{\mathbf{R}}$ is at least
  $\gamma_r$-affine-independent $\Rightarrow$ $\bar{\mathbf{R}}$ is at
  least $\gamma_r$-simplicial.
\item $\bar{\mathbf{R}}$ is $\gamma_d$-\-diagonal-\-dominant
  $\Rightarrow$ $\bar{\mathbf{R}}$ is at least $\gamma_d$-simplicial.
\item $\bar{\mathbf{R}}$ being diagonal-dominant neither implies nor
  is implied by $\bar{\mathbf{R}}$ being affine-independent (or
  full-rank).
\end{enumerate}
\end{proposition}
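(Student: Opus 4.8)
The plan is to prove the three parts separately, reusing Proposition~\ref{prop:simplicial-vs-affine} wherever possible.

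\emph{Part 1.} First I would observe that $\bar{\mathbf{R}} = \diag(\mathbf{a})^{-1}\mathbf{R}\diag(\mathbf{a})^{-1}$ is symmetric and positive semidefinite, being a congruence transform of the second-moment matrix $\mathbf{R} = \mathbb{E}(\bm\theta^m\bm\theta^{m\top})$; hence its singular values coincide with its eigenvalues, and full rank with minimum eigenvalue $\gamma_r$ means $\sigma_{\min}(\bar{\mathbf{R}}) = \gamma_r > 0$. Writing the combination of rows as a vector, symmetry gives $\|\sum_k \lambda_k\bar{\mathbf{R}}_k\|_2 = \|\bar{\mathbf{R}}\bm\lambda\|_2$, so the affine-independence constant is $\gamma_a = \min\{\|\bar{\mathbf{R}}\bm\lambda\|_2/\|\bm\lambda\|_2 : \bm\lambda\neq\mathbf{0},\ \mathbf{1}^\top\bm\lambda = 0\}$. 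Since this minimizes a Rayleigh-type quotient over a strict subset of the nonzero vectors, $\gamma_a \geq \min_{\bm\lambda\neq\mathbf{0}}\|\bar{\mathbf{R}}\bm\lambda\|_2/\|\bm\lambda\|_2 = \sigma_{\min}(\bar{\mathbf{R}}) = \gamma_r$, which yields the first implication (at least $\gamma_r$-affine-independent). The second implication, $\gamma_a$-affine-independent $\Rightarrow$ at least $\gamma_a$-simplicial $\Rightarrow$ at least $\gamma_r$-simplicial, is immediate from Proposition~\ref{prop:simplicial-vs-affine} together with $\gamma_a\geq\gamma_r$.

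\emph{Part 2.} For the diagonal-dominance implication I would exhibit a separating coordinate for each row. Fix a row index $i$ and any point $\mathbf{p} = \sum_{j\neq i}\mu_j\bar{\mathbf{R}}_j$ in the convex hull of the remaining rows ($\mu_j\geq0$, $\sum_{j\neq i}\mu_j = 1$). Projecting the difference onto the $i$-th coordinate axis $\mathbf{e}_i$ and using symmetry ($\bar{\mathbf{R}}_{j,i} = \bar{\mathbf{R}}_{i,j}$), $\mathbf{e}_i^\top(\bar{\mathbf{R}}_i - \mathbf{p}) = \sum_{j\neq i}\mu_j(\bar{\mathbf{R}}_{i,i} - \bar{\mathbf{R}}_{i,j}) \geq \gamma_d$ by the diagonal-dominance hypothesis and $\sum_{j\neq i}\mu_j = 1$. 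Hence $\|\bar{\mathbf{R}}_i - \mathbf{p}\|_2 \geq |\mathbf{e}_i^\top(\bar{\mathbf{R}}_i - \mathbf{p})| \geq \gamma_d$ for every such $\mathbf{p}$, so the distance from row $i$ to the convex hull of the others is at least $\gamma_d$; as $i$ was arbitrary, $\bar{\mathbf{R}}$ is $\gamma_d$-simplicial.

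\emph{Part 3.} Here I would construct two explicit priors. For affine-independent (indeed full-rank) but not diagonal-dominant, a $K=3$ discrete prior suffices: place most of the mass on two states in which topics $1$ and $2$ rise and fall together but with topic $2$ more variable, which forces $\bar{\mathbf{R}}_{1,2} > \bar{\mathbf{R}}_{1,1}$ and breaks diagonal dominance, while a third interior state with small probability makes the three support points affinely independent and hence $\bar{\mathbf{R}}$ full rank. For diagonal-dominant but not affine-independent I would work with the mean-zero fluctuations $g_k := \theta_k/a_k - 1$, for which $\bar{\mathbf{R}}$ is singular exactly when the $g_k$ are linearly dependent and the diagonal-dominance gap equals $\mathrm{Var}(g_i) - \mathrm{Cov}(g_i,g_j)$. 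Taking $g_k = \mathbf{v}_k^\top\mathbf{z}$ with $\mathbf{z}$ isotropic and the $\mathbf{v}_k$ chosen as four \emph{equal-length} vectors in $\mathbb{R}^2$ summing to zero (e.g.\ the vertices of a square), every gap becomes $\rho^2(1 - \cos\theta_{ij}) > 0$ while the fluctuations live in a $2$-dimensional space, so $\bar{\mathbf{R}}$ is rank-deficient; a suitably small scale keeps $\theta_k = a_k(1+g_k) \geq 0$, giving a valid $K=4$ prior.

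\emph{Main obstacle.} The delicate point throughout Part 3 is realizability: each counterexample must be a genuine normalized second-moment, i.e.\ symmetric, entrywise nonnegative, positive semidefinite, and obeying the simplex constraint $\bar{\mathbf{R}}\mathbf{a} = \mathbf{1}$. I expect the hardest case to be the diagonal-dominant-but-singular example, since singularity forces the support into a low-dimensional affine subspace of the simplex; for $K=3$ this collapses the support to a line, where some coordinate pair is perfectly positively correlated and necessarily violates diagonal dominance, so this direction genuinely requires $K\geq4$ and the equal-norm construction above.
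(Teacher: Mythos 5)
Your Parts 1 and 2 follow the paper's own argument essentially verbatim: the restricted Rayleigh-quotient bound $\gamma_a \geq \sigma_{\min}(\bar{\mathbf{R}}) = \gamma_r$ followed by Proposition~\ref{prop:simplicial-vs-affine}, and the single-coordinate projection $\Vert \bar{\mathbf{R}}_i - \mathbf{p}\Vert_2 \geq \vert \bar{\mathbf{R}}_{i,i} - \sum_{j\neq i}\mu_j \bar{\mathbf{R}}_{i,j}\vert \geq \gamma_d$ (the paper projects row $1$ onto coordinate $1$; your use of coordinate $i$ plus symmetry is the same computation). Part 3 is where you genuinely diverge, and in a useful direction. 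The paper disposes of the two non-implications by writing down two explicit matrices (a $3\times 3$ affine-independent but non-diagonal-dominant one, and the $4\times 4$ matrix annihilated by the affine relation $(1,1,-1,-1)$) and never checks that these arise from an actual prior; in fact neither admits a positive probability vector $\mathbf{a}$ with $\bar{\mathbf{R}}\mathbf{a}=\mathbf{1}$, so the paper's counterexamples live in the space of abstract matrices satisfying the two Conditions rather than in the class named in the proposition's hypothesis. Your route builds genuine priors: the identity $\bar{\mathbf{R}}_{i,j} = 1 + \mathbb{E}[g_i g_j]$ with $g_k = \theta_k/a_k - 1$ is correct, the square construction yields the affine relation $g_1 - g_2 + g_3 - g_4 = 0$ while keeping every gap $\mathrm{Var}(g_i)-\mathrm{Cov}(g_i,g_j) = \sigma^2\rho^2(1-\cos\theta_{ij})$ strictly positive, and the constraints you flag ($\sum_k a_k \mathbf{v}_k = \mathbf{0}$ for the simplex identity, small $\rho$ for nonnegativity) are exactly the ones that need checking and do check out. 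This buys a strictly stronger conclusion (realizability) at the cost of more work; the only thing still owed is that your first counterexample remains a recipe --- to be complete you should fix the two or three support points explicitly and verify $\mathbb{E}[g_1 g_2] > \mathbb{E}[g_1^2]$, which your $g_2 = c\,g_1$ with $c>1$ device does deliver. Your side remark that the diagonal-dominant-but-affine-dependent direction forces $K\geq 4$ is correct and does not appear in the paper.
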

We note that in our earlier work \cite{Ding14:ref}, the provable
guarantees for estimating the separable topic matrix require
$\bar{\mathbf{R}}$ to have full rank. The analysis in this paper
provably extends the guarantees to the affine-independence condition.
%
%
%%%%%%%%%%%%%%%%%%%%%%%%%%%%
\subsection{Geometric Implications and Random Projections Based Algorithm}
\label{sec:subsection:geometric-RP}
We now demonstrate the geometric implications of topic separability 
%on $\bm\beta$ 
combined with the simplicial/ affine-independence condition on the
topic mixing weights. To highlight the key ideas we focus on the ideal
case where $N = \infty$. Then, the empirical document word-frequency
matrix $\mathbf{X} = \mathbf{A} = \bm\beta \bm\theta$.
%we first consider the ideal case where $N\rightarrow \infty$. In this
%limit, by suitably normalizing columns of $\mathbf{X}$ to convert them
%to empirical document word-frequencies instead of counts, it converge
%to the ideal document distributions $\mathbf{A} = \bm\beta \bm\theta$.

{\noindent\bf Novel Words are Extreme Points:} 
To expose the underlying geometry, we normalize the rows of
$\mathbf{A}$ and $\bm\theta$ to obtain row-stochastic matrices
$\bar{\mathbf{A}} := \diag(\mathbf{A}\mathbf{1})^{-1}\mathbf{A}$ and
$\bar{\bm\theta} := \diag(\bm\theta\mathbf{1})^{-1}\bm\theta$. Then
since $\mathbf{A} = \bm\beta \bm\theta$, we have $\bar{\mathbf{A}} =
\bar{\bm\beta} \bar{\mathbf{\theta}}$ where $\bar{\bm\beta} :=
\diag(\mathbf{A}\mathbf{1})^{-1}\bm\beta \diag(\bm\theta\mathbf{1})$
is a row-normalized ``topic matrix'' which is both row-stochastic and
separable with the same sets of novel words as $\bm\beta$.

Now consider the row vectors of $\bar{\mathbf{A}}$ and
$\bar{\bm\theta}$.
%
% First, it can be shown that if $\bar{{\mathbf{R}}}$ is simplicial
%(cf.~Condition~\ref{def:simplicial}) then, with probability one, no
%row of $\bar{\bm\theta}$ will be in the convex hull of the others
%\cite{Ding13b:ref}.
{\color{edit}First, it can be shown that if $\bar{{\mathbf{R}}}$ is simplicial
(cf.~Condition~\ref{def:simplicial}) then, with high probability, no
row of $\bar{\bm\theta}$ will be in the convex hull of the others (see Appendix~\ref{sec:append:thetasimplicial}).}
Next, the separability property ensures that if $w$ is a novel word of
topic $k$, then $\bar{\beta}_{wk}=1$ and $\bar{\beta}_{wj}=0\ \forall
j\neq k$ so that $\bar{\mathbf{A}}_{w} = \bar{\bm\theta}_k$.
Revisiting the example in Fig.~\ref{fig:separable_and_extreme}, the
rows of $\bar{\mathbf{A}}$ which correspond to novel words, e.g.,
words $1$ through $5$, are all row-vectors of $\bar{\bm\theta}$ and
together form a convex hull of $K$ extreme points. For example,
$\bar{\mathbf{A}}_{1} = \bar{\mathbf{A}}_{2} = \bar{\bm\theta}_1$ and
$\bar{\mathbf{A}}_{3} = \bar{\mathbf{A}}_{4} = \bar{\bm\theta}_2$.
If, however, $w$ is a non-novel word, then $\bar{\mathbf{A}}_w =
\sum_{k}\bar{\beta}_{wk}\bar{\bm\theta}_{k}$ lives inside the convex
hull of the rows of $\bar{\bm\theta}$.
In Fig.~\ref{fig:separable_and_extreme}, row $\bar{\mathbf{A}}_{6}$
which corresponds to non-novel word $6$, is inside the convex hull of
$\bar{\bm\theta}_1, \bar{\bm\theta}_2, \bar{\bm\theta}_3$.
In summary, the novel words can be detected as extreme points of all
the row-vectors of $\bar{\mathbf{A}}$. Also, multiple novel words of
the same topic correspond to the same extreme point (e.g.,
$\bar{\mathbf{A}}_1 = \bar{\mathbf{A}}_2 =
\bar{\bm\theta}_1$). Formally,
%%
%\begin{lemma}
%%
%\label{lem:extreme-A}
%%
%{(Novel Words are Extreme Points)} Let $\bar{\mathbf{R}}$ be
%simplicial and ${\bm \beta}$ be separable. Then, a word $i$ is novel
%if, and only if, the $i$-th row of $\bar{\mathbf{A}}$ is an extreme
%point of the convex hull spanned by all the rows of
%$\bar{\mathbf{A}}$.
%%
%\end{lemma}
%%
%
{\color{edit}
\begin{lemma}
\label{lem:extreme-A}
Let $\bar{\mathbf{R}}$ be $\gamma_s$ simplicial and $\bm\beta$ be separable. Then, with probability at least $1 - 2K\exp(-c_1 M) -\exp(-c_2 M)$, 
% a word $i$ is novel if, and only if, the $i$-th row of $\bar{\mathbf{A}}$ is an extreme point of the convex hull spanned by all the rows of $\bar{\mathbf{A}}$. 
the $i$-th row of $\bar{\mathbf{A}}$ is an extreme point of the convex hull spanned by all the rows of $\bar{\mathbf{A}}$ if, and only if, word $i$ is novel. 
Here the constant $c_1 := \gamma_s^2 a_{\min}^{4}/4\lambda_{\max}$ and $c_2 := \gamma_s^{4} a_{\min}^{4}/2\lambda_{\max}^{2}$. The model parameters are defined as follows. $a_{\min}$ is the minimum element of $\mathbf{a}$. $\lambda_{\max}$ is the maximum singular-value of $\bar{\mathbf{R}}$. 
\end{lemma}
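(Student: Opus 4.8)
The plan is to condition on the high-probability event that the $K$ rows of $\bar{\bm\theta}$ are in ``simplicial position'' (no row $\bar{\bm\theta}_k$ lies in $\conv(\{\bar{\bm\theta}_j : j\neq k\})$) and then derive the ``extreme-point $\Leftrightarrow$ novel'' equivalence as a purely deterministic consequence of the factorization $\bar{\mathbf{A}} = \bar{\bm\beta}\,\bar{\bm\theta}$. This event is exactly the one established in Appendix~\ref{sec:append:thetasimplicial}, whose probability I take to be $1 - 2K\exp(-c_1 M) - \exp(-c_2 M)$ with the stated constants. I first isolate the deterministic geometry and then sketch how those constants arise.

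\textbf{Geometric core.} Recall $\bar{\bm\beta}$ is row-stochastic, separable, and shares the novel-word sets of $\bm\beta$, so every row of $\bar{\mathbf{A}}$ is a convex combination of the rows of $\bar{\bm\theta}$; hence $\conv(\{\bar{\mathbf{A}}_i\}) \subseteq \conv(\{\bar{\bm\theta}_k\})$. For the forward direction, if word $i$ is novel to topic $k$ then $\bar{\beta}_{ik}=1$ and $\bar{\mathbf{A}}_i = \bar{\bm\theta}_k$; since separability furnishes such a word for every $k$, the two hulls in fact coincide, and on the simplicial-position event each $\bar{\bm\theta}_k$ is a vertex of $\conv(\{\bar{\bm\theta}_k\})$, hence an extreme point of $\conv(\{\bar{\mathbf{A}}_i\})$. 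For the converse, a non-novel word $i$ has $\bar{\beta}_{ik}>0$ for at least two indices $k$, so $\bar{\mathbf{A}}_i = \sum_k \bar{\beta}_{ik}\bar{\bm\theta}_k$ is a nontrivial convex combination of at least two distinct vertices (distinctness itself follows from simplicial position), and therefore cannot be extreme. Together these give the stated equivalence on the event.

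\textbf{Probabilistic reduction.} To quantify the event I would use the identity that the Gram matrix of the rescaled rows $\{\sqrt{M}\,\bar{\bm\theta}_k\}$ equals the empirical normalized second moment $\hat{\bar{\mathbf{R}}} := \diag(\hat{\mathbf{a}})^{-1}\hat{\mathbf{R}}\diag(\hat{\mathbf{a}})^{-1}$, where $\hat{\mathbf{a}} = \tfrac{1}{M}\bm\theta\mathbf{1}$ and $\hat{\mathbf{R}} = \tfrac{1}{M}\bm\theta\bm\theta^{\top}$. The squared distance from $\bar{\bm\theta}_k$ to the hull of the others, scaled by $M$, is then the constrained quadratic form $\min_{\mathbf{w}} \mathbf{w}^{\top} \hat{\bar{\mathbf{R}}}\, \mathbf{w}$ over $\mathbf{w}$ with $w_k=1$, $w_j \le 0\ (j\neq k)$ and $\sum_i w_i = 0$. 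The population simplicial condition states that $\mathbf{w}^{\top}\bar{\mathbf{R}}^2\mathbf{w} = \Vert\bar{\mathbf{R}}\mathbf{w}\Vert_2^2 \ge \gamma_s^2$ on the same feasible set; combined with $\bar{\mathbf{R}}^2 \preceq \lambda_{\max}\bar{\mathbf{R}}$ this yields $\mathbf{w}^{\top}\bar{\mathbf{R}}\mathbf{w} \ge \gamma_s^2/\lambda_{\max}$, and since $\Vert\mathbf{w}\Vert_2^2 \le 2$ on the feasible set, $\mathbf{w}^{\top}\hat{\bar{\mathbf{R}}}\mathbf{w} \ge \gamma_s^2/\lambda_{\max} - 2\Vert\hat{\bar{\mathbf{R}}} - \bar{\mathbf{R}}\Vert$ stays strictly positive (uniformly in $k$) once $\Vert\hat{\bar{\mathbf{R}}}-\bar{\mathbf{R}}\Vert \le \gamma_s^2/(4\lambda_{\max})$. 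This reduces the whole event to a single operator-norm concentration bound.

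\textbf{Main obstacle.} The delicate step, and the source of the two exponential terms, is controlling $\Vert\hat{\bar{\mathbf{R}}}-\bar{\mathbf{R}}\Vert$ through the \emph{empirical} normalization $\diag(\hat{\mathbf{a}})^{-1}$. Because each $\bm\theta^m$ is supported on the simplex, the summands are bounded: a coordinatewise Hoeffding bound on $\hat{\mathbf{a}}$ with a union over the $K$ topics — yielding the $2K\exp(-c_1 M)$ term — simultaneously keeps every $\hat{a}_k$ bounded away from $0$ (so the inverse diagonal is controlled via $a_{\min}$) and bounds the normalization error, while a matrix Bernstein/Hoeffding-type bound on the bounded i.i.d.\ sum $\hat{\mathbf{R}}$ supplies the operator-norm control responsible for the $\exp(-c_2 M)$ term. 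The real work is propagating these two deviations through the $\diag(\mathbf{a})^{-1}(\cdot)\diag(\mathbf{a})^{-1}$ sandwich, where each factor amplifies errors by $1/a_{\min}$, and then matching the target accuracy $\gamma_s^2/(4\lambda_{\max})$ to recover precisely $c_1 = \gamma_s^2 a_{\min}^4/(4\lambda_{\max})$ and $c_2 = \gamma_s^4 a_{\min}^4/(2\lambda_{\max}^2)$; I expect the bulk of the effort to lie in this bookkeeping rather than in the geometric equivalence.
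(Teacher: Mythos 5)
Your geometric core coincides with the paper's: condition on the event that no row of $\bar{\bm\theta}$ lies in the convex hull of the others, observe that $\bar{\mathbf{A}}_i=\bar{\bm\theta}_k$ when $i$ is novel for topic $k$ and is a nontrivial convex combination of at least two distinct rows of $\bar{\bm\theta}$ otherwise, and read off the equivalence. Where you genuinely diverge is the probabilistic step. The paper never forms the empirical Gram matrix $\hat{\bar{\mathbf{R}}}$: for a fixed feasible direction $\mathbf{c}$ (with $c_k=1$, $c_j\le 0$ for $j\ne k$, $\sum_j c_j=0$) it decomposes $M\mathbf{c}^{\top}\bar{\bm\theta}^m$ into $\mathbf{c}^{\top}\diag(\mathbf{a})^{-1}\bm\theta^m$ plus the normalization error $\mathbf{c}^{\top}\bigl(\diag(\hat{\mathbf{a}})^{-1}-\diag(\mathbf{a})^{-1}\bigr)\bm\theta^m$, kills the error term with a coordinatewise Hoeffding bound on $\hat{\mathbf{a}}$ (this is indeed the source of the $2K\exp(-c_1M)$ term, as you surmised), and then applies a \emph{scalar} Hoeffding bound to the empirical average of $x_m=\vert \mathbf{c}^{\top}\diag(\mathbf{a})^{-1}\bm\theta^m\vert^2\le a_{\min}^{-2}$, whose expectation is $\mathbf{c}^{\top}\bar{\mathbf{R}}\mathbf{c}\ge\gamma_s^2/\lambda_{\max}$, to conclude $\max_m x_m\ge\gamma_s^2/2\lambda_{\max}$; that single scalar bound, with no union over directions or coordinates, is what produces the dimension-free $\exp(-c_2M)$ term. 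Your route --- writing the scaled squared distance to the hull as $\min_{\mathbf{w}}\mathbf{w}^{\top}\hat{\bar{\mathbf{R}}}\mathbf{w}$, lower-bounding the population form by $\gamma_s^2/\lambda_{\max}$ via $\bar{\mathbf{R}}^2\preceq\lambda_{\max}\bar{\mathbf{R}}$ (the same inequality the paper uses through $\bar{\mathbf{R}}\bar{\mathbf{R}}^{\dagger}\bar{\mathbf{R}}$), and perturbing by $2\Vert\hat{\bar{\mathbf{R}}}-\bar{\mathbf{R}}\Vert$ --- is sound, and in one respect stronger: it is automatically uniform over all feasible directions, whereas the paper's concentration step is stated for a fixed $\mathbf{c}$. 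What uniformity buys you it costs you in constants: any operator-norm concentration for the $K\times K$ matrix $\hat{\bar{\mathbf{R}}}$ (entrywise union bound plus $\Vert\cdot\Vert\le K\Vert\cdot\Vert_{\infty}$, or matrix Bernstein) introduces a $K$- or $K^2$-dependent prefactor and typically $K$-dependence inside the exponent, so your reduction will not reproduce the bare $\exp(-c_2M)$ with $c_2=\gamma_s^4a_{\min}^4/2\lambda_{\max}^2$ exactly as the lemma states. The qualitative conclusion and the exponential-in-$M$ rate survive under your approach, but the precise constants in the statement are artifacts of the paper's per-direction scalar argument rather than of an operator-norm reduction, so if you pursue your route you should expect to prove the lemma with modified (weaker) constants.
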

}
To see how identifying novel words can help us estimate $\bm\beta$,
recall that the row-vectors of $\bar{\mathbf{A}}$ corresponding to
novel words coincide with the rows of $\bar{\bm\theta}$. Thus
$\bar{\bm\theta}$ is known once one novel word for each topic is
known. Also, for all words $w$, $\bar{\mathbf{A}}_w =
\sum_{k}\bar{\beta}_{wk}\bar{\bm\theta}_{k}$.  Thus, if we can {\it
  uniquely} decompose $\bar{\mathbf{A}}_w$ as a convex combination of
the extreme points, then the coefficients of the decomposition will
give us
%$\bar{\bm\beta}_{w}$, 
the $w$-th row of $\bar{\bm\beta}$. A unique decomposition exists 
%with certainty
{\color{edit}with high probability} 
when $\bar{\mathbf{R}}$ is affine-independent and can be
found by solving a constrained linear regression problem. This gives
us $\bar{\bm\beta}$. Finally, noting that
$\diag(\mathbf{A}\mathbf{1})\bar{\bm\beta} = \bm\beta
\diag(\bm\theta\mathbf{1})$, $\bm\beta$ can be recovered by suitably
renormalizing rows and then columns of $\bar{\bm\beta}$. To sum up,
%%
%\begin{lemma}
%%
%\label{lem:topic-regression}
%%
%Let $\mathbf{A}$ and one novel word for each distinct topic be
%given. If $\bar{\mathbf{R}}$ is affine-independent, then $\bm\beta$
%can be recovered uniquely via constrained linear regression.
%%
%\end{lemma}
%%
{\color{edit}
\begin{lemma}
%
%Let $\mathbf{A}$ and one novel words for each distinct topic be given. 
%
Let $\mathbf{A}$ and one novel word per distinct topic be given.
If $\bar{\mathbf{R}}$ is $\gamma_a$ affine-independent, then, with probability at least $1 - 2K\exp(-c_1 M) -\exp(-c_2 M)$, $\bm\beta$ can be recovered uniquely via constrained linear regression. Here the constant $c_1 := \gamma_a^2 a_{\min}^{4}/4\lambda_{\max}$ and $c_2 := \gamma_a^{4} a_{\min}^{4}/2\lambda_{\max}^{2}$. The model parameters are defined as follows. $a_{\min}$ is the minimum element of $\mathbf{a}$. $\lambda_{\max}$ is the maximum singular-value of $\bar{\mathbf{R}}$.
\label{lem:topic-regression}
\end{lemma}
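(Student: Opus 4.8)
The plan is to reduce unique recovery of $\bm\beta$ to the affine-independence of the \emph{realized} rows of $\bar{\bm\theta}$, and then to transfer this property from the population matrix $\bar{\mathbf{R}}$ by a concentration argument. First I would set up the regression. Given one novel word $w_k$ for each topic $k$, the separability identity $\bar{\mathbf{A}}_{w_k}=\bar{\bm\theta}_k$ hands us all $K$ rows of $\bar{\bm\theta}$ directly as rows of the observed $\bar{\mathbf{A}}$. For every word $w$ I then solve the constrained linear regression
\[
\hat{\bm\lambda}_w \;=\; \argmin_{\bm\lambda\ge 0,\ \mathbf{1}^\top\bm\lambda=1}\ \left\| \bar{\mathbf{A}}_w-\sum_{k}\lambda_k\,\bar{\bm\theta}_k\right\|_2 .
\]
Because $\bar{\mathbf{A}}_w=\sum_k\bar\beta_{wk}\bar{\bm\theta}_k$ with $\bar\beta_{w\cdot}$ a valid convex-weight vector, the objective attains the value $0$ at $\bm\lambda=\bar\beta_{w\cdot}$; the entire content of the lemma is therefore the \emph{uniqueness} of this minimizer.

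For the uniqueness step, if $\bm\lambda$ and $\bm\lambda'$ are two feasible minimizers then $\sum_k(\lambda_k-\lambda'_k)\bar{\bm\theta}_k=\mathbf{0}$ with $\sum_k(\lambda_k-\lambda'_k)=0$, so uniqueness for every $w$ is equivalent to the $K$ points $\{\bar{\bm\theta}_k\}_{k=1}^{K}\subset\rR^{M}$ being affine-independent in the sense of Condition~\ref{def:affine-independent}. Writing $\|\sum_k\lambda_k\bar{\bm\theta}_k\|_2^2=\bm\lambda^\top(\bar{\bm\theta}\bar{\bm\theta}^\top)\bm\lambda$, it suffices to show that the $K\times K$ Gram matrix $\bar{\bm\theta}\bar{\bm\theta}^\top$ is strictly positive definite on the hyperplane $\{\mathbf{1}^\top\bm\lambda=0\}$.

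The crux is the concentration step, where I exploit that this Gram matrix is exactly an empirical normalized second moment. With $\hat{\mathbf{a}}:=\tfrac1M\bm\theta\mathbf{1}$ and $\hat{\mathbf{R}}:=\tfrac1M\bm\theta\bm\theta^\top$ one has $M\,\bar{\bm\theta}\bar{\bm\theta}^\top=\diag(\hat{\mathbf{a}})^{-1}\hat{\mathbf{R}}\diag(\hat{\mathbf{a}})^{-1}=:\hat{\bar{\mathbf{R}}}$, the empirical counterpart of $\bar{\mathbf{R}}=\diag(\mathbf{a})^{-1}\mathbf{R}\diag(\mathbf{a})^{-1}$. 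Since the $\bm\theta^m$ are i.i.d.\ with entries in $[0,1]$, Hoeffding's inequality applied to each of the $K$ coordinates of $\hat{\mathbf{a}}$ (two-sided and union-bounded) produces the $2K\exp(-c_1M)$ term, while a matrix Bernstein/Hoeffding bound on $\hat{\mathbf{R}}-\mathbf{R}$ produces the $\exp(-c_2M)$ term; the factors $a_{\min}$ and $\lambda_{\max}$ enter because $\diag(\hat{\mathbf{a}})^{-1}$ amplifies deviations by $a_{\min}^{-2}$ and $\lambda_{\max}$ governs the variance proxy. On the resulting good event $\|\hat{\bar{\mathbf{R}}}-\bar{\mathbf{R}}\|$ is kept within the margin afforded by $\gamma_a$, so that for all $\bm\lambda$ with $\mathbf{1}^\top\bm\lambda=0$ the $\gamma_a$-affine-independence of $\bar{\mathbf{R}}$ gives $\bm\lambda^\top\hat{\bar{\mathbf{R}}}\bm\lambda\ge(\gamma_a^2-\|\hat{\bar{\mathbf{R}}}-\bar{\mathbf{R}}\|)\|\bm\lambda\|_2^2>0$. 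This makes $\bar{\bm\theta}\bar{\bm\theta}^\top$ positive definite on the hyperplane, hence each $\hat{\bm\lambda}_w=\bar\beta_{w\cdot}$ is the unique minimizer and $\bar{\bm\beta}$ is recovered exactly.

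Finally, to recover $\bm\beta$ itself I use $\diag(\mathbf{A}\mathbf{1})\bar{\bm\beta}=\bm\beta\diag(\bm\theta\mathbf{1})$: rescale the rows of $\bar{\bm\beta}$ by $\diag(\mathbf{A}\mathbf{1})$ and then renormalize each column to sum to one (each column of $\bm\beta$ being a topic distribution), which determines $\bm\beta$ uniquely up to the unavoidable topic permutation. I expect the main obstacle to be the concentration step: calibrating the two deviation thresholds so that $\|\hat{\bar{\mathbf{R}}}-\bar{\mathbf{R}}\|$ stays within the $\gamma_a$-margin while the failure probability matches the stated constants $c_1=\gamma_a^2a_{\min}^4/4\lambda_{\max}$ and $c_2=\gamma_a^4a_{\min}^4/2\lambda_{\max}^2$. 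This mirrors the argument behind Lemma~\ref{lem:extreme-A} with $\gamma_s$ replaced by $\gamma_a$, which explains the identical form of the probability bound.
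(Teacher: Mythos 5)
Your overall skeleton matches the paper's: reduce uniqueness of the constrained regression to affine-independence of the realized rows of $\bar{\bm\theta}$, transfer that property from $\bar{\mathbf{R}}$ by concentration, and finish with the row/column renormalization via $\diag(\mathbf{A}\mathbf{1})\bar{\bm\beta}=\bm\beta\diag(\bm\theta\mathbf{1})$. Where you genuinely diverge is the concentration step. The paper fixes a single affine combination $\mathbf{c}$ (with $\mathbf{1}^{\top}\mathbf{c}=0$), writes $M\mathbf{c}^{\top}\bar{\bm\theta}^m$ as $\mathbf{c}^{\top}\diag(\mathbf{a})^{-1}\bm\theta^m$ plus a correction term, kills the correction with a scalar Hoeffding bound on the empirical mean of $\bm\theta^m$ (this is the $2K\exp(-c_1M)$ term), and then lower-bounds $\max_m \vert\mathbf{c}^{\top}\diag(\mathbf{a})^{-1}\bm\theta^m\vert^2$ by applying Hoeffding to the empirical average of $x_m:=\vert\mathbf{c}^{\top}\diag(\mathbf{a})^{-1}\bm\theta^m\vert^2/\Vert\mathbf{c}\Vert^2\in[0,1/a_{\min}^{2}]$, whose expectation is $\mathbf{c}^{\top}\bar{\mathbf{R}}\mathbf{c}/\Vert\mathbf{c}\Vert^2\geq\gamma_a^{2}/\lambda_{\max}$; that scalar bound is what produces the clean $\exp(-c_2M)$ term with no dimension factor. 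Your Gram-matrix route ($M\bar{\bm\theta}\bar{\bm\theta}^{\top}=\diag(\hat{\mathbf{a}})^{-1}\hat{\mathbf{R}}\diag(\hat{\mathbf{a}})^{-1}$ followed by operator-norm concentration) is sound and in fact buys something the paper's argument quietly lacks: uniformity over all $\bm\lambda$ in the hyperplane, which is what affine-independence of $\bar{\bm\theta}$ actually requires, whereas the paper's tail bound is stated for a fixed $\mathbf{c}$. Two calibration points, however. First, Condition~\ref{def:affine-independent} controls $\Vert\bm\lambda^{\top}\bar{\mathbf{R}}\Vert_2$, not the quadratic form, so your key inequality should read $\bm\lambda^{\top}\bar{\mathbf{R}}\bm\lambda\geq\gamma_a^{2}\Vert\bm\lambda\Vert_2^{2}/\lambda_{\max}$ (obtained as $\bm\lambda^{\top}\bar{\mathbf{R}}\bar{\mathbf{R}}^{\dagger}\bar{\mathbf{R}}\bm\lambda$, exactly as the paper does); the $1/\lambda_{\max}$ is not a variance proxy but the price of converting the norm bound into positive-definiteness on the hyperplane, and omitting it makes your margin too optimistic. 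Second, a matrix Bernstein bound on $\hat{\mathbf{R}}-\mathbf{R}$ carries a dimension factor (of order $K$) in front of the exponential, so you will not land exactly on the stated $\exp(-c_2M)$ with $c_2=\gamma_a^{4}a_{\min}^{4}/2\lambda_{\max}^{2}$; expect a bound of the form $cK\exp(-c'M)$ with comparable but not identical constants. Neither issue breaks the argument, but both must be resolved to prove the lemma with the probability exactly as stated.
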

}
Lemmas~\ref{lem:extreme-A} and~\ref{lem:topic-regression} together
provide a geometric approach for learning $\bm\beta$ from $\mathbf{A}$
(equivalently $\bar{\mathbf{A}}$):
$(1)$ Find extreme points of rows of $\bar{\mathbf{A}}$. Cluster the
rows of $\bar{\mathbf{A}}$ that correspond to the same extreme point
into the same group.
$(2)$ Express the remaining rows of $\bar{\mathbf{A}}$ as convex
combinations of the $K$ distinct extreme points.
$(3)$ Renormalize $\bar{\bm\beta}$ to obtain $\bm\beta$.

{\noindent\bf Detecting Extreme Points using Random Projections:} A
key contribution of our approach is an efficient random projections
based algorithm to detect novel words as extreme points.
The idea is illustrated in Fig.~\ref{fig:separable_and_extreme}: if we
project every point of a convex body onto an isotropically distributed
random direction $\mathbf{d}$, the maximum (or minimum) projection
value must correspond to one of the extreme points with probability
$1$.
On the other hand, the non-novel words will not have the maximum
projection value along any random direction.
Therefore, by repeatedly projecting all the points onto a few
isotropically distributed random directions, we can detect all the
extreme points with very high probability as the number of random
directions increase. An explicit bound on the number of projections
needed appears in Theorem~\ref{thm:novel-word-detection}.
% In addition to its efficiency, it turns out that this random projection based approach is naturally amenable to online or distributed settings in which the documents are observed in a stream or stored in distributed servers. 
% 
% {\noindent\bf A Provably Consistent Approach for Separable NMF:} 
% As a byproduct, the approach discussed above provides an alternative to the Non-negative Matrix Factorization with a separable structure. 
% Meanwhile, the affine-independence (simplicial) condition is also the necessary and sufficient condition for Separable NMF.
% \cite{Donhunique:ref}

{\noindent\bf Finite $N$ in Practice:}
The geometric intuition discussed above was based on the row-vectors
of $\bar{\mathbf{A}}$. When $N = \infty$, $\bar{\mathbf{A}} =
\bar{\mathbf{X}}$ the matrix of row-normalized empirical
word-frequencies of all documents. If $N$ is finite but very large,
$\bar{\mathbf{A}}$ can be well-approximated by $\bar{\mathbf{X}}$
thanks to the law of large numbers.
However, in real-word text corpora, $N\ll W$ (e.g., $N = 298$ while $W
= 14,943$ in the NYT dataset).
Therefore, the row-vectors of $\bar{\mathbf{X}}$ are significantly
perturbed away from the ideal rows of $\bar{\mathbf{A}}$ as
illustrated in Fig.~\ref{fig:separable_and_extreme}. We discuss the
effect of small $N$ and how we address the accompanying issues next.
%
% Therefore, unless the number of word per document, i.e., $N\rightarrow\infty$ for each document. However, in typical text corpus, $N$ is limited and small compared to the size of vocabulary $W$. For instance in the NYT dataset \cite{UCIdataset:ref} to be considered in Sec.~\ref{section:experiments}, $M = 300,000$, $W = 14,943$ and on average $N = 298$ per document. We discuss its consequences and how we deal with these issues in the next section.
%
%
%%%%%%%%%%%%%%%%%%%%%%%%%%%%%%%%
\section{Topic Geometry with Finite Samples: Word Co-occurrence Matrix
  Representation, Solid Angle, and Random Projections based approach}
\label{section:finite_geometry}
The extreme point geometry sketched in
Sec.~\ref{sec:subsection:geometric-RP} is perturbed when $N$ is small
as highlighted in Fig.~\ref{fig:separable_and_extreme}. Specifically,
the rows of the empirical word-frequency matrix $\mathbf{X}$ deviate
from the rows of $\mathbf{A}$. This creates several problems: $(1)$
points in the convex hull corresponding to non-novel words may also
become ``outlier'' extreme points (e.g., $\bar{\mathbf{X}}_6$ in
Fig.~\ref{fig:separable_and_extreme}); $(2)$ some extreme points that
correspond to novel words may no longer be extreme (e.g.,
$\bar{\mathbf{X}}_3$ in Fig.~\ref{fig:separable_and_extreme}); $(3)$
multiple novel words corresponding to the same extreme point may
become multiple distinct extreme points (e.g., $\bar{\mathbf{X}}_1$
and $\bar{\mathbf{X}}_2$ in Fig.~\ref{fig:separable_and_extreme}).
%
%These issues, however, do not vanish when $N$ is small regardless of
%how $M$ scales.
%
Unfortunately, these issues do not vanish as $M$ increases with $N$
fixed -- a regime which captures the characteristics of typical
benchmark datasets -- because the dimensionality of the rows (equal to
$M$) also increases. There is no ``averaging'' effect to smoothen-out
the sampling noise.

Our solution is to seek a new representation, a statistic of
$\mathbf{X}$, which can not only smoothen out the sampling noise of
individual documents, but also preserve the same extreme point
geometry induced by the separability and affine independence
conditions.
%
%i.e., some statistics of the empirical observation $\mathbf{X}$, that
%can preserve the same extreme point geometry and can be consistently
%estimated from observation by exploiting the fact that $M$ is very
%large.
%
%In another word, we would like the new representation to smooth out
%the sampling noise of individual documents.
%
In addition, we also develop an extreme point robustness measure that
naturally arises within our random projections based framework. This
robustness measure can be used to detect and exclude the ``outlier''
extreme points.
%
% We therefore seek for a new representation, i.e., some statistics of the empirical observation $\mathbf{X}$, which can preserve the same extreme point geometry and can be consistently estimated. We focus on the case where $M$ is large while $N$ is fixed as motivated by the benchmark datasets. 
%
%
\subsection{Normalized Word Co-occurrence Matrix Representation}
\label{sec:subsection:co-occurrence-mat}
We construct a suitably normalized word co-occurrence matrix from
$\mathbf{X}$ as our new representation. The co-occurrence matrix
converges almost surely to an ideal statistic as $M\rightarrow \infty$
for any fixed $N \geq 2$. Simultaneously, in the asymptotic limit, the
original novel words continue to correspond to extreme points in the
new representation and overall extreme point geometry is preserved.

The new representation is (conceptually) constructed as follows. First
randomly divide all the words in each document into two equal-sized
independent halves and obtain two $W\times K$ empirical word-frequency
matrices $\mathbf{X}$ and $\mathbf{X}^{\prime}$ each containing $N/2$
words. Then normalize their rows like in
Sec.~\ref{sec:subsection:geometric-RP} to obtain $\bar{\mathbf{X}}$
and $\bar{\mathbf{X}}^{\prime}$ which are row-stochastic. The
empirical word co-occurrence matrix of size $W\times W$ is then given
by
\begin{equation}\label{eqa:word-co-occurrence-def}
\widehat{\mathbf{E}} := M \bar{\mathbf{X}}^{\prime} \bar{\mathbf{X}}^{\top}
\end{equation}

We note that in our random projection based approach,
$\widehat{\mathbf{E}}$ is not {\it explicitly} constructed by
multiplying $\bar{\mathbf{X}}^{\prime}$ and $\bar{\mathbf{X}}$.
Instead, we keep $\bar{\mathbf{X}}^{\prime}$ and $\bar{\mathbf{X}}$
and exploit their sparsity properties to reduce the computational
complexity of all subsequent processing.
% instead exploit the sparsity in $\mathbf{X}$ to reduce the computation complexity.

{\noindent\bf Asymptotic Consistency:} The first nice property of the
word co-occurrence representation is its asymptotic consistency when
$N$ is fixed. As the number of documents $M\rightarrow\infty$, the
empirical $\widehat{\mathbf{E}}$ converges, almost surely, to an ideal
word co-occurrence matrix $\mathbf{E}$ of size $W\times W$. Formally,
\begin{lemma}\label{lem:second-order-convergence}(\cite[Lemma~2]{topicRank2:ref})
Let $\widehat{\mathbf{E}}$ be the empirical word co-occurrence matrix
defined in Eq.~\eqref{eqa:word-co-occurrence-def}. Then,
\begin{equation}
\label{eqa:word-co-occurrence-limit}
\widehat{\mathbf{E}} \xrightarrow[\mbox{almost surely}]{M
  \rightarrow\infty} \bar{\bm{\beta}}\bar{\mathbf{R}}
\bar{\bm{\beta}}^{\top} =: \mathbf{E}
\end{equation}
where 
$\bar{\bm{\beta}} :=
\diag^{-1}(\bm{\beta}\mathbf{a})\bm{\beta}\diag(\mathbf{a})$ and
$\bar{\mathbf{R}} :=
\diag^{-1}(\mathbf{a})\mathbf{R}\diag^{-1}(\mathbf{a})$. 
%is the normalized second-moment of the topic prior.
% $\bar{\mathbf{R}} = \diag^{-1}(\mathbf{a})\mathbf{R}\diag^{-1}(\mathbf{a})$, 
% and $\mathbf{a}$ and $\mathbf{R}$ are, respectively, the $K\times 1$ expectation and $K\times K$ correlation matrix of the topic prior. 
Furthermore, if $\eta := \min_{1\leq i\leq W}(\bm{\beta}\mathbf{a})_i
>0$, then
$\Pr(\Vert \widehat{\mathbf{E}} - \mathbf{E}\Vert_{\infty} \geq
\epsilon) \leq$ $8W^2\exp(-\epsilon^2 \eta^4 MN /20)$.
%\hbox{$\Pr(\Vert \widehat{\mathbf{E}} - \mathbf{E}\Vert_{\infty} \geq
%  \epsilon) \leq 8W^2\exp(-\epsilon^2 \eta^4 MN /20) $}
%
\end{lemma}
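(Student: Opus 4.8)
The plan is to prove the two assertions separately, both working entry-wise, and to let the conditional independence of the two document halves do the essential work. Fix a word pair $(i,j)$ and write $\widehat{E}_{ij} = M\sum_m \bar{\mathbf{X}}'_{i,m}\bar{\mathbf{X}}_{j,m} = U_{ij}/(V'_i V_j)$, where $U_{ij} := \frac{1}{M}\sum_m X'_{i,m}X_{j,m}$, $V_j := \frac{1}{M}\sum_m X_{j,m}$, and $V'_i := \frac{1}{M}\sum_m X'_{i,m}$ are empirical averages over the $M$ documents (the row-normalizations cancel the per-document word count, so counts or frequencies give the same $\widehat{E}$). The single structural fact I would exploit is that, conditioned on $\bm\theta^m$, the two halves are drawn independently, so $\mathbb{E}[X'_{i,m}X_{j,m}\mid\bm\theta^m] = A^m_i A^m_j$ with $\mathbf{A}^m = \bm\beta\bm\theta^m$ and no spurious ``self-coincidence'' diagonal term survives. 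Taking a further expectation over $\Pr(\bm\alpha)$ gives $\mathbb{E}[U_{ij}] = (\bm\beta\mathbf{R}\bm\beta^\top)_{ij}$ and $\mathbb{E}[V_j] = (\bm\beta\mathbf{a})_j$, and I would then check by direct diagonal-scaling bookkeeping that $(\bm\beta\mathbf{R}\bm\beta^\top)_{ij}/\big((\bm\beta\mathbf{a})_i(\bm\beta\mathbf{a})_j\big)$ is exactly the $(i,j)$ entry of $\bar{\bm\beta}\bar{\mathbf{R}}\bar{\bm\beta}^\top$ under the stated definitions of $\bar{\bm\beta}$ and $\bar{\mathbf{R}}$.

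For the almost-sure statement, the documents are i.i.d., so $U_{ij}$, $V'_i$, and $V_j$ are averages of i.i.d.\ bounded random variables and converge almost surely to their means by the strong law of large numbers. Since $\eta > 0$ forces the denominator limit $(\bm\beta\mathbf{a})_i(\bm\beta\mathbf{a})_j \geq \eta^2 > 0$, the continuous mapping theorem applied to the ratio yields $\widehat{E}_{ij} \to (\bar{\bm\beta}\bar{\mathbf{R}}\bar{\bm\beta}^\top)_{ij}$ almost surely for each $(i,j)$, and a finite union over the $W^2$ entries gives convergence of the whole matrix.

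For the finite-sample bound I would first pass to a ``good event'' on which the denominators stay bounded away from zero, say $V'_i,V_j \geq \eta/2$. On this event the map $(U,V',V)\mapsto U/(V'V)$ is Lipschitz with constants controlled by $1/\eta^2$, so $|\widehat{E}_{ij}-E_{ij}|$ is dominated by an $\eta$-dependent multiple of the three fluctuations $|U_{ij}-\mathbb{E}U_{ij}|$, $|V'_i-\mathbb{E}V'_i|$, $|V_j-\mathbb{E}V_j|$. Each of these is an average of bounded quantities, and I would control them with a bounded-differences argument carried out \emph{conditionally on} $\{\bm\theta^m\}$, where the $MN$ word tokens become independent: replacing a single token perturbs any of these statistics by only $O(1/(MN))$ (a token shifts a per-document frequency by $2/N$, entering the document average with weight $1/M$), so summing squared differences over the tokens and inserting the $1/\eta^2$ ratio factors produces an exponent of order $-\epsilon^2\eta^4 MN/\text{const}$; allocating the failure probability across the three deviations and the good event, then union-bounding over the $W^2$ entries, assembles the stated $8W^2\exp(-\epsilon^2\eta^4 MN/20)$.

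The main obstacle is precisely this concentration step, and specifically the ratio structure rather than a plain empirical mean. Two points need care: (i) certifying that the random denominators are bounded away from zero with overwhelming probability, which is exactly where $\eta>0$ enters and what fixes the $\eta^4$ in the exponent; and (ii) bundling the numerator and denominator fluctuations so that it is the \emph{ratio}'s deviation, not each factor's, that inherits a clean sub-Gaussian tail, while keeping the per-token bounded differences at the $1/(MN)$ scale so the effective sample size is the token count. Tracking the two scales — the i.i.d.\ document-level structure and the conditional token-level independence — together with the linearization constants of the ratio is the delicate bookkeeping that yields the precise exponent, after which the union bound over entries is routine.
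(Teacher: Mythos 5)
Your proposal is correct and follows essentially the same route as the paper: the identical decomposition of each entry as a ratio $U_{ij}/(V_i'V_j)$ of token-level empirical averages (the paper's $F_{i,j}/(G_iH_j)$), conditional independence of the two document halves to identify the limit, SLLN plus continuity for almost-sure convergence, and then a good-event-plus-linearization argument combining McDiarmid's inequality over the $MN$ tokens with a union bound over the $W^2$ entries. The paper merely packages the ratio-linearization step as a standalone proposition and fixes the constant $20$ by optimizing the probability split between the denominator good event and the numerator deviation, which is the bookkeeping you correctly flag as the remaining work.
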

Here $\bar{\mathbf{R}}$ is the same normalized second-moment of the
topic priors as defined in Sec.~\ref{section:ideal_geometry} and
$\bar{\bm\beta}$ is a row-normalized version of $\bm\beta$. 
We make note of the abuse of notion for $\bar{\bm\beta}$ which was
defined in Sec.~\ref{sec:subsection:geometric-RP}. It can be shown
that the $\bar{\bm\beta}$ defined in
Lemma~\ref{lem:second-order-convergence} is the limit of the one
defined in Sec.~\ref{sec:subsection:geometric-RP} as
$M\rightarrow\infty$.
The convergence result in Lemma~\ref{lem:second-order-convergence}
shows that the word co-occurrence representation $\mathbf{E}$ can be
consistently estimated by $\widehat{\mathbf{E}}$ as $M\rightarrow
\infty$ and the deviation vanishes exponentially in $M$ which is large
in typical benchmark datasets.

{\noindent\bf Novel Words are Extreme Points:} 
Another reason for using this word co-occurrence representation is that
it preserves the extreme point geometry.
Consider the ideal word co-occurrence matrix $\mathbf{E} =
\bar{\bm\beta} (\bar{\mathbf{R}} \bar{\bm{\beta}}^{\top})$. It is
straightforward to show that if $\bar{\bm\beta}$ is separable and
$\bar{\mathbf{R}}$ is simplicial then $(\bar{\mathbf{R}}
\bar{\bm{\beta}}^{\top})$ is also simplicial. Using these facts it is
possible to establish the following counterpart of
Lemma~\ref{lem:extreme-A} for $\mathbf{E}$:
\begin{lemma}
\label{lem:extreme-E}
{(Novel Words are Extreme Points \cite[Lemma~1]{Ding14:ref})}
Let $\bar{\mathbf{R}}$ be simplicial and ${\bm \beta}$ be
separable. Then, a word $i$ is novel if, and only if, the $i$-th row
of $\mathbf{E}$ is an extreme point of the convex hull spanned by all
the rows of $\mathbf{E}$.
\end{lemma}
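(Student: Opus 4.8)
The plan is to mirror the proof of Lemma~\ref{lem:extreme-A}, replacing the factorization $\bar{\mathbf{A}} = \bar{\bm\beta}\bar{\bm\theta}$ by $\mathbf{E} = \bar{\bm\beta}\mathbf{S}$, where $\mathbf{S} := \bar{\mathbf{R}}\bar{\bm\beta}^{\top}$ is a $K\times W$ matrix. First I would record two structural facts. Since $\bar{\bm\beta}$ is row-stochastic, every row of $\mathbf{E}$ is a convex combination $\mathbf{E}_i = \sum_{k=1}^{K}\bar\beta_{ik}\mathbf{S}_k$ of the $K$ rows of $\mathbf{S}$; hence $\conv\{\mathbf{E}_1,\ldots,\mathbf{E}_W\}\subseteq \conv\{\mathbf{S}_1,\ldots,\mathbf{S}_K\}$. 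Since $\bar{\bm\beta}$ is separable, for each topic $k$ there is a novel word whose row of $\bar{\bm\beta}$ equals the standard basis vector $\mathbf{e}_k^{\top}$, so that $\mathbf{E}_i = \mathbf{S}_k$ for that word; in particular each $\mathbf{S}_k$ is itself one of the rows of $\mathbf{E}$, and the two convex hulls in fact coincide.

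The crux of the argument is to show that the reduced matrix $\mathbf{S} = \bar{\mathbf{R}}\bar{\bm\beta}^{\top}$ is simplicial whenever $\bar{\mathbf{R}}$ is $\gamma_s$-simplicial and $\bar{\bm\beta}$ is separable. Here I would exploit separability at the level of columns: selecting one novel word $i_k$ per topic, the corresponding columns of $\bar{\bm\beta}^{\top}$ are the standard basis vectors, so the $K\times K$ submatrix of $\mathbf{S}$ formed by these columns is exactly $\bar{\mathbf{R}}$. Restriction to a subset of coordinates is a non-expansive ($1$-Lipschitz) map, so the Euclidean distance from any row $\mathbf{S}_k$ to $\conv\{\mathbf{S}_j : j\neq k\}$ in $\mathbb{R}^{W}$ is at least the corresponding distance after restricting to these $K$ coordinates, namely the distance from $\bar{\mathbf{R}}_k$ to $\conv\{\bar{\mathbf{R}}_j : j\neq k\}$, which is $\geq \gamma_s$ by hypothesis. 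Hence $\mathbf{S}$ is at least $\gamma_s$-simplicial, and its $K$ rows are the distinct extreme points of $\conv\{\mathbf{S}_1,\ldots,\mathbf{S}_K\}$.

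With these facts both directions follow from elementary convex geometry. If word $i$ is novel, then $\mathbf{E}_i$ equals some vertex $\mathbf{S}_k$, which is an extreme point of $\conv\{\mathbf{S}_k\}=\conv\{\mathbf{E}_j\}$ and hence an extreme point of the row set of $\mathbf{E}$. Conversely, if word $i$ is non-novel, then by separability $\bar{\bm\beta}_i$ has at least two strictly positive entries, so $\mathbf{E}_i$ is a strict convex combination of at least two distinct vertices. Such a point cannot equal any vertex $\mathbf{S}_k$, since equality would force $\mathbf{S}_k$ into the convex hull of the remaining rows, contradicting the positive separation $\gamma_s$ just established; and being a nontrivial convex combination it is not extreme. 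Thus non-novel rows are never extreme, which closes the equivalence.

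The step I expect to require the most care is the simpliciality transfer in the middle paragraph, i.e.\ justifying that simpliciality of $\bar{\mathbf{R}}$ propagates to $\mathbf{S}=\bar{\mathbf{R}}\bar{\bm\beta}^{\top}$. The coordinate-restriction/non-expansiveness observation is what makes this clean and even preserves the constant $\gamma_s$; the remainder is the same ``vertices versus interior points'' reasoning used for Lemma~\ref{lem:extreme-A}, now applied to the hull of the rows of $\mathbf{S}$.
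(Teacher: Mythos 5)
Your proposal is correct and follows essentially the same route as the paper: the paper likewise reduces the problem to showing that $\mathbf{Y}=\bar{\mathbf{R}}\bar{\bm\beta}^{\top}$ is at least $\gamma_s$-simplicial via exactly the coordinate-restriction bound $\Vert \mathbf{Y}_1-\sum_{j\geq 2}c_j\mathbf{Y}_j\Vert \geq \Vert \bar{\mathbf{R}}_1-\sum_{j\geq 2}c_j\bar{\mathbf{R}}_j\Vert\geq\gamma_s$ (using $\bar{\bm\beta}^{\top}=[\mathbf{I}_K,\mathbf{B}]$ after reordering), and then invokes the same ``novel rows are vertices, non-novel rows are proper convex combinations'' argument from Lemma~\ref{lem:extreme-A}. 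Your write-up simply makes explicit the final convex-geometry step that the paper defers to the earlier lemma.
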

\noindent In another words, the novel words correspond to the extreme
points of all the row-vectors of the ideal word co-occurrence matrix
$\mathbf{E}$. Consider the example in Fig.~\ref{fig:geometry-word-co}
which is based on the same topic matrix $\bm\beta$ as in
Fig.~\ref{fig:separable_and_extreme}. Here,
$\mathbf{E}_1=\mathbf{E}_2, \mathbf{E}_3=\mathbf{E}_4$, and
$\mathbf{E}_5$ are $K=3$ distinct extreme points of all row-vectors of
$\mathbf{E}$ and $\mathbf{E}_6$, which corresponds to a non-novel
word, is inside the convex hull.
%
%\vglue -3ex
%
\begin{figure}[!hb]
\centering
\vglue -4ex
\includegraphics[width=3.5in]{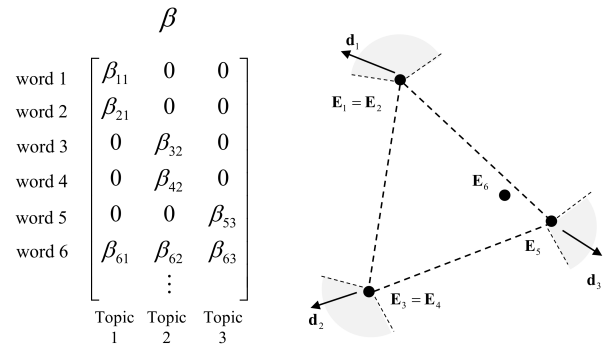}
%
% where an .eps filename suffix will be assumed under latex, 
% and a .pdf suffix will be assumed for pdflatex; or what has been declared
% via \DeclareGraphicsExtensions.
%
\vglue -5ex
\caption{An example of separable topic matrix $\bm\beta$ (left) and
  the underlying geometric structure (right) in the word co-occurrence
  representation. Note: the word ordering is only for visualization
  and has no bearing on separability. The example topic matrix
  $\bm\beta$ is the same as in
  Fig.~\ref{fig:separable_and_extreme}. Solid circles represent the
  {\bf rows} of $\mathbf{E}$. The shaded regions depict the solid
  angles subtended by each extreme
  point. $\mathbf{d}_1,\mathbf{d}_2,\mathbf{d}_3$ are isotropic random
  directions along which each extreme point has maximum projection
  value. They can be used to estimate the solid angles.}
\label{fig:geometry-word-co}
\end{figure}
%
%\vglue -1ex
%

Once the novel words are detected as extreme points, we can follow the
same procedure as in Lemma~\ref{lem:topic-regression} and express each
row $\mathbf{E}_w$ of $\mathbf{E}$ as a unique convex combination of
the $K$ extreme rows of $\mathbf{E}$ or equivalently the rows of
$(\bar{\mathbf{R}} \bar{\bm{\beta}}^{\top})$. The weights of the
convex combination are the $\bar{\beta}_{wk}$'s. We can then apply the
same row and column renormalization to obtain $\bm\beta$. The
following result is the counterpart of
Lemma~\ref{lem:topic-regression} for $\mathbf{E}$:
\begin{lemma}
\label{lem:topic-regression-E}
Let $\mathbf{E}$ and one novel word for each distinct topic be
given. If $\bar{\mathbf{R}}$ is affine-independent, then $\bm\beta$
can be recovered uniquely via constrained linear regression.
\end{lemma}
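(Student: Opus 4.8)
The plan is to reduce Lemma~\ref{lem:topic-regression-E} to the already-established Lemma~\ref{lem:topic-regression}, since both assert unique recoverability of $\bm\beta$ from a unique convex decomposition, with the only difference being that here we work with the rows of the ideal word co-occurrence matrix $\mathbf{E} = \bar{\bm\beta}(\bar{\mathbf{R}}\bar{\bm\beta}^{\top})$ rather than the rows of $\bar{\mathbf{A}} = \bar{\bm\beta}\bar{\bm\theta}$. First I would recall from Lemma~\ref{lem:extreme-E} that, under separability and the simplicial property (which is implied by affine-independence via Proposition~\ref{prop:simplicial-vs-affine}), the $K$ distinct extreme points of the row-set of $\mathbf{E}$ are exactly the rows of $\mathbf{E}$ indexed by novel words, and these coincide with the $K$ rows of the matrix $\bar{\mathbf{R}}\bar{\bm\beta}^{\top}$. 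Thus, given one novel word per topic, we have in hand all $K$ extreme points, i.e.\ all $K$ rows of $\bar{\mathbf{R}}\bar{\bm\beta}^{\top}$.

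Next I would write, for an arbitrary word $w$, the decomposition $\mathbf{E}_w = \sum_{k=1}^{K}\bar{\beta}_{wk}(\bar{\mathbf{R}}\bar{\bm\beta}^{\top})_k$, which follows directly from $\mathbf{E} = \bar{\bm\beta}(\bar{\mathbf{R}}\bar{\bm\beta}^{\top})$ by reading off the $w$-th row. Since $\bar{\bm\beta}$ is row-stochastic, the coefficient vector $(\bar{\beta}_{w1},\ldots,\bar{\beta}_{wK})$ is nonnegative and sums to one, so each row of $\mathbf{E}$ is genuinely a convex combination of the $K$ extreme rows. The crux is then to argue that this convex representation is \emph{unique}, so that solving the constrained linear regression $\min_{\bm\lambda \geq 0,\ \mathbf{1}^\top\bm\lambda = 1}\Vert \mathbf{E}_w - \bm\lambda^\top(\bar{\mathbf{R}}\bar{\bm\beta}^{\top})\Vert_2$ returns precisely the true row $\bar{\bm\beta}_w$. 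Uniqueness of a convex decomposition over a fixed set of points is equivalent to those points being affine-independent; I would invoke the standard fact (stated in the excerpt's discussion of Lemma~\ref{lem:affine-necessary}) that every point in a convex set decomposes uniquely as a convex combination of its extreme points if and only if the extreme points are affine-independent.

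The main obstacle is therefore to establish that the $K$ extreme rows of $\mathbf{E}$, namely the rows of $\bar{\mathbf{R}}\bar{\bm\beta}^{\top}$, are affine-independent given only that $\bar{\mathbf{R}}$ is $\gamma_a$-affine-independent. Here I would use the structural observation already flagged before Lemma~\ref{lem:extreme-E}: since $\bar{\bm\beta}$ is separable, restricting $\bar{\bm\beta}^{\top}$ to the $K$ novel-word columns yields (after the canonical permutation) the $K\times K$ identity, so the submatrix of $\bar{\mathbf{R}}\bar{\bm\beta}^{\top}$ formed by the novel-word columns equals $\bar{\mathbf{R}}$ itself. Consequently, for any $\bm\lambda$ with $\mathbf{1}^\top\bm\lambda = 0$ and $\bm\lambda \neq \mathbf{0}$, we have $\Vert \bm\lambda^\top(\bar{\mathbf{R}}\bar{\bm\beta}^{\top})\Vert_2 \geq \Vert \bm\lambda^\top\bar{\mathbf{R}}\Vert_2 \geq \gamma_a\Vert\bm\lambda\Vert_2$, where the first inequality comes from discarding all but the novel-word coordinates and the second is exactly the affine-independence of $\bar{\mathbf{R}}$ (Condition~\ref{def:affine-independent}). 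This shows the extreme rows of $\mathbf{E}$ are (at least) $\gamma_a$-affine-independent, hence affinely independent, which delivers the required uniqueness and completes the proof. Finally I would note, as in Lemma~\ref{lem:topic-regression}, that recovering $\bar{\bm\beta}$ recovers $\bm\beta$ after the row renormalization $\diag(\mathbf{A}\mathbf{1})\bar{\bm\beta} = \bm\beta\diag(\bm\theta\mathbf{1})$ followed by column renormalization, so no additional work is needed beyond the regression step.
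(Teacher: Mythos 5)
Your proposal is correct and follows essentially the same route as the paper: both reduce the problem to Lemma~\ref{lem:topic-regression} by showing that the extreme rows $\bar{\mathbf{R}}\bar{\bm\beta}^{\top}$ inherit $\gamma_a$-affine-independence from $\bar{\mathbf{R}}$, using the fact that separability makes the novel-word block of $\bar{\bm\beta}^{\top}$ an identity so that $\Vert \bm\lambda^{\top}(\bar{\mathbf{R}}\bar{\bm\beta}^{\top})\Vert_2 \geq \Vert \bm\lambda^{\top}\bar{\mathbf{R}}\Vert_2 \geq \gamma_a \Vert\bm\lambda\Vert_2$. The remaining steps (unique convex decomposition of each row of $\mathbf{E}$ and the row/column renormalization) match the paper's argument.
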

One can follow the same steps as in the proof of
Lemma~\ref{lem:topic-regression}. The only additional step is to check
that $\bar{\mathbf R} \bar{\bm \beta}^{\top} = \left[
  \bar{\mathbf{R}}, \bar{\mathbf R}\mathbf{B} \right] $ is
affine-independent if $\bar{\mathbf{R}}$ is affine-independent.

We note that the finite sampling noise perturbation
$\widehat{\mathbf{E}}-\mathbf{E}$ is still not $0$ but vanishes as
$M\rightarrow\infty$ (in contrast to the $\bar{\mathbf{X}}$
representation in Sec.~\ref{sec:subsection:geometric-RP}).
However, there is still a possibility of observing ``outlier'' extreme
points if a non-novel word lies on the facet of the convex hull of the
rows of $\mathbf{E}$.
We next introduce an extreme point robustness measure based on a
certain {\it solid angle} that naturally arises in our random
projections based approach, and discuss how it can be used to detect
and distinguish between ``true'' novel words and such ``outlier''
extreme points.
% It turns out that the random projection based algorithm is robust in the presence of small perturbation. 
%
%
%%%%%%%%%%%%%%%%%%%%%%%%%%%%%%
\subsection{Solid Angle Extreme Point Robustness Measure}
\label{sec:subsection:solid-angle}
To handle the impact of a small but nonzero perturbation $\Vert
\widehat{\mathbf{E}} - \mathbf{E}\Vert_{\infty}$, we develop an
extreme point ``robustness'' measure. This is necessary for not only
applying our approach to real-world data but also to establish finite
sample complexity bounds.
Intuitively, a robustness measure should be able to distinguish
between the ``true'' extreme points (row vectors that are novel words)
and the ``outlier'' extreme points (row vectors of non-novel words
that become extreme points due to the nonzero perturbation).
Towards this goal, we leverage a key geometric quantity, namely, the
{\it Normalized Solid Angle} subtended by the convex hull of the rows
of $\mathbf{E}$ at an extreme point.
To visualize this quantity, we revisit our running example in
Fig.~\ref{fig:geometry-word-co} and indicate the solid angles attached
to each extreme point by the shaded regions.
It turns out that this geometric quantity naturally arises in the
context of random projections that was discussed earlier. To see this
connection, in Fig.~\ref{fig:geometry-word-co} observe that the shaded
region attached to any extreme point coincides precisely with the set
of directions along which its projection is larger (taking sign into
account) than that of any other point (whether extreme or not). For
example, in Fig.~\ref{fig:geometry-word-co} the projection of
$\mathbf{E}_1= \mathbf{E}_2$ along $\mathbf{d}_1$ is larger than that
of any other point. Thus, the solid angle attached to a point
$\mathbf{E}_i$ (whether extreme or not) can be formally defined as the
set of directions $\{\mathbf{d}: \forall j: {\mathbf E}_j \neq
{\mathbf E}_1, \langle {\mathbf E}_i, {\mathbf d} \rangle > \langle
{\mathbf E}_j, {\mathbf d} \rangle\}$. This set is nonempty only for
extreme points.
The solid angle defined above is a set. To derive a scalar robustness
measure from this set and tie it to the idea of random projections, we
adopt a statistical perspective and define the normalized solid angle
of a point as the {\it probability} that the point will have the
maximum projection value along an isotropically distributed random
direction. Concretely, for the $i$-th word (row vector), the
normalized solid angle $q_i$ is defined as
\begin{equation}
\label{eqa:solidangle-ideal}
q_i := \Pr(\forall j: {\mathbf E}_j \neq {\mathbf E}_i, \langle
{\mathbf E}_i, {\mathbf d} \rangle > \langle {\mathbf E}_j, {\mathbf
  d} \rangle)
\end{equation}
where ${\mathbf d}$ is drawn from an isotropic distribution in
$\mathbb{R}^{W}$ such as the spherical Gaussian.
The condition $ {\mathbf E}_i \neq {\mathbf E}_j$ in
Eq.~\eqref{eqa:solidangle-ideal} is introduced to exclude the multiple
novel words of the same topic that correspond to the same extreme
point. For instance, in Fig.~\ref{fig:geometry-word-co} $\mathbf{E}_1
= \mathbf{E}_2$, Hence, for $q_1$, $j=2$ is excluded.
To make it practical to handle finite sample estimation noise we
replace the condition $\mathbf{E}_j\neq \mathbf{E}_i$ by the condition
$\Vert\mathbf{E}_i - \mathbf{E}_j\Vert\geq \zeta$ for some suitably
defined $\zeta$.

As illustrated in Fig.~\ref{fig:geometry-word-co}, the solid angle for
all the extreme points are strictly positive given $\bar{\mathbf{R}}$
is $\gamma_s$-simplicial. On the other hand, for $i$ that is
non-novel, the corresponding solid angle $q_i$ is zero by
definition. Hence the extreme point geometry in
Lemma~\ref{lem:extreme-E} can be re-expressed in term of solid angles
as follows:
\begin{lemma}
\label{lem:extreme-solid-angle}
{(Novel Words have Positive Solid Angles)}
Let $\bar{\mathbf{R}}$ be simplicial and $\bm{\beta}$ be
separable. Then, word $i$ is a novel word if, and only if, $q_i > 0$.
\end{lemma}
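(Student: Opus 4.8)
The plan is to decouple the statement into a known equivalence plus a purely geometric one. Lemma~\ref{lem:extreme-E} already tells us that, under the simplicial and separability hypotheses, word $i$ is novel if and only if $\mathbf{E}_i$ is an extreme point of $\conv\{\mathbf{E}_1,\dots,\mathbf{E}_W\}$. So it suffices to show that, for the finite point set formed by the rows of $\mathbf{E}$, a row $\mathbf{E}_i$ is an extreme point of its convex hull if and only if the normalized solid angle $q_i$ of Eq.~\eqref{eqa:solidangle-ideal} is strictly positive. Chaining the two equivalences then yields ``word $i$ novel $\iff q_i>0$''.

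First I would prove that an extreme point has positive solid angle. Since $\conv\{\mathbf{E}_1,\dots,\mathbf{E}_W\}$ is a polytope, every extreme point is also an \emph{exposed} point, so if $\mathbf{E}_i$ is extreme there is a direction $\mathbf{d}_0$ with $\langle\mathbf{E}_i,\mathbf{d}_0\rangle>\langle\mathbf{E}_j,\mathbf{d}_0\rangle$ for all $j$ with $\mathbf{E}_j\neq\mathbf{E}_i$. The solid-angle set $\{\mathbf{d}:\forall j\text{ s.t. }\mathbf{E}_j\neq\mathbf{E}_i,\ \langle\mathbf{E}_i,\mathbf{d}\rangle>\langle\mathbf{E}_j,\mathbf{d}\rangle\}$ is a finite intersection of open half-spaces, hence an open cone, and it contains $\mathbf{d}_0$, so it is nonempty. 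Because $\mathbf{d}$ is isotropic with a density positive everywhere (e.g.\ the spherical Gaussian), any nonempty open cone receives positive probability, giving $q_i>0$.

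Next I would handle the converse by contraposition: a non-extreme row has zero solid angle. If $\mathbf{E}_i$ is not extreme, it lies in the convex hull of the $K$ distinct vertices $\mathbf{v}_1,\dots,\mathbf{v}_K$ and differs from each, so $\mathbf{E}_i=\sum_k\mu_k\mathbf{v}_k$ with $\mu_k\ge 0$ and $\sum_k\mu_k=1$. For every $\mathbf{d}$, averaging gives $\langle\mathbf{E}_i,\mathbf{d}\rangle=\sum_k\mu_k\langle\mathbf{v}_k,\mathbf{d}\rangle\le\max_k\langle\mathbf{v}_k,\mathbf{d}\rangle$. Picking a vertex $\mathbf{v}_{k^\ast}$ attaining the maximum, which is distinct from $\mathbf{E}_i$ and hence equals some row $\mathbf{E}_{j^\ast}$ with $\mathbf{E}_{j^\ast}\neq\mathbf{E}_i$, the strict inequality $\langle\mathbf{E}_i,\mathbf{d}\rangle>\langle\mathbf{E}_{j^\ast},\mathbf{d}\rangle$ demanded by Eq.~\eqref{eqa:solidangle-ideal} fails. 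Since this happens for \emph{every} $\mathbf{d}$, the defining event is empty and $q_i=0$.

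The only step needing genuine care, rather than routine checking, is the exposed-point fact invoked in the first direction: I must use that for a polytope every vertex is exposed, so that a single direction strictly separates $\mathbf{E}_i$ from all \emph{genuinely distinct} rows at once, while the duplicate rows $\mathbf{E}_j=\mathbf{E}_i$ arising from multiple novel words of the same topic are correctly excluded by the $\mathbf{E}_j\neq\mathbf{E}_i$ qualifier. Once the solid-angle set is identified as a nonempty open cone, the positivity of its isotropic measure is immediate from the full support of $\mathbf{d}$.
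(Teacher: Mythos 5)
Your proof is correct, and its overall skeleton matches the paper's: one direction shows a non-novel row, being a convex combination of the vertex rows, can never strictly dominate all distinct rows along any direction (so $q_i=0$), and the other direction exhibits a direction of strict dominance for a true vertex and argues that the resulting open cone has positive isotropic measure. Your ``non-extreme $\Rightarrow q_i=0$'' argument is essentially identical to the paper's (the paper writes $\langle \mathbf{E}_w,\mathbf{d}\rangle=\sum_k\bar{\beta}_{wk}\langle\mathbf{E}_k,\mathbf{d}\rangle\le\max_k\langle\mathbf{E}_k,\mathbf{d}\rangle$ with equality forcing some $\bar{\beta}_{wk}=1$). Where you genuinely diverge is the forward direction: you invoke the abstract fact that every vertex of a polytope is exposed, whereas the paper constructs the separating direction explicitly as $\mathbf{d}^1=\mathbf{E}_1-\mathbf{E}_1^{*}$, the difference between the vertex and its projection onto the convex hull of the remaining vertices, and uses the orthogonality principle to get the quantitative margin $\mathbf{E}_1\mathbf{d}^1-\mathbf{E}_k\mathbf{d}^1\ge\gamma_s^2>0$. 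The paper's construction buys a margin tied to the simplicial constant $\gamma_s$, which is what ultimately feeds the lower bound on $q_\wedge$ and the sample-complexity constants in Theorem~\ref{thm:novel-word-detection}; your route is cleaner but purely qualitative. You also implicitly route through Lemma~\ref{lem:extreme-E} (novel $\iff$ extreme), which the paper does not need here since it works directly with $\mathbf{E}_w=\sum_k\bar{\beta}_{wk}\mathbf{E}_k$; that is fine, but note it makes your proof depend on that lemma's hypotheses being in force. One small caution: the paper deliberately allows \emph{any} isotropic distribution for $\mathbf{d}$ (e.g., uniform on the sphere, which has no density on $\mathbb{R}^W$), so rather than asserting a density positive everywhere you should say that a nonempty open cone intersects the unit sphere in a nonempty open set and hence receives positive measure under any non-degenerate isotropic law; with that phrasing your argument covers the same generality the paper claims.
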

We denote the smallest solid angle among the $K$ distinct extreme
points by $q_{\wedge} > 0$. This is a robust condition number of the
convex hull formed by the rows of $\mathbf{E}$ and is related to the
simplicial constant $\gamma_s$ of $\bar{\mathbf{R}}$.

In a real-world dataset we have access to only an empirical estimate
$\widehat{\mathbf{E}}$ of the ideal word co-occurrence matrix
$\mathbf{E}$. If we replace $\mathbf{E}$
%the ideal word co-occurrence matrix,
with $\widehat{\mathbf{E}}$, 
%its empirical version,
then the resulting empirical solid angle estimate $\widehat{q}_i$ will
be very close to the ideal $q_i$ if $\widehat{\mathbf{E}}$ is close
enough to $\mathbf{E}$.
Then, the solid angles of ``outlier'' extreme points will be close to
$0$ while they will be bounded away from zero for the ``true'' extreme
points.
One can then hope to correctly identify all $K$ extreme points by {\it
  rank-ordering} all empirical solid angle estimates and selecting the
$K$ distinct row-vectors that have the largest solid angles. This
forms the basis of our proposed algorithm.
The problem now boils down to efficiently estimating the solid angles
and establishing the asymptotic convergence of the estimates as $M
\rightarrow \infty$.
We next discuss how random projections can be used to achieve these
goals.
%
%
%%%%%%%%%%%%%%%%%%%%%%%%%
\subsection{Efficient Solid Angle Estimation via Random Projections}
\label{sec:subsection:solid-angle-rp}
The definition of the normalized solid angle in
Eq.~\eqref{eqa:solidangle-ideal} motivates an efficient algorithm
based on {\it random projections} to estimate it. For convenience, we
first rewrite Eq.~\eqref{eqa:solidangle-ideal} as
\begin{eqnarray}
\label{eqa:solidangle-ideal-2}
q_i &=&\mathbb{E}\Biggl[ \mathbb{I}\lbrace \forall j:
\Vert\mathbf{E}_j - \mathbf{E}_i\Vert\geq \zeta,\ \mathbf{E}_i
\mathbf{d} \geq \mathbf{E}_j \mathbf{d} \rbrace \Biggr]
\end{eqnarray}
and then propose to estimate it by
\begin{multline}
\label{eqa:solid-approx}
\hat{q}_i = \frac{1}{\numofproj} \sum\limits_{r=1}^{\numofproj}
\mathbb{I}(\forall j: \widehat{E}_{i,i} + \widehat{E}_{j,j} -
2\widehat{E}_{i,j} \geq \zeta/2, \\
 \widehat{\mathbf E}_i {\mathbf d}^{r} > \widehat{\mathbf E}_j
         {\mathbf d}^{r} )
\end{multline}
where $\mathbf{d}^1,\ldots,\mathbf{d}^{P}\in\mathbf{R}^{W\times 1}$
are $P$ iid directions drawn from an isotropic distribution in
$\mathbf{R}^{W}$.
Algorithmically, by Eq.~\eqref{eqa:solid-approx}, we approximate the
solid angle $q_i$ at the $i$-th word (row-vector) by first projecting
all the row-vectors onto $P$ iid isotropic random directions and then
calculating the fraction of times each row-vector achieves the maximum
projection value.
It turns out that the condition $\widehat{E}_{i,i} + \widehat{E}_{j,j}
- 2\widehat{E}_{i,j} \geq \zeta/2 $ is equivalent to
$\Vert\mathbf{E}_i - \mathbf{E}_j\Vert\geq \zeta$ in terms of its
ability to exclude multiple novel words from the same topic and is
adopted for its simplicity.
\footnote{We abuse the symbol $\zeta$ by using it to indicate
  different thresholds in these conditions.}

This procedure of taking random projections followed by calculating
%the frequencies of different words being a maximizer in
the number of times a word is a maximizer via
Eq.~\eqref{eqa:solid-approx} provides a consistent estimate of the
solid angle in Eq.~\eqref{eqa:solidangle-ideal} as $M\rightarrow
\infty$ and the number of projections $P$ increases. The high-level
idea is simple: as $P$ increases, the empirical average in
Eq.~\ref{eqa:solid-approx} converges to the corresponding expectation.
Simultaneously, as $M$ increases, $\widehat{\mathbf{E}}
\xrightarrow{a.s.} \mathbf{E}$.
Overall, the approximation $\widehat{q}_i$ proposed in
Eq~\eqref{eqa:solid-approx} using random projections converges to
$q_i$.
%
% Overall, we can use a few iid random projections to achieve a {\it
%consistent and efficient} estimation of the solid angles.
%

This random projections based approach is also computationally
efficient 
%in approximating the solid angles
for the following reasons.
First, it enables us to avoid the explicit construction of the
$W\times W$ dimensional matrix $\widehat{\mathbf{E}}$:
Recall that each column of $\mathbf{X}$ and $\mathbf{X}^{\prime}$ has
no more than $N\ll W$ nonzero entries. Hence $\mathbf{X}$ and
$\mathbf{X}^{\prime}$ are both sparse. Since
$\widehat{\mathbf{E}}\mathbf{d} = M \bar{\mathbf{X}}^{\prime}
(\bar{\mathbf{X}}^{\top} \mathbf{d})$, the projection can be
calculated using two sparse matrix-vector multiplications.
Second, it turns out that the number of projections $P$ needed to
guarantee consistency is small. In fact in
Theorem~\ref{thm:novel-word-detection} we provide a sufficient upper
bound for $P$ which is a polynomial function of $\log(W)$,
$\log(1/\delta)$ and other model parameters, where $\delta$ is the
probability that the algorithm fails to detect all the distinct novel
words.

{\noindent\bf Parallelization, Distributed and Online Settings:}
Another advantage of the proposed random projections based approach is
that it can be {\it parallelized} and is naturally amenable to {\it
  online} or {\it distributed} settings.
This is based on the following observation that each projection has an
additive structure:
\begin{equation*}
\label{eqa:distributed-solidangle-motivation}
\widehat{\mathbf{E}}\mathbf{d}^{r} = M \bar{\mathbf{X}}^{\prime}
\bar{\mathbf{X}}^{\top} \mathbf{d}^{r} = M \sum_{m=1}^{M}
\bar{\mathbf{X}}^{m\prime} \bar{\mathbf{X}}^{m\top} \mathbf{d}^{r}.
\end{equation*}
The $P$ projections can also be computed independently. Therefore,
\begin{itemize}
\item In a {\it distributed} setting in which the documents are stored
  on distributed servers, we can first share the same random
  directions across servers and then aggregate the projection
  values. The communication cost is only the ``partial'' projection
  values and is therefore insignificant \cite{Ding14:ref} and does not
  scale as the number of observations $N,M$ increases.
\item In an {\it online} setting in which the documents are streamed
  in an online fashion \cite{hoffman2010online}, we only need to keep
  all the projection values and update the projection values (hence
  the empirical solid angle estimates) when new documents arrive.
\end{itemize}
The additive and independent structure guarantees that the statistical
efficiency of these variations are the same as the centralized
``batch'' implementation.
%
% Therefore, all the projection values (hence the novel words) can be estimated without losing statistical accuracy if the documents $m=1,\ldots, M$ are observed in a streaming fashion . 
%Similarly, if the documents are stored on distributed servers, we can share the same random directions across servers and aggregate the projection values  at an insignificant communication cost of transmitting the projection values \cite{Ding14:ref}. 
%
For the rest of this paper, we only focus on the centralized version.
% and establish provable statistical and computation efficiency results. 

%
{\noindent\bf Outline of Overall Approach:}
Our overall approach can be summarized as follows. $(1)$ Estimate the
empirical solid angles using $P$ iid isotropic random directions as in
Eq.~\ref{eqa:solid-approx}. $(2)$ Select the $K$ words with distinct
word co-occurrence patterns (rows) that have the largest empirical
solid angles. $(3)$ Estimate the topic matrix using constrained linear
regression as in Lemma~\ref{lem:topic-regression}. We will discuss the
details of our overall approach in the next section and establish
guarantees for its computational and statistical efficiency.
\section{Algorithm and Analysis}
\label{section:algorithm}
Algorithm~\ref{alg:text:highlevel} describes the main steps of our
overall random projectons based algorithm which we call RP. The two
main steps, novel word detection and topic matrix estimation are
outlined in Algorithms~\ref{alg:text:rp} and ~\ref{alg:text:esttopic1}
respectively.
Algorithm~\ref{alg:text:rp} outlines the random projection and
rank-ordering steps. Algorithm~\ref{alg:text:esttopic1} describes the
constrained linear regression and the renormalization steps in a
combined way.
\begin{algorithm}[!htb]
%
%\caption{Overall-Approach}
\caption{RP}
\label{alg:text:highlevel}
\begin{algorithmic}[1]
\REQUIRE Text documents $\bar{\mathbf X}$, $\bar{\mathbf X}^{\prime}
(W\times M)$; Number of topics $K$; Number of iid random projections
$P$; Tolerance parameters $\zeta,\epsilon >0$.
\ENSURE Estimate of the topic matrix $\widehat{\bm{\beta}} (W\times
K)$.
\STATE Set of Novel Words
$\mathcal{I}\leftarrow$NovelWordDetect($\bar{\mathbf X},\bar{\mathbf
  X}^{\prime}, K, \numofproj, \zeta$)
\STATE $\hat{\bm{\beta}} \leftarrow$EstimateTopics($\mathcal{I},
\bar{\mathbf X},\bar{\mathbf X}^{\prime}, \epsilon$)
\end{algorithmic}
\end{algorithm}
%
%%%%%%%%%%%%%%%%%%%%%%%%%%%%%%%%%%%%%
%
%%%%%%%%%%%%%%%%%%%%%%%%%%%%%%%%%%%%%
\begin{algorithm}[!htb]
\caption{NovelWordDetect (via Random Projections)}
\label{alg:text:rp}
\begin{algorithmic}[1]
\REQUIRE $\bar{\mathbf X},\bar{\mathbf X}^{\prime}$; Number of topics
$K$; Number of projections $P$; Tolerance $\zeta$;
\ENSURE The set of all novel words of $K$ distinct topics
$\mathcal{I}$.
\STATE $\hat{q}_i\leftarrow 0,~\forall i=1,\ldots,W$,
~~$\widehat{\mathbf{E}}\leftarrow M\bar{\mathbf
  X}^{\prime}\bar{\mathbf{X}}^{\top}$.
\FORALL {$r=1,\ldots, P$}
\STATE Sample $\mathbf{d}^{r} \in\mathbb{R}^{W}$ from an isotropic
prior.
\STATE $\mathbf{v} \leftarrow M\bar{\mathbf
  X}^{\prime}\bar{\mathbf{X}}^{\top}\mathbf{d}^{r}$
\STATE $i^{*} \leftarrow \arg\max_{1\leq i\leq W} \mathbf{v}_{i}$,
~~$\hat{q}_{i^*} \leftarrow \hat{q}_{i^*} + 1/P$
\STATE $\hat{J}_{i^*} \leftarrow \{ j : \widehat{E}_{i^*,i^*} +
\widehat{E}_{j,j} - 2\widehat{E}_{i^*,j} \geq \zeta/2 \}$
\FORALL {$k \in \hat{J}^c_{i^*}$}
\STATE $\hat{J}_{k} \leftarrow \{ j : \widehat{E}_{k,k} +
\widehat{E}_{j,j} - 2\widehat{E}_{k,j} \geq \zeta/2 \}$
\IF {$\{\forall j \in \hat{J}_k, v_{k} > v_{j} \}$}
\STATE $\hat{q}_k \leftarrow \hat{q}_k + 1/P$
\ENDIF
\ENDFOR
\ENDFOR
\STATE $\mathcal{I}\leftarrow\emptyset$, $k\leftarrow 0$, $j\leftarrow
1$
\WHILE {$k<K$}
\STATE $i\leftarrow$ index of the $j^{th}$ largest value of
$\{\hat{q}_1,\ldots,\hat{q}_W\}$.
\IF {$\{\forall p \in \mathcal{I}, \widehat{E}_{p,p} +
  \widehat{E}_{i,i} - 2\widehat{E}_{i,p} \geq \zeta/2 \}$}
		\STATE $\mathcal{I} \leftarrow {\mathcal{I}} \cup \{ i
                \}$, $k \leftarrow k + 1$
	\ENDIF
	\STATE $j \leftarrow j + 1$
\ENDWHILE
\STATE {\bf Return} $\mathcal{I}$.
\end{algorithmic}
\end{algorithm}
%
%%%%%%%%%%%%%%%%%%%%%%%%%%%%%%%%%%%%%
%
%%%%%%%%%%%%%%%%%%%%%%%%%%%%%%%%%%%%%
\begin{algorithm}[!htb]
\caption{EstimateTopics}
\label{alg:text:esttopic1}
\begin{algorithmic}[1]
\REQUIRE $\mathcal{I} = \{i_1,\ldots, i_K\}$ set of novel words, one
for each of the $K$ topics; $\widehat{\mathbf{E}}$; precision
parameter $\epsilon$
\ENSURE $\widehat{{\bm \beta}}$, which is the estimate of the ${\bm
  \beta}$ matrix
\STATE $\widehat{\mathbf{E}}^{*}_w =\left[
  \widehat{\mathbf{E}}_{w,i_1}, \ldots, \widehat{\mathbf{E}}_{w,i_K}
  \right]$
\STATE ${\mathbf Y}=(\widehat{\mathbf E}_{i_1}^{*\top}, \ldots,
\widehat{\mathbf E}_{i_K}^{*\top})^{\top}$
\FORALL {$i = 1, \ldots, W$}
\STATE Solve $\mathbf{b}^* := \argmin_{\mathbf{b}}
\Vert\widehat{\mathbf E}^{*}_i - {\mathbf b} {\mathbf Y} \Vert^2$
\STATE subject to $b_j \geq 0, \sum_{j=1}^{K} b_j = 1$
\STATE using precision $\epsilon$ for the stopping-criterion.
\STATE $\widehat{\bm \beta}_i \leftarrow (\frac{1}{M} {\mathbf X}_i
       {\mathbf 1})\mathbf{b}^*$
\ENDFOR 
\STATE $\widehat{\bm\beta} \leftarrow$column normalize $\widehat{\bm
  \beta}$
\end{algorithmic}
\end{algorithm}
%%%%%%%%%%%%%%%%%%%%%%%%%%%%%%%%%%%%%

{\noindent\bf Computational Efficiency:}
We first summarize the computational efficiency of
Algorithm~\ref{alg:text:highlevel}:
\begin{theorem}
\label{thm:computation}
Let the number of novel words for each topic be a constant relative to
$M,W,N$. Then, the running time of Algorithm~\ref{alg:text:highlevel}
is $\mathcal{O}(M N P + WP + W K^3)$.
\end{theorem}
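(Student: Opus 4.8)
The plan is to account separately for the three stages of Algorithm~\ref{alg:text:highlevel} --- the $P$ random projections and the rank-ordering of Algorithm~\ref{alg:text:rp}, and the $W$ constrained regressions of Algorithm~\ref{alg:text:esttopic1} --- and to exploit throughout the column-sparsity of $\bar{\mathbf{X}}$ and $\bar{\mathbf{X}}^{\prime}$, each of whose $M$ columns (documents) carries at most $N$ nonzero entries. The crucial bookkeeping observation, already flagged after \eqref{eqa:solid-approx}, is that $\widehat{\mathbf{E}} = M\bar{\mathbf{X}}^{\prime}\bar{\mathbf{X}}^{\top}$ is never formed explicitly (that alone would cost $\Theta(W^{2})$); every quantity the algorithm touches is either a matrix--vector product against $\widehat{\mathbf{E}}$ or a single row, column, or diagonal entry of it, each of which factors through the two sparse factors.

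First I would bound the projection stage. For each of the $P$ directions $\mathbf{d}^{r}$, the vector $\mathbf{v}=M\bar{\mathbf{X}}^{\prime}(\bar{\mathbf{X}}^{\top}\mathbf{d}^{r})$ is computed as two sparse matrix--vector multiplies: forming $\bar{\mathbf{X}}^{\top}\mathbf{d}^{r}$ touches each nonzero of $\bar{\mathbf{X}}$ once, at cost $\mathcal{O}(MN)$, and left-multiplying the resulting $M\times 1$ vector by $\bar{\mathbf{X}}^{\prime}$ is likewise $\mathcal{O}(MN)$; the argmax over the $W$ coordinates of $\mathbf{v}$ costs $\mathcal{O}(W)$. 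Summed over the $P$ projections this gives $\mathcal{O}(MNP + WP)$. It remains to absorb the inner bookkeeping: the diagonal $\{\widehat{E}_{i,i}\}$ is precomputed once in $\mathcal{O}(MN)$, and determining each of the sets $\hat{J}_{i^{*}}$ and $\hat{J}_{k}$ requires the corresponding row of $\widehat{\mathbf{E}}$, obtained by one sparse product at cost $\mathcal{O}(MN)$ and then scanned in $\mathcal{O}(W)$.

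The main obstacle is to show that the nested loop over $k\in\hat{J}^{c}_{i^{*}}$ does not inflate the bound to $\mathcal{O}(W^{2}P)$. Here I would invoke the standing assumption that each topic has only a constant number of novel words: with high probability the maximizer $i^{*}$ is a novel word (its row an extreme point), so the complementary set $\hat{J}^{c}_{i^{*}}$ of rows within distance $\zeta$ of it consists exactly of the other novel words of the same topic and has $\mathcal{O}(1)$ cardinality, while the $\gamma_s$-simplicial gap keeps non-novel and other-topic rows strictly outside. Thus per projection only $\mathcal{O}(1)$ auxiliary rows $\hat{J}_{k}$ are computed ($\mathcal{O}(MN)$ each) and $\mathcal{O}(1)$ maximizer checks performed ($\mathcal{O}(W)$ each), keeping this stage within $\mathcal{O}(MNP+WP)$. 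The final rank-ordering sorts the $W$ solid-angle estimates in $\mathcal{O}(W\log W)$ and the while-loop selects $K$ distinct words with $\mathcal{O}(WK)$ distance checks, both dominated by $\mathcal{O}(WP)$.

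For EstimateTopics (Algorithm~\ref{alg:text:esttopic1}) I would first extract the $K$ columns of $\widehat{\mathbf{E}}$ indexed by the novel words via $K$ sparse products, at total cost $\mathcal{O}(MNK)$, subsumed by $\mathcal{O}(MNP)$ in the regime where the projection count required for consistency satisfies $P\ge K$; this furnishes both the $K\times K$ regressor matrix $\mathbf{Y}$ and all targets $\widehat{\mathbf{E}}^{*}_{i}$. Factorizing $\mathbf{Y}$ once costs $\mathcal{O}(K^{3})$, after which each of the $W$ simplex-constrained least-squares problems in $K$ variables is solved to precision $\epsilon$ in $\mathcal{O}(K^{3})$ (the per-word linear algebra dominates, the iteration count being controlled by the strong convexity induced by $\gamma_a$-affine-independence and treated as constant for fixed $\epsilon$), giving $\mathcal{O}(WK^{3})$; the closing column normalization is $\mathcal{O}(WK)$ and negligible. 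Adding the three stages yields $\mathcal{O}(MNP + WP + WK^{3})$, as claimed. The single step demanding genuine care is the cardinality argument for $\hat{J}^{c}_{i^{*}}$, since without the constant-novel-words assumption and the simplicial separation this inner loop is the only place where the running time could degrade.
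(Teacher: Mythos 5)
Your proposal is correct and follows essentially the same route as the paper's proof: the same three-stage decomposition, the same use of the sparsity of $\bar{\mathbf{X}}$ and $\bar{\mathbf{X}}^{\prime}$ to get $\mathcal{O}(MN)$ per projection and per row of $\widehat{\mathbf{E}}$, and the same appeal to the constant-novel-words assumption to bound $|\hat{J}^{c}_{i^{*}}|$ by $\mathcal{O}(1)$ so the inner loop does not degrade to $\mathcal{O}(W^{2}P)$. The only cosmetic differences are that you bound the final selection by a full $\mathcal{O}(W\log W)$ sort absorbed into $\mathcal{O}(WP)$ where the paper extracts only the top constant number of solid angles in $\mathcal{O}(W)$, and you recompute the $K$ novel-word columns at cost $\mathcal{O}(MNK)\le\mathcal{O}(MNP)$ where the paper notes they were already produced during the projection stage.
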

This efficiency is achieved by exploiting the sparsity of $\mathbf{X}$
and the property that there are only a small number of novel words in
a typical vocabulary. A detailed analysis of the computational
complexity is presented in the appendix. Here we point out that in
order to upper bound the computation time of the linear regression in
Algorithm~ \ref{alg:text:esttopic1} we used $\mathbf{O}(W K^3)$ for
$W$ matrix inversions, one for each of the words in the vocabulary.
In practice, a
%standard
gradient descent implementation can be used 
%one can use
for the constrained linear regression which is much more efficient.
%but to establish computation bound for this step is not the main focus
%of this paper.
We also note that these $W$ optimization problems are decoupled
%independent
given the set of detected novel words. Therefore, they can be
parallelized in a straightforward manner \cite{Ding14:ref}.

{\noindent\bf Asymptotic Consistency and Statistical Efficiency:}
We now summarize the asymptotic consistency and sample complexity
bounds for Algorithm~\ref{alg:text:highlevel}. The analysis is a
combination of the consistency of the novel word detection step
(Algorithm~\ref{alg:text:rp}) and the topic estimation step
(Algorithm~\ref{alg:text:esttopic1}). We state the results for both of
these steps.
First, for detecting all the novel words of the $K$ distinct topics,
we have the following result:
\begin{theorem}
\label{thm:novel-word-detection}
Let topic matrix $\bm{\beta}$ be separable and $\bar{\mathbf{R}}$ be
$\gamma$-simplicial. If the projection directions are iid sampled from
any isotropic distribution, then Algorithm~\ref{alg:text:rp} can
identify all the novel words of the $K$ distinct topics as
$M,P\rightarrow \infty$.
Furthermore, $\forall \delta \geq 0$, if 
\begin{equation}
\label{eqa:complexity-bound-1}
M \geq  20 \frac{\log(2W/\delta)}{ N \rho^2 \eta^4} ~\text{and}~ P \geq 8 \frac{\log(2W/\delta)}{q_{\wedge}^2}
\end{equation}
\noindent then Algorithm~\ref{alg:text:rp} fails with probability at
most $\delta$.
The model parameters are defined as follows. $\rho =\min\{
\frac{d}{8}, \frac{\pi d_2 q_{\wedge}}{4 W^{1.5}} \}$ where
$d=(1-b)^{2}\gamma^{2}/\lambda_{\max}$, $d_2 \triangleq (1-b)\gamma$,
$\lambda_{\max}$ is the maximum eigenvalue of $\bar{\mathbf{R}}$,
$b=\max_{j\in\mathcal{C}_0,k} \bar{\beta}_{j,k}$, and
$\mathcal{C}_{0}$ is the set of non-novel words. Finally, $q_{\wedge}$
is the minimum solid angle of the extreme points of the convex hull of
the rows of $\mathbf{E}$.
\end{theorem}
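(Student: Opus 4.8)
The plan is to control the empirical solid-angle estimates $\hat{q}_i$ of Eq.~\eqref{eqa:solid-approx} so that, with probability at least $1-\delta$, every novel word satisfies $\hat{q}_i > q_\wedge/2$ while every non-novel word satisfies $\hat{q}_i < q_\wedge/2$. Since Lemma~\ref{lem:extreme-solid-angle} gives $q_i \geq q_\wedge > 0$ for novel words and $q_i = 0$ for non-novel words, such a gap forces the rank-ordering-and-greedy-selection step of Algorithm~\ref{alg:text:rp} (with the distinct-pattern $\zeta/2$ threshold) to return exactly one novel word from each of the $K$ topics. I would split the deviation by introducing the conditional mean $\tilde{q}_i := \mathbb{E}[\hat{q}_i \mid \widehat{\mathbf{E}}]$ and writing $|\hat{q}_i - q_i| \leq |\hat{q}_i - \tilde{q}_i| + |\tilde{q}_i - q_i|$, so that the \emph{projection} error (finite $P$) and the \emph{co-occurrence} error (finite $M$) can be bounded separately.

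For the projection error, conditioned on $\widehat{\mathbf{E}}$ the $P$ summands in Eq.~\eqref{eqa:solid-approx} are iid Bernoulli indicators, so Hoeffding's inequality gives $\Pr(|\hat{q}_i - \tilde{q}_i| \geq q_\wedge/4 \mid \widehat{\mathbf{E}}) \leq 2\exp(-P q_\wedge^2/8)$; a union bound over the $W$ words together with $P \geq 8\log(2W/\delta)/q_\wedge^2$ makes this at most $\delta/2$, which is exactly the stated bound on $P$.

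The hard part is the co-occurrence error $|\tilde{q}_i - q_i|$, a purely geometric perturbation argument comparing the isotropic-measure of directions for which $\widehat{\mathbf{E}}_i$ versus $\mathbf{E}_i$ is the thresholded maximizer. The maximizer can disagree only on directions where some gap $(\mathbf{E}_i - \mathbf{E}_j)\mathbf{d}$ lies inside a window of width $\lesssim \|\widehat{\mathbf{E}} - \mathbf{E}\|_\infty$ in the projected coordinate. The plan is to (i) use Lemma~\ref{lem:second-order-convergence} to force $\|\widehat{\mathbf{E}} - \mathbf{E}\|_\infty \leq \rho\,\eta^2$, which under $M \geq 20\log(2W/\delta)/(N\rho^2\eta^4)$ holds with probability at least $1-\delta/2$, and (ii) apply an anti-concentration (small-ball) bound for a Gaussian $\mathbf{d}$, where $(\mathbf{E}_i-\mathbf{E}_j)\mathbf{d} \sim N(0,\|\mathbf{E}_i-\mathbf{E}_j\|^2)$ has bounded density, so the ambiguous-direction measure scales like (perturbation)$/\|\mathbf{E}_i-\mathbf{E}_j\|$ up to a dimensional factor. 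The simplicial constant $\gamma$ and $\lambda_{\max}$ lower-bound the relevant separations through $d$ and $d_2$, while $b<1$ ensures each non-novel row is a genuine interior combination sitting at distance $\gtrsim d$ inside a facet, so it cannot acquire solid angle exceeding $q_\wedge/4$ under an $O(\rho\eta^2)$ perturbation; symmetrically each novel word loses at most $q_\wedge/4$. The two regimes are what the terms $d/8$ and $\pi d_2 q_\wedge/(4W^{1.5})$ in $\rho$ encode, the $W^{1.5}$ factor arising when converting the worst-case per-coordinate perturbation into a spherical-cap (solid-angle) measure bound.

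I expect step (ii) to be the main obstacle: making the anti-concentration/solid-angle bound uniform over all $O(W^2)$ pairs and tight enough that the tolerance $\rho\eta^2$ does not blow up the $M$-bound beyond a polynomial in $W, 1/N, 1/\eta, 1/\gamma, 1/q_\wedge$. A final union bound over the two high-probability events shows Algorithm~\ref{alg:text:rp} fails with probability at most $\delta$, and letting $M,P\to\infty$ yields the claimed asymptotic consistency.
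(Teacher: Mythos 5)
Your overall architecture matches the paper's: you decompose $\hat q_i-q_i$ into a finite-$P$ term (controlled by Hoeffding conditionally on $\widehat{\mathbf{E}}$, giving exactly the stated bound on $P$) plus a finite-$M$ term (controlled by concentration of $\widehat{\mathbf{E}}$ plus a geometric perturbation bound), and you correctly identify that the separation constants $d$ and $d_2$ from Proposition~\ref{prop:similarity} are what make the $\zeta/2$-clustering and the rank-ordering work. The paper's intermediate quantity $p_i(\widehat{\mathbf{E}})$ is precisely your $\tilde q_i$.

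The genuine gap is in your step (ii), which you yourself flag as the main obstacle. As proposed, it would not prove the theorem as stated. You invoke a Gaussian small-ball/anti-concentration bound, using that $(\mathbf{E}_i-\mathbf{E}_j)\mathbf{d}$ has bounded density; but the theorem (and the paper's explicit selling point over \cite{Ding14:ref}) is that the guarantee holds for \emph{any} isotropic distribution on $\mathbf{d}$, including ones whose one-dimensional marginals need not have a density bound of the form you need. Moreover, even in the Gaussian case your route requires two separate controls --- a small-ball bound on $(\mathbf{E}_i-\mathbf{E}_j)\mathbf{d}$ \emph{and} a magnitude bound on the perturbation term $\bigl((\widehat{\mathbf{E}}_i-\mathbf{E}_i)-(\widehat{\mathbf{E}}_j-\mathbf{E}_j)\bigr)\mathbf{d}$ --- whose trade-off introduces an extra logarithmic factor and does not obviously reproduce the constant $\pi d_2 q_\wedge/(4W^{1.5})$ in $\rho$. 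The paper's argument is cleaner and distribution-free: the event that the signs of $(\mathbf{E}_i-\mathbf{E}_j)\mathbf{d}$ and $(\widehat{\mathbf{E}}_i-\widehat{\mathbf{E}}_j)\mathbf{d}$ disagree depends only on the projection of $\mathbf{d}$ onto the plane spanned by the two normal vectors, where it is a wedge of angle $\phi_j$; isotropy alone gives probability $\phi_j/(2\pi)$, and $\phi_j\leq\tan\phi_j\leq\Vert\widehat{\mathbf{e}}_j-\mathbf{e}_j\Vert_2/\Vert\mathbf{e}_j\Vert_2\leq 2\sqrt{W}\Vert\widehat{\mathbf{E}}-\mathbf{E}\Vert_\infty/d_2$, which summed over at most $W$ indices $j$ yields the $W^{1.5}/(\pi d_2)$ factor. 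Two smaller issues: the tolerance you impose on $\Vert\widehat{\mathbf{E}}-\mathbf{E}\Vert_\infty$ should be $\rho$ itself, not $\rho\,\eta^2$ (the $\eta^4$ in the sample complexity comes from the exponent in Lemma~\ref{lem:second-order-convergence}, not from the tolerance), and you should say explicitly that the $d/8$ branch of $\rho$ is what guarantees the empirical neighborhoods $\hat J_i$ coincide with the true novel-word clusters, which is a separate failure mode (the paper's ``clustering error'') from the rank-ordering failure you analyze.
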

The detailed proof is presented in the appendix.
The results in Eq.~\eqref{eqa:complexity-bound-1} provide a sufficient
finite sample complexity bound for novel word detection. The bound is
{\it polynomial} with respect to $M, W, K, N$, $\log(\delta)$ and
other model parameters. The number of projections $P$ that impacts the
computational complexity scales as $\log(W)/q_{\wedge}^{2}$ in this
sufficient bound where $q_{\wedge}$ can be upper bounded by $1/K$. In
practice, we have found that setting $P = \mathcal{O}(K)$ is a good
choice \cite{Ding14:ref}.

We note that the result in Theorem~\ref{thm:novel-word-detection} only
requires the simplicial condition which is the {\it minimum} condition
required for consistent novel word detection
(Lemma~\ref{lem:simplicial-necessary}). This theorem holds true if the
topic prior $\bar{\mathbf{R}}$ satisfies stronger conditions such as
affine-independence.
We also point out that our proof in this paper holds for {\it any
  isotropic distribution} on the random projection directions
$\mathbf{d}^{1}, \ldots, \mathbf{d}^{P}$. The previous result in
\cite{Ding14:ref}, however, only applies to some specific isotopic
distributions such as the Spherical Gaussian or the uniform
distribution in a unit ball. In practice, we use Spherical Gaussian
since sampling from such prior is simple and requires only
$\mathcal{O}(W)$ time for generating each random direction.

Next, given the successful detection of the set of novel words for all
topics, we have the following result for the accurate estimation of
the separable topic matrix $\bm\beta$:
\begin{theorem}
\label{thm:topic-estimation}
Let topic matrix $\bm{\beta}$ be separable and $\bar{\mathbf{R}}$ be
$\gamma_a$-affine-independent.
Given the successful detection of novel words for all $K$ distinct
topics, the output of Algorithm~\ref{alg:text:esttopic1} $\widehat{\bm
  \beta} \xrightarrow{p} {\bm \beta}$ element-wise (up to a column
permutation). Specifically, if
\begin{equation}
\label{eqa:complexity-bound-2}
M \geq \frac{2560 W^2 K \log({W^{4}K}/{\delta})}{N \gamma_a^{2}
  a_{\min}^{2} \eta^4 \epsilon^2 }
\end{equation}
then $\forall i, k$, $\widehat{\beta}_{i,k}$ will be $\epsilon$ close
to $\beta_{i,k}$ with probability at least $1 - \delta$, for any $0<
\epsilon < 1$. $\eta$ is the same as in
Theorem~\ref{thm:novel-word-detection}. $a_{\min}$ is the minimum
value in $\mathbf{a}$.
\end{theorem}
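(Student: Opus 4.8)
The plan is to combine the exact recovery guarantee of the ideal regime (Lemma~\ref{lem:topic-regression-E}) with a quantitative perturbation analysis of the constrained least-squares step, and then feed the deterministic bound into the concentration of $\widehat{\mathbf E}$ about $\mathbf E$ (Lemma~\ref{lem:second-order-convergence}) to obtain a finite-sample statement. First I would record the ideal-case algebra: when the exact co-occurrence matrix $\mathbf E$ is used and the novel words $i_1,\dots,i_K$ are correctly detected (which is assumed), the design matrix $\mathbf Y$ in Algorithm~\ref{alg:text:esttopic1} equals $\bar{\mathbf R}$ and the $w$-th restricted response $\mathbf E^{*}_w$ equals $\bar{\bm\beta}_w\bar{\mathbf R}$. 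Hence the feasible point $\mathbf b=\bar{\bm\beta}_w$ attains zero objective and, by $\gamma_a$-affine-independence, is the \emph{unique} minimizer on the simplex; tracing the two renormalizations shows that $(\tfrac1M\mathbf X_w\mathbf 1)\,\mathbf b^{\ast}$ followed by column normalization returns $\bm\beta$ exactly, up to the column permutation induced by the labelling of the detected novel words.

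The core of the argument is a deterministic stability bound for the constrained regression. Write $\tau:=\Vert\widehat{\mathbf E}-\mathbf E\Vert_\infty$. I would first argue that for $\tau$ small enough relative to $\gamma_a$ the perturbed design $\widehat{\mathbf Y}$ remains affine-independent with constant at least $\gamma_a/2$, so that the empirical objective $\widehat f(\mathbf b)=\Vert\widehat{\mathbf E}^{*}_w-\mathbf b\,\widehat{\mathbf Y}\Vert^2$ is strongly convex on the affine set $\{\mathbf 1^\top\mathbf b=1\}$ with modulus $\gtrsim\gamma_a^2$; this is precisely the quantity lower-bounded in Condition~\ref{def:affine-independent}, read on the zero-sum directions $\bm\lambda$. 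Evaluating $\widehat f$ at the truth gives $\widehat f(\bar{\bm\beta}_w)=\Vert(\widehat{\mathbf E}^{*}_w-\mathbf E^{*}_w)-\bar{\bm\beta}_w(\widehat{\mathbf Y}-\mathbf Y)\Vert^2\lesssim K\tau^2$, since $\bar{\bm\beta}_w$ is a probability vector and each affected entry of $\widehat{\mathbf E}-\mathbf E$ is at most $\tau$. Strong convexity together with $\widehat f(\widehat{\mathbf b}^{\ast})\le\widehat f(\bar{\bm\beta}_w)$ then yields $\Vert\widehat{\mathbf b}^{\ast}-\bar{\bm\beta}_w\Vert_2\lesssim \sqrt{K}\,\tau/\gamma_a$.

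Next I would propagate this through the two renormalizations to reach element-wise error on $\bm\beta$. The row scaling multiplies $\widehat{\mathbf b}^{\ast}$ by $\tfrac1M\mathbf X_w\mathbf 1$, which concentrates on $(\bm\beta\mathbf a)_w\ge\eta$ and is bounded by $1$; the column scaling divides column $k$ by $\sum_w\widehat\beta_{wk}$, which concentrates on $a_k\ge a_{\min}$, contributing the $1/a_{\min}$ factor (its companion $a_{\min}$ entering through $\bar\beta_{wk}=\beta_{wk}a_k/(\bm\beta\mathbf a)_w$) and a further factor of $W$ from summing $W$ perturbed entries in the normalizer. Collecting these Lipschitz factors gives a uniform bound of the form $\max_{w,k}|\widehat\beta_{wk}-\beta_{wk}|\le C\,W\sqrt{K}\,\tau/(\gamma_a a_{\min})$ for an absolute constant $C$, on the event that $\widehat{\mathbf E}$ and all row/column sums lie within $\tau$ of their limits. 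Finally, choosing $\tau\asymp \epsilon\,\gamma_a a_{\min}/(W\sqrt{K})$ so that $C\,W\sqrt{K}\,\tau/(\gamma_a a_{\min})\le\epsilon$, substituting into the exponential tail $8W^2\exp(-\tau^2\eta^4 MN/20)$ of Lemma~\ref{lem:second-order-convergence} (with a union bound over the $O(W^2)$ entries and the $WK$ output coordinates absorbed into $\log(W^4K/\delta)$), and requiring total failure probability at most $\delta$, I would solve for $M$ and obtain the stated complexity $M\ge 2560\,W^2K\log(W^4K/\delta)/(N\gamma_a^2 a_{\min}^2\eta^4\epsilon^2)$.

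The hard part will be the deterministic stability bound of the second paragraph: the regressor matrix $\widehat{\mathbf Y}$ and the target $\widehat{\mathbf E}^{*}_w$ are perturbed simultaneously, the minimization is constrained to the probability simplex, and a \emph{uniform} element-wise (not merely $\ell_2$) guarantee is needed for all $W$ words at once. Making the strong-convexity modulus explicit in $\gamma_a$ — showing it survives the perturbation of $\widehat{\mathbf Y}$ and reduces correctly to the affine-independence constant on the zero-sum subspace — and then tracking every $W,\sqrt{K},1/\gamma_a,1/a_{\min}$ factor through the renormalizations without slack is where the bookkeeping is delicate; the remainder reduces to the already-established almost-sure convergence and tail bound for $\widehat{\mathbf E}$.
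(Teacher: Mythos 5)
Your proposal is correct and follows the same overall architecture as the paper: exact recovery in the ideal case, a deterministic stability bound for the constrained regression in terms of $\tau=\Vert\widehat{\mathbf E}-\mathbf E\Vert_\infty$, propagation of that bound through the row and column renormalizations (picking up the $1/a_{\min}$ and $W$ factors exactly as in the paper's Propositions on row scaling and column normalization), and finally the concentration bound of Lemma~\ref{lem:second-order-convergence} with a union bound to solve for $M$. The one place where you genuinely diverge --- and it is the step you yourself flag as the hard part --- is the stability lemma. You work with the squared objective $\widehat f(\mathbf b)=\Vert\widehat{\mathbf E}^{*}_w-\mathbf b\widehat{\mathbf Y}\Vert^{2}$ and invoke strong convexity on the zero-sum subspace, which forces you to first show that the \emph{perturbed} design $\widehat{\mathbf Y}$ remains $\gamma_a/2$-affine-independent (doable, via $\Vert\bm\lambda^{\top}\widehat{\mathbf Y}\Vert\ge\gamma_a\Vert\bm\lambda\Vert-K\tau\Vert\bm\lambda\Vert$, at the cost of an extra smallness condition on $\tau$ that is implied by your final choice). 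The paper instead works with the \emph{unsquared} norm $f(\mathbf E,\mathbf b)=\Vert\mathbf E^{*}_w-\mathbf b\mathbf Y\Vert$: affine-independence of the \emph{ideal} design gives the linear growth $f(\mathbf E,\mathbf b)-f(\mathbf E,\mathbf b^{*})\ge\gamma_a\Vert\mathbf b-\mathbf b^{*}\Vert$, the uniform perturbation bound $|f(\mathbf E,\mathbf b)-f(\widehat{\mathbf E},\mathbf b)|\le 2\max_w\Vert\widehat{\mathbf E}_w-\mathbf E_w\Vert$ holds for every feasible $\mathbf b$, and a three-term telescoping using $f(\widehat{\mathbf E},\widehat{\mathbf b}^{*})\le f(\widehat{\mathbf E},\mathbf b^{*})$ yields $\Vert\widehat{\mathbf b}^{*}-\mathbf b^{*}\Vert\le 4\sqrt{K}\,\tau/\gamma_a$ with no condition whatsoever on the conditioning of $\widehat{\mathbf Y}$. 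Both routes give the same constant $4\sqrt K/\gamma_a$ and hence the same final sample complexity, but the paper's version sidesteps entirely the perturbation-of-the-design argument that you identify as delicate; if you carry out your plan, that is the simplification worth adopting.
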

We note that the sufficient sample complexity bound in
Eq.~\eqref{eqa:complexity-bound-2} is again polynomial in terms of all
the model parameters. Here we only require $\bar{\mathbf{R}}$ to be
affine-independent.
Combining Theorem~\ref{thm:novel-word-detection} and
Theorem~\ref{thm:topic-estimation} gives the consistency and sample
complexity bounds of our overall approach in
Algorithm~\ref{alg:text:highlevel}.
%
%
%***********************************************************
\section{Experimental Results}
\label{section:experiments}
In this section, we present experimental results on both synthetic and
real world datasets. We report different performance measures that
have been commonly used in the topic modeling literature.
When the ground truth is available (Sec.~\ref{sec:subsection:semi}),
we use the $\ell_1$ {\it reconstruction error} between the ground
truth topics and the estimates after proper topic alignment.
For the real-world text corpus in Sec.~\ref{sec:subsection:realdata},
we report the {\it held-out probability}, which is a standard measure
used in the topic modeling literature.
We also {\it qualitatively} (semantically) compare the topics
extracted by the different approaches using the top probable words for
each topic.
%
%%%%%%%%%%%%%%%%%%%%%%%%%%%%%%%%%%%%%
%
\subsection{Semi-synthetic text corpus}
\label{sec:subsection:semi}
%
%For the results of this section, 
In order to validate our proposed algorithm, we generate
``semi-synthetic'' text corpora by sampling from a synthetic, yet
realistic, ground truth topic model.
% configuration. 
To ensure that the semi-synthetic data is similar to real-world data,
in terms of dimensionality, sparsity, and other characteristics, we
use the following generative procedure adapted from \cite{Arora2:ref,
  Ding14:ref}.

We first train an LDA model (with $K=100$) on a real-world dataset
using a standard Gibbs Sampling method with default parameters (as
described in \cite{Griffiths04Gibbs:ref,McCallumMALLET}) to obtain a
topic matrix $\bm\beta_0$ of size $W\times K$.
The real-world dataset that we use to generate our synthetic data is
derived from a New York Times (NYT) articles dataset
\cite{UCIdataset:ref}.
The original vocabulary is first pruned based on document
frequencies. Specifically, as is standard practice, only words that
appear in more than $500$ documents are retained. Thereafter, again as
per standard practice, the words in the so-called stop-word list are
deleted as recommended in \cite{stopword:ref}. After these steps, $M =
300,000$, $W = 14,943$, and the average document length $N = 298$.
We then generate semi-synthetic datasets, for various values of $M$,
by fixing $N = 300$ and using $\bm{\beta}_0$ and a Dirichlet topic
prior. As suggested in \cite{Griffiths04Gibbs:ref} and used in
\cite{Arora2:ref,Ding14:ref}, we use symmetric hyper-parameters
($0.03$) for the Dirichlet topic prior.

The $W \times K$ topic matrix $\bm{\beta}_0$ may not be separable. To
enforce separability, we create a new {\it separable} $(W+K)\times K$
dimensional topic matrix $\bm{\beta}_{\text{sep}}$ by inserting $K$
synthetic novel words (one per topic) having suitable probabilities in
each topic. Specifically, $\bm{\beta}_{\text{sep}}$ is constructed by
transforming $\bm\beta_0$ as follows. First, for each synthetic novel
word in $\bm{\beta}_{\text{sep}}$, the value of the sole nonzero entry
in its row is set to the probability of the most probable word in the
topic (column) of $\bm\beta_0$ for which it is a novel word. Then the
resulting $(W+K)\times K$ dimensional nonnegative matrix is
renormalized column-wise to make it column-stochastic.
Finally, we generate semi-synthetic datasets, for various values of
$M$, by fixing $N = 300$ and using $\bm{\beta}_{\text{sep}}$ and the
same symmetric Dirichlet topic prior used for $\bm\beta_0$.

We use the name {\it Semi-Syn} to refer to datasets that are generated
using $\bm{\beta}_0$ and the name {\it Semi-Syn$+$Novel} for datasets
generated using $\bm{\beta}_{\text{sep}}$.

%%
%The real-world dataset that we use to generate our synthetic data is
%derived from a New York Times (NYT) articles dataset
%\cite{UCIdataset:ref}.
%%
%The original vocabulary is pruned based on document frequencies and
%then, as is standard practice, the words in the so-called stop-word
%list are deleted \cite{stopword:ref}. After these steps $M = 300,000$,
%$W = 14,943$, and the average document length $N = 298$. An LDA model
%with $K=100$ is trained with Gibbs Sampling
%\cite{Griffiths04Gibbs:ref,McCallumMALLET} using the default
%parameters.
%%
%We then generate the semi-synthetic datasets for various $M$ by fixing
%$N = 300$ and using a Dirichlet topic prior with symmetric
%hyper-parameters ($0.03$) as suggested in \cite{Griffiths04Gibbs:ref}
%and used in \cite{Arora2:ref,Ding14:ref}.
%%

In our proposed random projections based algorithm, which we call RP,
we set $\numofproj = 150\times K$, $\zeta = 0.05$, and $\epsilon =
10^-4$. We compare RP against the provably efficient algorithm
RecoverL2 in \cite{Arora2:ref} and the standard Gibbs Sampling based
LDA algorithm (denoted by Gibbs) in
\cite{Griffiths04Gibbs:ref,McCallumMALLET}. In order to measure the
performance of different algorithms in our experiments based on
semi-synthetic data, we compute the $\ell_1$ norm of the {\it
  reconstruction error} between $\widehat{\bm\beta}$ and $\bm{\beta}$.
Since all column permutations of a given topic matrix correspond to
the same topic model (for a corresponding permutation of the topic
mixing weights), we use a bipartite graph matching algorithm to
optimally match the columns of $\widehat{\bm\beta}$ with those of
$\bm{\beta}$ (based on minimizing the sum of $\ell_1$ distances
between all pairs of matching columns) before computing the $\ell_1$
norm of the reconstruction error between $\widehat{\bm\beta}$ and
$\bm{\beta}$.

The results on both {\it Semi-Syn$+$Novel} NYT and {\it Semi-Syn} NYT
are summarized in Fig.~\ref{plot_synNYT:stat} for all three algorithms
for various choices of the number of documents $M$. We note that in
these figures the $\ell_1$ norm of the error has been normalized by
the number of topics ($K=100$).
\begin{figure}[!hbt]
\centering
\begin{minipage}[b]{1\linewidth}
\centering
\includegraphics[scale=0.6]{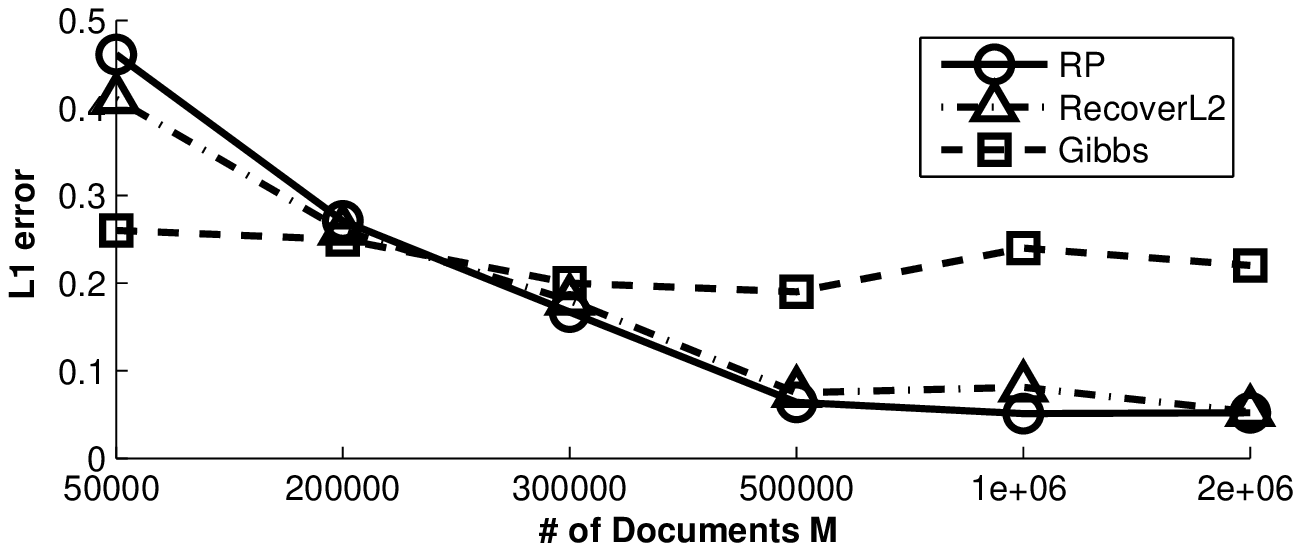}
\end{minipage}
%\quad
\begin{minipage}[b]{1\linewidth}
\centering
\includegraphics[scale=0.6]{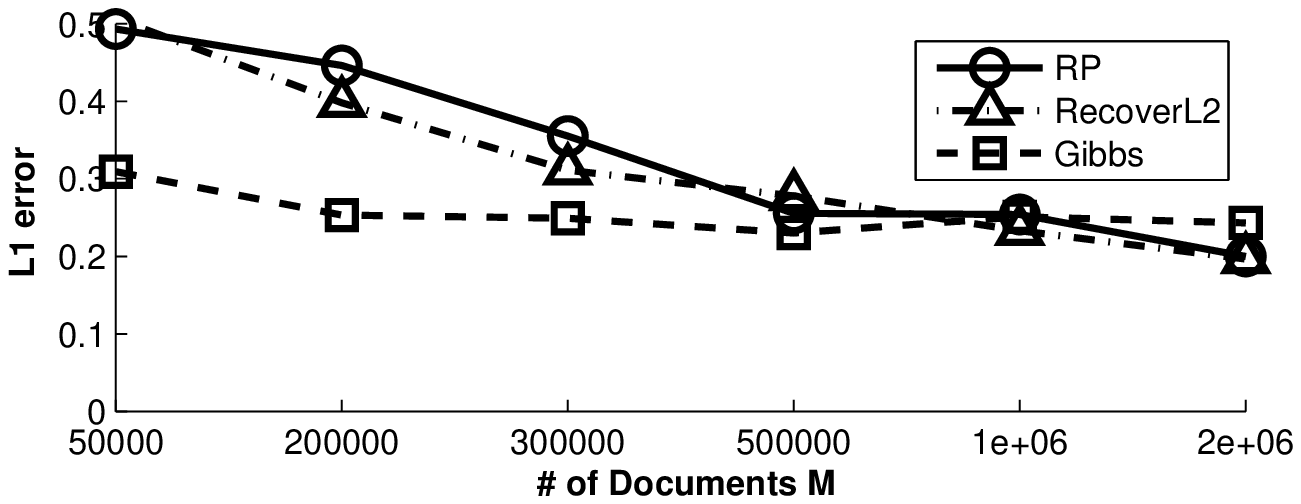}
\end{minipage}
\caption{$\ell_1$ norm of the
%reconstruction
error in estimating the topic matrix $\bm{\beta}$ for various $M$
($K=100$): (Top) {\it Semi-Syn$+$Novel} NYT; (Bottom) {\it Semi-Syn}
NYT. RP is the proposed algorithm, RecoverL2 is a provably efficient
algorithm from \cite{Arora2:ref}, and Gibbs is the Gibbs Sampling
approximation algorithm in \cite{Griffiths04Gibbs:ref}. In RP,
$\numofproj = 150K$, $\zeta = 0.05$, and $\epsilon = 10^-4$.}
\label{plot_synNYT:stat}
\end{figure}

As Fig.~\ref{plot_synNYT:stat} shows, when the separability condition
is strictly satisfied ({\it Semi-Syn$+$Novel} ), the reconstruction
error of RP converges to 0 as $M$ becomes large and outperforms the
approximation-based Gibbs.
When the separability condition is not strictly satisfied ({\it
  Semi-Syn}), the reconstruction error of RP is comparable to Gibbs (a
practical benchmark).

{\noindent\it Solid Angle and Model Selection:}~ 
%For visualization purposes 
In our proposed algorithm RP, the number of topics $K$ (the
model-order) needs to be specified. When $K$ is unavailable, it needs
to be estimated from the data. Although not the focus of this work,
Algorithm~\ref{alg:text:rp}, which identifies novel words by sorting
and clustering the estimated solid angles of words, can be suitably
modified to estimate $K$. 

Indeed, in the ideal scenario where there is no sampling noise ($M =
\infty, \widehat{\mathbf{E}} = \mathbf{E}$, and $\forall i, \hat{q}_i
= q_i$), only novel words have positive solid angles ($\hat{q}_i$'s)
and the rows of $\widehat{\mathbf{E}}$ corresponding to the novel words
of the same topic are identical, i.e., the distance between the rows
is zero or, equivalently, they are within a neighborhood of size zero
of each other.  Thus, the number of distinct neighborhoods of size zero
among the non-zero solid angle words equals $K$. 

In the nonideal case $M$ is finite. If $M$ is sufficiently large, one
can expect that the estimated solid angles of non-novel words will not
all be zero. They are, however, likely to be much smaller than those
of novel words. Thus to reliably estimate $K$ one should not only
exclude words with exactly zero solid angle estimates, but also those
above some nonzero threshold. When $M$ is finite, the the rows of
$\widehat{\mathbf{E}}$ corresponding to the novel words of the same
topic are unlikely to be identical, but if $M$ is sufficiently large
they are likely to be close to each other. Thus, if the threshold
$\zeta$ in Algorithm~\ref{alg:text:rp}, which determines the size of
the neighborhood for clustering all novel words belonging to the same
topic, is made sufficiently small, then each neighborhood will have
only novel words belonging to the same topic.

With the two modifications discussed above, the number of distinct
neighborhoods of a suitably nonzero size (determined by $\zeta > 0$)
among the words whose solid angle estimates are larger than some
threshold $\tau > 0$ will provide an estimate of $K$. The values of
$\tau$ and $\zeta$ should, in principle, decrease to zero as $M$
increases to infinity. Leaving the task of unraveling the dependence
of $\tau$ and $\zeta$ on $M$ to future work, here we only provide a
brief empirical validation on both the {\it Semi-Syn$+$Novel} and {\it
  Semi-Syn} NYT datasets. We set $M=2,000,000$ so that the
reconstruction error has essentially converged (see
Fig.~\ref{plot_synNYT:stat}), and consider different choices of the
threshold $\zeta$.
%that influences size of the neighborhood . 
%We then rank order the words based on the empirical solid angle and
%run Algorithm~\ref{alg:text:rp} until the novel word selection
%procedure terminates.
%

We run Algorithm~\ref{alg:text:rp} with $K=100$, $P=150\times K$, and
a new line of code: 16': ({\bf if} $\{\hat{q}_i = 0\}$, {\bf break});
inserted between lines 16 and 17 (this corresponds to $\tau = 0$).
The input hyperparameter $K=100$ is not the actual number of estimated
topics. It should be interpreted as specifying an upper bound on the
number of topics. The value of (little) $k$ when
Algorithm~\ref{alg:text:rp} terminates (see lines 14--21) provides an
estimate of the number of topics.
\begin{figure*}[!hbt]
\centering
%
%\begin{minipage}[b]{1\linewidth}
\begin{minipage}[b]{1\textwidth}
\centering
\includegraphics[scale=0.6]{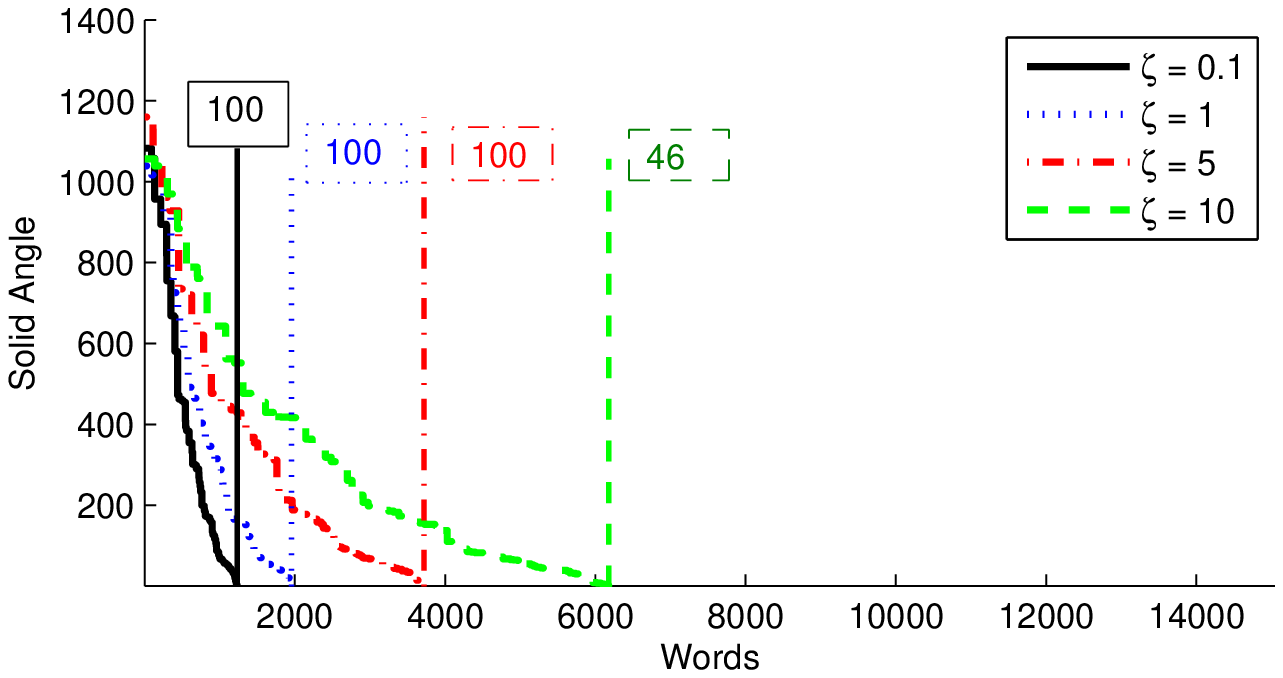}
\includegraphics[scale=0.6]{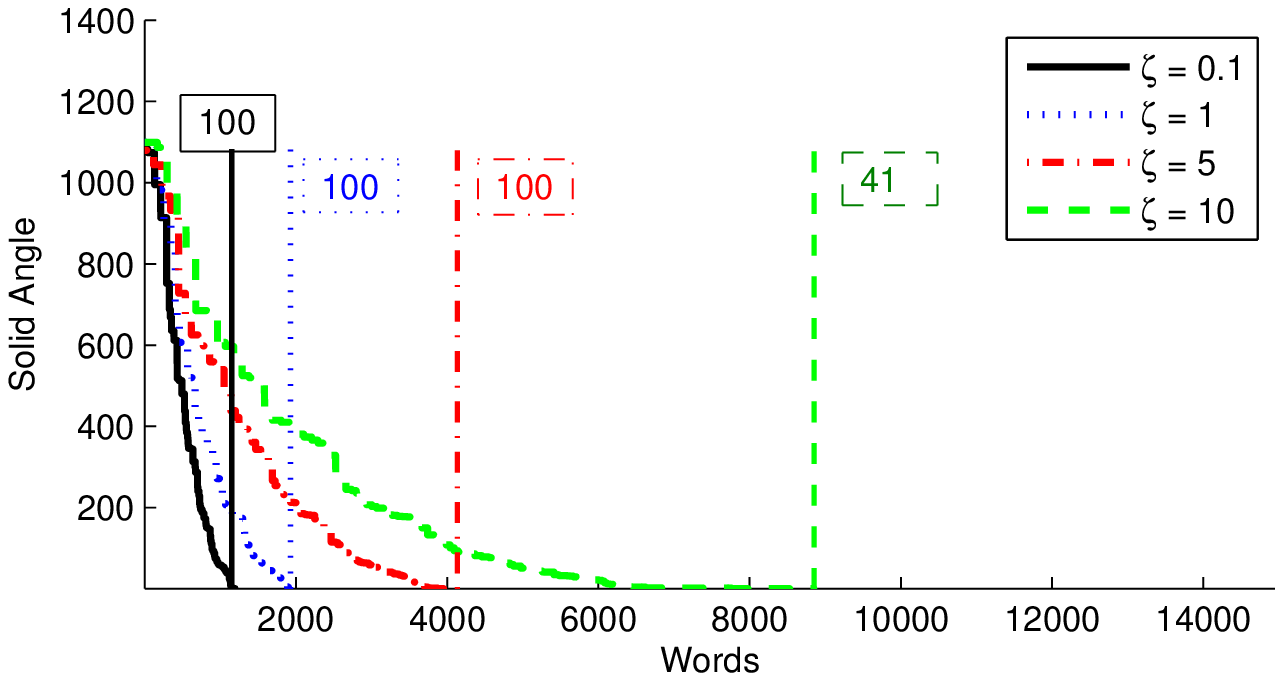}
\end{minipage}
\caption{Solid-angles (in descending order) of all $14943+100$ words
  in the {\it Semi-Syn$+$Sep} NYT dataset (left) and all $14943$ words
  in the {\it Semi-Syn} NYT dataset (right) estimated (for different
  values of $\zeta$) by Algorithm~\ref{alg:text:rp} with $K=100$,
  $P=150\times K$, $M=2,000,000$, and a new line of code: 16': ({\bf
    if} $\{\hat{q}_i = 0\}$, {\bf break}); inserted between lines 16
  and 17.
  The values of $j$ and (little) $k$ when Algorithm~\ref{alg:text:rp}
  terminates are indicated, respectively, by the position of the
  vertical dashed line and the rectangular box next to it for
  different $\zeta$.
%Illustration of novel word selection in terms of sorting and
%clustering of solid angles for the {\it Semi-Syn$+$Sep} NYT
%dataset. For this example $W=14943$ and $K=100$. For each choice of
%clustering parameter $\zeta$, we rank the estimated solid angles of
%all extreme points in descending order. The termination points are
%highlighted along with the value of $K$ at the termination point.
}
\label{plot_solidangle}
%\label{plot_solidangle_semisyn_sep}
%
\end{figure*}
%
%
%%
%\begin{figure}[!hbt]
%%
%\centering
%%
%\begin{minipage}[b]{1\linewidth}
%%
%\centering
%%
%\includegraphics[scale=0.6]{Figures/solidangle_semisyn}
%%
%\end{minipage}
%%
%\caption{An illustration of sorting and clustering based novel word
%  selection in our approach for the {\it Semi-Syn} NYT dataset. For
%  this example $W=14943$ and $K=100$. For each choice of clustering
%  parameter $\zeta$, we rank the estimated solid angles of all extreme
%  points in descending order. The stopping point are highlighted along
%  with the value of $K$ at the stopping point.}
%%
%%\label{plot_solidangle_semisyn}
%%
%\end{figure}
%%
%%

%Figure~\ref{plot_solidangle_semisyn_sep}
%and~\ref{plot_solidangle_semisyn}
Figure~\ref{plot_solidangle} illustrates how the solid angles of all
words, sorted in descending order, decay for different choices of
$\zeta$ and how they can be used to detect the novel words and
estimate the value of $K$. We note that in both the semi-synthetic
datasets, for a wide range of values of $\zeta$ (0.1--5), the modified
Algorithm~\ref{alg:text:rp} correctly estimates the value of $K$ as
$100$. When $\zeta$ is large (e.g., $\zeta=10$ in
Fig.~\ref{plot_solidangle}),
%Fig.~\ref{plot_solidangle_semisyn_sep}
%and~\ref{plot_solidangle_semisyn}), 
many interior points would be declared as novel words and multiple
ideal novel words would be grouped into one cluster resulting. This
causes $K$ to be underestimated (46 and 41 in
Fig.~\ref{plot_solidangle}).

%
%%%%%%%%%%%%%%%%%%%%%%%%%%%
\subsection{Real-world data}
\label{sec:subsection:realdata}
We now describe results on the actual real-world NYT dataset that was
used in Sec.~\ref{sec:subsection:semi} to construct the semi-synthetic
datasets.
%In this dataset, we have $M=300,000;~W=14,943; N\approx 300$
%\cite{UCIdataset:ref}.
%%
%The vocabulary size $W=14,943$ is pruned from the raw observation
%based on document frequencies. Specifically, as is standard practice,
%we keep only words that appear in more than $500$ documents. We then
%delete the stop-word list as recommended in \cite{stopword:ref}.
%
Since ground truth topics are unavailable, we measure performance
using the so-called {\it predictive held-out log-probability}.
This is a standard measure which is typically used to evaluate how
well a learned topic model fits real-world data.
To calculate this for each of the three topic estimation methods
(Gibbs~\cite{Griffiths04Gibbs:ref,McCallumMALLET},
RecoverL2~\cite{Arora2:ref}, and RP), we first randomly select
$60,000$ documents to test the goodness of fit and use the remaining
$240,000$ documents to produce an estimate $\widehat{\bm\beta}$ of the
topic matrix.
Next we assume a Dirichlet prior on the topics and estimate its
concentration hyper-parameter ${\bm\alpha}$. In Gibbs, this estimate
$\widehat{\bm\alpha}$ is a byproduct of the algorithm. In RecoverL2
and RP this can be estimated from $\widehat{\bm\beta}$ and
$\mathbf{X}$
%%%% Insert explanation and reference:%%%
. 
We then calculate the probability of observing the test documents given
the learned topic model $\widehat{\bm\beta}$ and $\widehat{\bm\alpha}$: 
\[
\log\Pr(\mathbf{X}_{\text{test}} \vert \widehat{\bm\beta},
\widehat{\bm\alpha})
\]
Since an exact evaluation of this predictive log-likelihood is
intractable in general, we calculate it using the MCMC based
approximation proposed in \cite{Wallach09:ref} which is now a standard
approximation tool \cite{McCallumMALLET}.
For RP, we use $P=150\times K$, $\zeta = 0.05$, and $\epsilon = 10^-4$
as in Sec.~\ref{sec:subsection:semi}. We report the held-out log
probability, normalized by the total number of words in the test
documents, averaged across 5 training/testing splits. The results are
summarized in Table~\ref{table:nyt-heldout}.
\begin{table}[!hbt]
\caption{\small Normalized held-out log probability of RP, RecoverL2,
  and Gibbs Sampling on NYT test data. The Mean$\pm$STD's are
  calculated from $5$ different random training-testing splits.}
\label{table:nyt-heldout}
\centering
\begin{tabular}{|c|p{2cm}|p{2cm}|p{2cm}|}
\hline
K & RecoverL2 & Gibbs & RP \\
\hline
50 & -8.22$\pm$0.56 & -7.42$\pm$0.45 & -8.54$\pm$0.52 \\
100 & -7.63$\pm$0.52 & -7.50$\pm$0.47 & -7.45$\pm$0.51 \\
150 & -8.03$\pm$0.38 & -7.31$\pm$0.41 & -7.84$\pm$0.48\\
200 & -7.85$\pm$0.40 & -7.34$\pm$0.44 & -7.69$\pm$0.42 \\
\hline
\end{tabular}
\end{table}
%
%We compare RP against RecoverL2\cite{Arora2:ref} and
%Gibbs\cite{Griffiths04Gibbs:ref,McCallumMALLET}. 
As shown in Table~\ref{table:nyt-heldout}, Gibbs has the best
descriptive power for new documents. RP and RecoverL2 have similar,
but somewhat lower values than Gibbs. This may be attributed to
missing novel words that appear only in the test set and are crucial
to the success of RecoverL2 and RP. Specifically, in real-world
examples, there is a model-mismatch as a result of which the data
likelihoods of RP and RecoverL2 suffer.
Finally, we {\it qualitatively} access the topics produced by our RP
algorithm. We show some example topics extracted by RP trained on the
{\it entire} NYT dataset of $M = 300,000$ documents in
Table~\ref{table:nyt-topicwords}
\footnote{The zzz prefix in the NYT vocabulary is used to annotate
  certain special named entities. For example, {zzz\textunderscore
    nfl} annotates NFL. }
\begin{table}[!htb]
\centering
\caption{\small Examples of topics estimated by RP on NYT}
\begin{tabular}{|>{\small}m{0.15\linewidth}|>{\footnotesize}m{0.75\linewidth}|}
\hline
Topic label &  Words in decreasing order of estimated probabilities  \\
\hline \hline
``weather''&    weather wind    air     storm   rain    cold    \\
\hline
``feeling'' &   feeling sense   love    character       heart   emotion \\
\hline
``election''    & election      zzz\textunderscore florida      ballot  vote    zzz\textunderscore al\textunderscore gore       recount \\
\hline
``game''        & yard  game    team    season  play    zzz\textunderscore nfl          \\
\hline
\end{tabular}
\label{table:nyt-topicwords}
\end{table}
For each topic, its most frequent words are listed. As can be seen,
the estimated topics do form recognizable themes that can be assigned
meaningful labels. The full list of all $K=100$ topics estimated on
the NYT dataset can be found in \cite{DDP:ref}.
\section{Conclusion and Discussion}
This paper proposed a provably consistent and efficient algorithm for
topic discovery. We considered a natural structural property -- topic
separability -- on the topic matrix and exploited its geometric
implications.
We resolved the necessary and sufficient conditions that can guarantee
consistent novel words detection as well as separable topic
estimation. We then proposed a random projections based algorithm that
has not only provably polynomial statistical and computational
complexity but also state-of-the-art performance on semi-synthetic and
real-world datasets.
%

%
% Distributed and online variation
% {\noindent\it Efficient Distributed Topic Discovery:}
%
While we focused on the standard {centralized} batch implementation in
this paper, it turns out that our random projections based scheme is
naturally amenable to an efficient {distributed implementation} which
is of interest when the documents are stored on a network of
distributed servers. 
This is because the iid isotropic projection directions can be
precomputed and shared across document servers, and counts,
projections, and co-occurrence matrix computations have an additive
structure which allows partial computations to be performed at each
document server locally and then aggregated at a fusion center with
only a small communication cost.
%
%additive structure of the word co-occurrence matrix representation
%and the projection values makes it possible to approximate the
%empirical solid angles using only ``partial'' projection values that
%are computed locally and transmitted to some fusion center. Hence it
%requires insignificant communication cost.
%
% to achieve the same statistical efficiency of the centralized
% version by insignificant communication.
It turns out that the distributed implementation can provably match
the polynomial computational and statistical efficiency guarantees of
its centralized counterpart. As a consequence, it provides a provably
efficient alternative to the distributed topic estimation problem
which has been tackled using variations of MCMC or Variational-Bayes
in the literature
\cite{yao2009parallel,hoffman2010online,newman2009distributed,asyDisLDA:ref}
This is appealing for modern web-scale databases, e.g., those
generated by Twitter Streaming.
%as evidenced by, for instance, Twitter Streaming.
A comprehensive theoretical and empirical investigation of the
distributed variation of our algorithm can be found in
\cite{Ding14:ref}.
% 

%
% {\noindent\it Model Selection, Inference, and Prediction:}
%%
%In this paper, we only considered the estimation problem in topic
%modeling, i.e., to estimate the model parameter $\bm\beta$.
%%
%It remains open questions to establish theoretical guarantees for
%other important tasks in topic modeling including model selection,
%inference, and prediction.
%%
%The goal of model selection problem is to determine the right number
%of topic $K$. In the context of a separable topic model, the
%robustness measure of solid angle can be used as a statistic to
%determine the number of true extreme points as the number of topic
%$K$. This would be carefully investigated in the future work.
%%
%We further note that the goal of inference problem is to fit the
%document specific topic weights $\bm\theta^{m}$ and the prediction
%problem is to calculate the probability of a new words. It is not
%clear yet how separability can be helpful in establishing theoretical
%guarantees in these problems.
%
%

{\noindent\bf Separability of general measures:}~
We defined and studied the notion of separability for a $W \times K$
topic matrix $\bm\beta$ which is a finite collection of $K$
probability distributions over a finite set (of size $W$). It turns
out that we can extend the notion separability to a finite collection
of measures over a measurable space. This necessitates making a small
technical modification to the definition of separability to
accommodate the possibility of only having ``novel subsets'' that have
zero measure.
We also show that our generalized definition of separability is
equivalent to the so-called {\bf irreducibility} property of a finite
collection of measures that has recently been studied in the context
of mixture models to establish conditions for the identifiability of
the mixing components \cite{blanchard2014:ref, scott2015:ref}.
Consider a collection of $K$ measures $\nu_1,\ldots, \nu_K$ over a
measurable space $(\mathcal{X},\mathcal{F})$, where $\mathcal{X}$ is a
set and $\mathcal{F}$ is a $\sigma$-algebra over $\mathcal{X}$. We
define the generalized notion of separability for measures as follows.
\begin{definition}
\label{def:separability-measure}
{\bf (Separability)} A collection of $K$ measures $\nu_1,\ldots,
\nu_K$ over a measurable space $(\mathcal{X},\mathcal{F})$ is
separable if for all $k = 1,\ldots, K$,
\begin{equation}
\inf\limits_{A\in\mathcal{F}:~\nu_{k}>0} \max\limits_{j:~j\neq k}\frac{\nu_{j}(A)}{\nu_{k}(A)} = 0.
\end{equation}
\end{definition}
Separability requires that for each measure $\nu_k$, there exists a
sequence of measurable sets $A_{n}^{(k)}$, of nonzero measure with
respect to $\nu_k$, such that, for all $j\neq k$, the ratios
$\nu_{j}(A_{n}^{(k)})/\nu_{k}(A_{n}^{(k)})$ vanish
asymptotically. Intuitively, this means that for each measure there
exists a sequence of nonzero-measure measurable subsets that are
asymptotically ``novel'' for that measure. When $\mathcal{X}$ is a
finite set as in topic modeling, this reduces to the existence of
novel words as in Definition~\ref{definiton:separability} and
$A_{n}^{(k)}$ are simply the sets of novel words for topic $k$.
The separability property just defined is equivalent to the so-called
irreducibility property. Informally, a collection of measures is
irreducible if {\it only nonnegative linear combinations of them can
  produce a measure}. Formally,
\begin{definition}
\label{def:irreducible}
{\bf (Irreducibility)}
A collection of $K$ measures $\nu_1,\ldots, \nu_K$ over a measurable
space $(\mathcal{X},\mathcal{F})$ is irreducible if the following
condition holds: 
If $\forall A \in \mathcal{F}$, $\sum_{k=1}^{K}c_k\nu_k(A) \geq 0$,
then for all $k=1,\ldots, K$, $c_k\geq 0$.
%
% $\forall \bm\lambda\in\mathbf{R}^{K}$, if \sum_{k=1}^{K}c_k\nu_k$ is
%a measure, then $c_k\geq 0$ or all $k=1,\ldots, K$.
%
\end{definition} 
For a collection of nonzero measures,\footnote{A measure $\nu$ is
  nonzero if there exists at least one measurable set $A$ for which
  $\nu(A) > 0$.} these two properties are equivalent. Formally,
\begin{lemma}
\label{lem:separable-equiv-irreducible}
A collection of nonzero measures $\nu_1,\ldots, \nu_K$ over a
measurable space $(\mathcal{X},\mathcal{F})$ is irreducible if and
only if it is separable.
In particular, a topic matrix $\bm\beta$ is irreducible if and only if
it is separable.
\end{lemma}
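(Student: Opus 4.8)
The plan is to prove the two implications separately, each by contraposition, so that the combinatorial content of the two definitions collapses into a limiting argument in one direction and a uniform-bound argument in the other. First I would show separability $\Rightarrow$ irreducibility. Suppose the collection is \emph{not} irreducible, so there are constants $c_1,\dots,c_K$ with $\sum_k c_k\nu_k(A)\ge 0$ for every $A\in\mathcal{F}$, yet $c_{k_0}<0$ for some $k_0$. Separability (Definition~\ref{def:separability-measure}) applied at the index $k_0$ furnishes a sequence $A_n$ with $\nu_{k_0}(A_n)>0$ and $\max_{j\neq k_0}\nu_j(A_n)/\nu_{k_0}(A_n)\to 0$. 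Dividing the inequality $\sum_k c_k\nu_k(A_n)\ge 0$ by $\nu_{k_0}(A_n)$ gives
\[
c_{k_0}+\sum_{j\neq k_0}c_j\,\frac{\nu_j(A_n)}{\nu_{k_0}(A_n)}\ge 0 .
\]
Because there are only finitely many $j$ and each ratio is dominated by the vanishing maximum, the sum tends to $0$; letting $n\to\infty$ forces $c_{k_0}\ge 0$, contradicting $c_{k_0}<0$.

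For the converse, irreducibility $\Rightarrow$ separability, I would again argue by contraposition. If separability fails at some $k_0$, then since the infimum in Definition~\ref{def:separability-measure} is taken over nonnegative ratios, its failing to be $0$ means it equals some $\delta>0$; hence $\max_{j\neq k_0}\nu_j(A)/\nu_{k_0}(A)\ge\delta$ for every $A$ with $\nu_{k_0}(A)>0$. The decisive step is to turn this pointwise-in-$A$ bound, whose maximizing index may depend on $A$, into a single linear certificate: bounding the maximum by the sum gives $\sum_{j\neq k_0}\nu_j(A)\ge\delta\,\nu_{k_0}(A)$ whenever $\nu_{k_0}(A)>0$, and this inequality also holds trivially when $\nu_{k_0}(A)=0$ since its right-hand side is then $0$. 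Thus it holds for \emph{all} $A\in\mathcal{F}$. Choosing $c_{k_0}=-\delta$ and $c_j=1$ for $j\neq k_0$ yields $\sum_k c_k\nu_k(A)\ge 0$ for all $A$ with $c_{k_0}<0$, contradicting irreducibility (Definition~\ref{def:irreducible}).

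The main obstacle is this second direction, and specifically the passage from ``for each $A$ some competitor exceeds $\delta\,\nu_{k_0}(A)$'' to a \emph{fixed} nonnegative combination that dominates $\delta\,\nu_{k_0}$ everywhere; the resolution is the simple observation that summing the competitors can only enlarge their maximum, so the per-set choice of index can be discarded. A secondary point I would handle carefully is well-definedness of the ratios and of $\sum_k c_k\nu_k(A)$: stating the result for finite (in particular probability) measures, as in the topic-modeling application, ensures that dividing by $\nu_{k_0}(A)$ and forming the signed combination never incurs an $\infty-\infty$ ambiguity. Finally, the topic-matrix claim follows by specializing to $\mathcal{X}=[W]$ with the $K$ columns of $\bm\beta$ as the measures, where the novel-word supports play the role of the sets $A_n^{(k)}$, so that Definition~\ref{definiton:separability} coincides with Definition~\ref{def:separability-measure} and the equivalence transfers directly.
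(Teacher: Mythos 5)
Your proposal is correct and follows essentially the same route as the paper's proof: the direction ``not separable $\Rightarrow$ not irreducible'' is established by bounding the maximum ratio by the sum to produce the fixed certificate $\sum_{j\neq k_0}\nu_j-\delta\,\nu_{k_0}\ge 0$, and the direction ``separable $\Rightarrow$ irreducible'' by dividing the nonnegative combination by $\nu_{k_0}(A_n)$ along the witnessing sequence and passing to the limit. The only cosmetic difference is that you phrase both directions as contrapositives/contradictions while the paper argues the second direction directly, and you add a brief (reasonable) remark about finiteness of the measures; the mathematical content is identical.
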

The proof appears in Appendix~\ref{sec:irreducibility-proof}.
%

% {\noindent\it Other Mixed Membership Models:}
%
Topic models like LDA discussed in this paper belong to the larger
family of Mixed Membership Latent Variable Models \cite{MMLVM14:ref}
which have been successfully employed in a variety of problems that
include text analysis, genetic analysis, network community detection,
and ranking and preference discovery. The structure-leveraging
approach proposed in this paper can be potentially extended to this
larger family of models. Some initial steps in this direction for rank
and preference data are explored in \cite{topicRank2:ref}.
%
%viewed as an alternative solution to learn this general family of
%latent variable models. We note that recent work have made initial
%steps on several Mixed Membership models for rank and preference data
%\cite{topicRank2:ref}, network data \cite{anandkumar2014tensor}, etc.
%
%An interesting future direction of this paper is to identify suitable
%mixed membership models in which the separability property is natural.
%
%It is also a further direction to identify problems and the separable
%structural property in other mixed membership models.
%

Finally, in this entire paper, the topic matrix is assumed to be
separable. While {\it exact} separability may be an idealization, as
shown in \cite{Ding15HighProb:ref}, approximate separability is both
theoretically inevitable and practically encountered when $W \gg
K$. Extending the results of this work to approximately separable
topic matrices is an interesting direction for future work. Some steps
in this direction are explored in \cite{topicRank3:ref} in the context
of learning mixed membership Mallows models for rankings.

\section*{Acknowledgment}
This article is based upon work supported by the U.S. AFOSR under
award number \# FA9550-10-1-0458 (subaward \# A1795) and the U.S. NSF
under award numbers \# 1527618 and \# 1218992. The views and
conclusions contained in this article are those of the authors and
should not be interpreted as necessarily representing the official
policies, either expressed or implied, of the agencies.

% if have a single appendix:
%\appendix[Proof of the Zonklar Equations]
% or
%\appendix  % for no appendix heading
% do not use \section anymore after \appendix, only \section*
% is possibly needed

% use appendices with more than one appendix
% then use \section to start each appendix
% you must declare a \section before using any
% \subsection or using \label (\appendices by itself
% starts a section numbered zero.)
%
%
% add this to the beginning so that the long equations can be break into different pages.
\allowdisplaybreaks
\appendix
%
%
%
%
%%%%%%%%%%%%%%%%%%%%%%%%%%%%%%%%%%%%%%%%%
\subsection{Proof of Lemma~\ref{lem:simplicial-necessary}}
\begin{proof}
The proof is by contradiction. We will show that if $\bar{{\mathbf R}}$ is non-simplicial, we can construct two topic matrices $\bm{\beta}^{(1)}$ and $\bm{\beta}^{(2)}$ whose sets of novel words are not identical and yet $\mathbf{X}$ has the same distribution under both models. The difference between constructed $\bm{\beta}^{(1)}$ and $\bm{\beta}^{(2)}$ is not a result of column permutation. This will imply the impossibility of consistent novel word detection.

Suppose $\bar{{\mathbf R}}$ is non-simplicial. Then we can assume, without loss of generality, that its first row is within the convex hull of the remaining rows, i.e., $\bar{{\mathbf R}}_1 = \sum_{j=2}^{K}c_j \bar{{\mathbf R}}_j$, where $\bar{{\mathbf R}}_j$ denotes the $j$-th row of $\bar{{\mathbf R}}$, and $c_2,\ldots,c_K \geq 0$, $~\sum_{j=2}^{K}c_j =1$ are convex combination weights. Compactly, $\mathbf{e}^{\top}\bar{{\mathbf R}} \mathbf{e} = 0$ where $\mathbf{e} := \left[ -1, c_2, \ldots, c_K \right]^{\top}$.
Recalling that $\bar{{\mathbf R}} = \diag(\mathbf{a})^{-1}\mathbf{R}\diag(\mathbf{a})^{-1}$, where $\mathbf{a}$ is a positive vector and $\mathbf{R} = \eE(\bm{\theta}^m {\bm{\theta}}^{m\top} )$ by definition, we have
\begin{eqnarray*}
0 &= & \mathbf{e}^{\top}\bar{{\mathbf R}} \mathbf{e} 
= (\diag(\mathbf{a})^{-1}{\mathbf{e}})^{\top} \eE(\bm{\theta}^m {\bm{\theta}}^{m\top}) (\diag(\mathbf{a})^{-1}{\mathbf{e}})\\
&=& \eE ( \Vert \bm{\theta}^{m\top}
\diag(\mathbf{a})^{-1}{\mathbf{e}} \Vert_2^2),
\end{eqnarray*}
which implies that $ \bm{\theta}^{m\top} \diag(\mathbf{a})^{-1}{\mathbf{e}}  \stackrel{a.s.} = 0$.
%
%, or equivalently, $\bm{\theta}_{i1}/a_1 \stackrel{a.s.}=
%\sum\limits_{j=2}^{K} \bm{\theta}_{ij}c_j/a_j$.
%
From this it follows that if we define two nonnegative row vectors $\mathbf{b}_1 := b\left[ a_1^{-1}, 0,\ldots,0 \right]$ and $\mathbf{b}_2 = b \left[(1-\alpha) a_1^{-1}, \alpha c_2  a_2^{-1},\ldots, \alpha c_K a_K^{-1}\right]$, where $b > 0, 0 < \alpha < 1$ are constants, then $\mathbf{b}_1 \bm{\theta}^{m } \stackrel{a.s.} = \mathbf{b}_2 \bm{\theta}^{m }$ for any distribution on $\bm\theta^{m}$.

Now we construct two separable topic matrices $\bm{\beta}^{(1)}$ and $\bm{\beta}^{(2)}$ as follows. Let $\mathbf{b}_1$ be the first row and $\mathbf{b}_2$ be the second in $\bm\beta^{(1)}$. Let $\mathbf{b}_2$ be the first row and $\mathbf{b}_1$ the second in $\bm\beta^{(2)}$. Let $\mathbf{B}\in\rR^{W-2 \times K}$ be a valid separable topic matrix. Set the remaining $(W-2)$ rows of both $\bm{\beta}^{(1)}$ and $\bm{\beta}^{(2)}$ to be $\mathbf{B}(I_K - \diag(\mathbf{b}_1+\mathbf{b}_2))$. We can choose $b$ to be small enough to ensure that each element of $(\mathbf{b}_1+\mathbf{b}_2)$ is strictly less than $1$. This will ensure that $\bm{\beta}^{(1)}$ and $\bm{\beta}^{(2)}$ are column-stochastic and therefore valid separable topic matrices. Observe that $\mathbf{b}_2$ has at lease two nonzero components. Thus, word 1 is novel for $\bm\beta^{(1)}$ but non-novel for $\bm\beta^{(2)}$.

By construction, $\bm{\beta}^{(1)}\bm{\theta} \stackrel{a.s.} =
\bm{\beta}^{(2)} \bm{\theta}$, i.e., the distribution of $\mathbf{X}$ conditioned on $\bm{\theta}$ is the same for both models. Marginalizing over $\bm{\theta}$, the distribution of $\mathbf{X}$ under each topic matrix is the same. Thus no algorithm can consistently distinguish between $\bm{\beta}^{(1)}$ and $\bm{\beta}^{(2)}$ based on $\mathbf{X}$. 
\end{proof} 
\subsection{Proof of Lemma~\ref{lem:affine-necessary}}
\begin{proof}
The proof is by contradiction. Suppose that $\bar{\mathbf{R}}$ is not
affine-independent. Then there exists a $\bm\lambda 
%:= [\lambda_1,\ldots, \lambda_K]^{\top} 
\neq \mathbf{0}$
%which are not all zero, and $\sum_{k}\lambda_{k} = 0$, 
with $\mathbf{1}^\top \bm\lambda = 0$ such that
%$\sum_{k}\lambda_{k}\bar{\mathbf{R}}_k = \mathbf{0}$.
%%
$\bm\lambda^{\top}\bar{\mathbf{R}} = \mathbf{0}$ so that
$\bm\lambda^{\top}\bar{\mathbf{R}}\bm\lambda = 0$. Recalling that
$\bar{\mathbf{R}} =
\diag(\mathbf{a})^{-1}\mathbf{R}\diag(\mathbf{a})^{-1}$, we have,
\begin{eqnarray*}
0 &=& \bm{\lambda}^{\top} \bar{\mathbf{R}} \bm{\lambda} 
= (\diag(\mathbf{a})^{-1}\bm{\lambda} )^{\top} \eE(\bm{\theta}^m
\bm{\theta}^{m\top}) (\diag(\mathbf{a})^{-1}\bm{\lambda}) \\
& =& \eE ( \Vert \bm{\theta}^{m\top}
\diag(\mathbf{a})^{-1}\bm{\lambda} \Vert^2),
\end{eqnarray*}
which implies that $ \bm{\theta}^{m\top}
\diag(\mathbf{a})^{-1}\bm{\lambda} \stackrel{a.s.} = 0$.
%
%, or equivalently, $\bm{\theta}_{i1}/a_1 \stackrel{a.s.}=
%\sum\limits_{j=2}^{K} \bm{\theta}_{ij}c_j/a_j$.
%
Since $\bm\lambda \neq \mathbf{0}$, we can assume, without loss of
generality, that the first $t$ elements of $\bm\lambda$,
%
%Noting that $\lambda_k$'s are not all zero and they sum up to zero,
%then, without loss of generality,
%
$\lambda_1,\ldots,\lambda_t  > 0$, the next $s$ elements of $\lambda$, $\lambda_{t+1},\ldots,\lambda_{t+s} <0$, and the remaining elements are $0$ for some $s,t: s>0, t>0, s+t\leq K$.
%$t>1$, $s>t$, and $s\leq K$. 
%
Therefore, if we define two nonnegative and nonzero row vectors
$\mathbf{b}_1 := b\left[ \lambda_{1} a_1^{-1}, \ldots, \lambda_{t}
  a_t^{-1} 0,\ldots,0 \right]$ and $\mathbf{b}_2 := -b
\left[0,\ldots,0, \lambda_{t+1} a_{t+1}^{-1},\ldots, \lambda_{s}
  a_{s}^{-1}, 0,\ldots, 0\right]$, where $b > 0$ is a constant, then $
\mathbf{b}_1 \bm{\theta}^m \stackrel{a.s.} = \mathbf{b}_2
\bm{\theta}^m $.

Now we construct two topic matrices $\bm{\beta}^{(1)}$ and
$\bm{\beta}^{(2)}$ as follows.  Let $\mathbf{b}_1$ be the first row
and $\mathbf{b}_2$ the second in $\bm\beta_1$. Let $\mathbf{b}_2$ be
the first row and $\mathbf{b}_1$ the second in $\bm\beta_2$.  Let
$\mathbf{B}\in\rR^{W-2 \times K}$ be a valid topic matrix and assume
that it is {\bf separable}. Set the remaining $(W-2)$ rows of both
$\bm{\beta}^{(1)}$ and $\bm{\beta}^{(2)}$ to be $\mathbf{B}(I_K -
\diag(\mathbf{b}_1+\mathbf{b}_2))$. We can choose $b$ to be small
enough to ensure that each element of $(\mathbf{b}_1+\mathbf{b}_2)$ is
strictly less than $1$. This will ensure that $\bm{\beta}^{(1)}$ and
$\bm{\beta}^{(2)}$ are column-stochastic and therefore valid topic
matrices.  We note that the supports of $\mathbf{b}_1$ and
$\mathbf{b}_2$ are disjoint and both are non-empty. They appear in
distinct topics.

By construction, $\bm{\beta}^{(1)}\bm{\theta} \stackrel{a.s.}
=\bm{\beta}^{(2)} \bm{\theta} \Rightarrow$ the distribution of the
observation $\mathbf{X}$ conditioned on $\bm{\theta}$ is the same for
both models. Marginalizing over $\bm{\theta}$, the distributions of
$\mathbf{X}$ under the topic matrices are the same. Thus no algorithm
can distinguish between $\bm{\beta}_1$ and $\bm{\beta}_2$ based on
$\mathbf{X}$.
\end{proof} 
%
%
%%%%%%%%%%%%%%%%%%%%%%%
%
%%%%%%%%%%%%%%%%%%%%%%%
\subsection{Proof of Proposition~\ref{prop:simplicial-vs-affine} and Proposition~\ref{prop:all-conditions}}
Proposition~\ref{prop:simplicial-vs-affine} and
Proposition~\ref{prop:all-conditions} summarizes the relationships
between the full-rank, affine-independence, simplicial, and
diagonal-dominance conditions. Here we consider all the pairwise
implication separately.

{\noindent (1) $\bar{\mathbf{R}}$ is $\gamma_a$-affine-independent
  $\Rightarrow$ $\bar{\mathbf{R}}$ is at least $\gamma_a$-simplicial.}
\begin{proof}
By definition of affine independence, $\Vert \sum_{k=1}^{K}
\lambda_{k} \bar{\mathbf{R}}_k \Vert_2$ $\geq \gamma_a \Vert
\bm\lambda \Vert_2$ $>0$ for all $\bm\lambda\in\mathbb{R}^{K}$ such
that $\sum_{k=1}^{K}\lambda_{k} = 0$ and $\bm\lambda \neq
\mathbf{0}$. If for each $i \in [K]$ we set $\lambda_k = 1$ for $k =
i$ and choose $\lambda_k \leq 0,\ \forall k \neq i$ then (i) $\Vert
\bm\lambda \Vert_2 \geq 1$, (ii) $\{-\lambda_k, k \neq i\}$ are convex
weights, i.e., they are nonnegative and sum to $1$, and (iii)
$\sum_{k=1}^{K} \lambda_{k} \bar{\mathbf{R}}_k = \bar{\mathbf{R}}_i -
\sum_{k\neq i} (-\lambda_{k}) \bar{\mathbf{R}}_k$.
%
%We consider $\bm\lambda = (1, -c_2,\ldots, -c_K)$ where
%$c_2,\ldots,c_K$ are convex weights, i.e., $c_2,\ldots, c_K >0$,
%$\sum_{j=2}^{K} c_j = 1$. Hence $\mathbf{1}^{\top}\bm\lambda =0$ and
%$\bm\lambda \neq \mathbf{0}$. And $\bm\lambda > 1$.
%%
Therefore, for all $i \in [K]$, $\Vert \bar{\mathbf{R}}_i -
\sum_{k\neq i} (-\lambda_{k}) \bar{\mathbf{R}}_k \Vert_2 \geq \gamma_a >
0$ which proves that $\bar{\mathbf{R}}$ is at least
$\gamma_a$-simplicial.

For the reverse implication, consider 
$$\bar{\mathbf{R}} = \begin{bmatrix}
    1 & 0 & 0.5 & 0.5 \\
    0 & 1 & 0.5 & 0.5 \\
    0.5 & 0.5 & 1 & 0 \\
    0.5 & 0.5 & 0 & 1
\end{bmatrix}.
$$
It is simplicial but is not affine independent (the $1, 1, -1, -1$ combination of the 4 rows would be $\mathbf{0}$).
\end{proof}

{\noindent (2) $\bar{\mathbf{R}}$ is full rank with minimum eigenvalue
  $\gamma_r$ $\Rightarrow$ $\bar{\mathbf{R}}$ is at least
  $\gamma_r$-affine-independent.}
\begin{proof}
The Rayleigh-quotient characterization of the minimum eigenvalue of a
symmetric, positive-definite matrix $\bar{\mathbf{R}}$ gives
$\min_{\bm\lambda \neq \mathbf{0}} \Vert \bm\lambda^\top \mathbf{R}
%\sum_{k=1}^{K} \lambda_{k} \bar{\mathbf{R}}_k 
\Vert_2 / \Vert \bm\lambda \Vert_2 = \gamma_r >0$. Therefore,
$\min_{\bm\lambda \neq \mathbf{0}, \mathbf{1}^\top\bm\lambda = 0}
\Vert \bm\lambda^\top \mathbf{R} \Vert_2 / \Vert \bm\lambda \Vert_2
\geq \gamma_r >0$.
%restricting the combination weights $\bm\lambda$ to the set that
%$\mathbf{1}^{\top}\bm\lambda = 0$ also satisfy the above inequality.
%It is straightforward to
One can construct examples that contradict the reverse implication:
$$\bar{\mathbf{R}} = \begin{bmatrix}
    1 & 0 & 1 \\
    0 & 1 & 1 \\
    1 & 1 & 2 
\end{bmatrix}.
$$
which is affine independent, but not linear independent.
\end{proof}

{\noindent (3) $\bar{\mathbf{R}}$ is $\gamma_d$-\-diagonal-\-dominant
  $\Rightarrow$ $\bar{\mathbf{R}}$ is at least $\gamma_d$-simplicial.}
\begin{proof}
Noting that $\bar{\mathbf{R}}_{i,i}- \bar{\mathbf{R}}_{i,j} \geq
\gamma_d > 0$ for all $i,j$, then the distance of the first row of
$\bar{\mathbf{R}}$, $\bar{\mathbf{R}}_1$, to any convex combination of
the remaining rows, $\sum\limits_{j=2}^{K} c_{j}\bar{\mathbf{R}}_j$,
where $c_2,\ldots, c_K$ are convex combination weights, can be lower
bounded by,
$ \Vert \bar{\mathbf{R}}_1 - \sum\limits_{j=2}^{K} c_{j}
\bar{\mathbf{R}}_j \Vert_2 \geq$ $ \vert \bar{\mathbf{R}}_{1,1} -
\sum\limits_{j=2}^{K} c_{j} \bar{\mathbf{R}}_{j,1} \vert = $ $\vert
\sum\limits_{j=2}^{K} c_{j}(\bar{\mathbf{R}}_{1,1} -
\bar{\mathbf{R}}_{j,1}) \vert \geq$ $\gamma_d > 0$.
%
%\begin{align*}
%%
%\Vert \bar{\mathbf{R}}_1 - \sum\limits_{j=2}^{K} c_{j}
%\bar{\mathbf{R}}_j \Vert_2 & \geq \vert \bar{\mathbf{R}}_{1,1} -
%\sum\limits_{j=2}^{K} c_{j} \bar{\mathbf{R}}_{j,1} \vert \\
%%
%& = \vert \sum\limits_{j=2}^{K} c_{j}(\bar{\mathbf{R}}_{1,1} -
%\bar{\mathbf{R}}_{j,1}) \vert \geq \gamma_d >0
%\end{align*}
Therefore, $\bar{\mathbf{R}}$ is at least $\gamma_d$-simplicial.
It is straightforward to construct examples that contradict the reverse
implication: 
$$\bar{\mathbf{R}} = \begin{bmatrix}
    1 & 0 & 1 \\
    0 & 1 & 1 \\
    1 & 1 & 2 
\end{bmatrix}.
$$ which is affine independent, hence simplicial, but not
diagonal-dominant.
\end{proof}

{\noindent (4) $\bar{\mathbf{R}}$ being diagonal-dominant neither
  implies nor is implied by $\bar{\mathbf{R}}$ being
  affine-independent.}

\begin{proof}
Consider the following two examples: 
$$\bar{\mathbf{R}} = \begin{bmatrix}
    1 & 0 & 1 \\
    0 & 1 & 1 \\
    1 & 1 & 2 
\end{bmatrix}.
$$ 
and 
$$\bar{\mathbf{R}} = \begin{bmatrix}
    1 & 0 & 0.5 & 0.5 \\
    0 & 1 & 0.5 & 0.5 \\
    0.5 & 0.5 & 1 & 0 \\
    0.5 & 0.5 & 0 & 1
\end{bmatrix}.
$$
They are the examples for the two sides of this assertion. 
\end{proof}
\subsection{Proof of Lemma~\ref{lem:extreme-A}}
\label{sec:append:thetasimplicial}
% We prove Lemma~\ref{lem:extreme-A} that summarizes the extreme point
%geometry in the row-vector space of $\bar{\mathbf{A}}$. For
%convenience, we define $\mathcal{C}_k$ to be the set of novel words of
%topic $k$, $k=1,\ldots, K$. We also define $\mathcal{C}_0$ as the set
%of all non-novel words.
%
%{\noindent \color{red} Updated Lemma~\ref{lem:extreme-A}}: Let $\bar{\mathbf{R}}$ be $\gamma_s$ simplicial and $\bm\beta$ be separable. Then, with probability at least $1 - 2K\exp(-c_1 M) -\exp(-c_2 M)$, a word $i$ is novel if, and only if, the $i$-th row of $\bar{\mathbf{A}}$ is an extreme point of the convex hull spanned by all the rows of $\bar{\mathbf{A}}$. Here the constant $c_1 := \gamma_s^2 a_{\min}^{4}/4\lambda_{\max}$ and $c_2 := \gamma_s^{4} a_{\min}^{4}/2\lambda_{\max}^{2}$. The model parameters are defined as follows. $a_{\min}$ is the minimum element of $\mathbf{a}$. $\lambda_{\max}$ is the maximum singular-value of $\bar{\mathbf{R}}$. 
%
\begin{proof}
Recall that$\bar{\mathbf{A}}=\bar{\bm{\beta}}\bar{\bm{\theta}}$ where $\bar{\mathbf{A}}$ and $\bar{\bm{\theta}}$ are row-normalized version of $\mathbf{A}$ and $\bm{\theta}$, $\bar{\bm\beta} :=
\diag(\mathbf{A}\mathbf{1})^{-1}\bm\beta \diag(\bm\theta\mathbf{1})$. $\bar{\bm\beta}$ is row-stochastic and is separable if $\bm\beta$ is separable.  
If $w$ is a novel word of topic $k$, $\bar{\beta}_{wk}=1$ and
$\bar{\beta}_{wj}=0,\ \forall j\neq k$. We have then
$\bar{\mathbf{A}}_w = \bar{\bm\theta}_k$. If $w$ is a non-novel word,
$\bar{\mathbf{A}}_w = \sum_{k}\bar{\beta}_{wk}\bar{\bm\theta}_{k}$ is
a convex combination of the rows of $\bar{\bm\theta}$.

We next prove that {\color{edit} if $\bar{\mathbf{R}}$ is $\gamma_s$-simplicial with some constant $\gamma_s > 0$, then, the random matrix $\bar{\bm\theta}$ is also simplicial with high probability, i.e., for any $\mathbf{c}\in\rR^{K}$ such that $c_k=1, c_j\leq 0, j\neq k, \sum_{j\neq k} -c_j=1, k\in[K]$, the $M$-dimensional vector $\mathbf{c}^{\top}\bar{\bm\theta}$ is not all-zero with high probability.} In another words, we need to show that {\color{edit}the maximum absolute value of the $M$ entries in $\mathbf{c}^{\top}\bar{\bm\theta}$ is strictly positive}. Noting that the $m$-th entry of $\mathbf{c}^{\top}\bar{\bm\theta}$ (scaled by $M$) is  
\begin{align*}
M \mathbf{c}^{\top}\bar{\bm\theta}^m = & \mathbf{c}^{\top}\diag(\mathbf{a})^{-1}\bm\theta^m \\
&+ \mathbf{c}^{\top}(\diag({\sum_{d}\bm\theta^{d}/M})^{-1} -\diag(\mathbf{a})^{-1})\bm\theta^m
\end{align*}
\noindent the absolute value can be lower bounded as follows,
\begin{align}
\nonumber \vert M \mathbf{c}^{\top}\bar{\bm\theta}^m \vert \geq & \vert \mathbf{c}^{\top}\diag(\mathbf{a})^{-1}\bm\theta^m  \vert \\
-&\vert \mathbf{c}^{\top}(\diag({\sum_{d}\bm\theta^{d}/M})^{-1} -\diag(\mathbf{a})^{-1})\bm\theta^m \vert 
\label{eqa:simplicial_decomp-2}
\end{align}
The key ideas are: {\color{edit}$(i)$ as $M$ increases, the second term in Eq.~\eqref{eqa:simplicial_decomp-2} converges to $0$, and $(ii)$ the maximum of the first term in Eq.~\eqref{eqa:simplicial_decomp-2} among $m=1,\ldots, M$ is strictly above zero with high probability}. 
For $(i)$, recall that $\mathbf{a} =\eE(\bm\theta^m)$ and $0\leq \theta_{k}^{m}\leq 1$, by Hoeffding's lemma $\forall t > 0$,
$$
\Pr( \Vert \sum_{d}\bm\theta^{d}/M -\mathbf{a} \Vert_{\infty} \geq t )\leq 2K\exp(-2Mt^2)
$$ 
%
%The factor $K$ comes from the union bound for each dimension. 
Also note that $\forall 0<\epsilon <1$,
\begin{align*}
 &\Vert \sum_{d}\bm\theta^{d}/M -\mathbf{a} \Vert_{\infty} \leq \epsilon a_{\min}^2/2 \\
\Rightarrow & \Vert (\diag({\sum_{d}\bm\theta^{d}/M})^{-1} -\diag(\mathbf{a})^{-1})\Vert_\infty \leq \epsilon \\
\Rightarrow & \vert \mathbf{c}^{\top}(\diag({\sum_{d}\bm\theta^{d}/M})^{-1} -\diag(\mathbf{a})^{-1})\bm\theta^m \vert \leq \epsilon
\end{align*}
where $a_{\min}$ is the minimum entry of $\mathbf{a}$. The last inequality is true since $\sum_{k=1}^{K}\theta_{k}^{m}=1$. In sum, we have
\begin{align}
\label{eqa:second-term-simplicial}
\nonumber &\Pr(\vert \mathbf{c}^{\top}(\diag({\sum_{d}\bm\theta^{d}/M})^{-1} -\diag(\mathbf{a})^{-1})\bm\theta^m \vert > \epsilon )\\
&\leq 2K\exp(- M \epsilon^2 a_{\min}^{4}/2)
\end{align}

For $(ii)$, recall that $\bar{\mathbf{R}}$ is $\gamma_s$-simplicial and $\Vert \mathbf{c}^{\top}\bar{\mathbf{R}} \Vert\geq \gamma_s$. Therefore, $\mathbf{c}^{\top}\bar{\mathbf{R}}\mathbf{c} = \mathbf{c}^{\top}\bar{\mathbf{R}}\bar{\mathbf{R}}^{\dagger}\bar{\mathbf{R}}\mathbf{c} \geq \frac{\gamma_s^{2}}{\lambda_{\texttt{max}}}$ where $\lambda_{\texttt{max}}$ is the maximum singular value of $\bar{\mathbf{R}}$. Noting that $\bar{\mathbf{R}} = \diag(\mathbf{a})^{-1}\eE(\bm{\theta}^m\bm{\theta}^{m\top})\diag(\mathbf{a})^{-1}$, we get
\begin{align}
\eE(\vert \mathbf{c}^{\top} \diag(\mathbf{a})^{-1} \bm\theta^m \vert^{2} ) \geq \frac{\gamma_s^{2}}{\lambda_{\texttt{max}}}
\label{eqa:expectation-simplicial}
\end{align}
For convenience, let $x_m := \vert \mathbf{c}^{\top} \diag(\mathbf{a})^{-1} \bm\theta^m \vert^2\leq 1/a_{\texttt{min}}^{2}$. 
Then, by Hoeffding's lemma,  
$$
\Pr(\eE(x_m) - \sum_{m=1}^{M} x_{m}/M \geq \frac{\gamma_{s}^{2}}{2\lambda_{\text{max}}}) \leq \exp(- M \gamma_s^{4} a_{\texttt{min}}^{4} /2\lambda_{\text{max}}^2)
$$
Combining Eq.~\eqref{eqa:expectation-simplicial} we get
$$
\Pr(\sum_{m=1}^{M} x_{m}/M \leq \frac{\gamma_{s}^{2}}{2\lambda_{\text{max}}}) \leq \exp(- M \gamma_s^{4} a_{\texttt{min}}^{4} /2\lambda_{\text{max}}^2)
$$
Hence 
\begin{align}
\Pr( \max_{m=1}^{M} x_m \leq \frac{\gamma_{s}^{2}}{2\lambda_{\text{max}}}) \leq \exp(- M \gamma_s^{4} a_{\texttt{min}}^{4} /2\lambda_{\text{max}}^2)
\label{eqa:first-term-simplicial}
\end{align}
i.e., the maximum absolute value of the first term in Eq.~\eqref{eqa:simplicial_decomp-2} is greater than $\gamma_s/\sqrt{2\lambda_{\max}}$ with high probability. 

To sum up, if we set $\epsilon = \gamma_s/\sqrt{2\lambda_{\max}}$ in Eq.~\eqref{eqa:second-term-simplicial}, we get 
\begin{align*}
\Pr(\max_{m=1}^{M} \vert \mathbf{c}^{\top}\bar{\bm\theta}^m \vert = 0) \leq & \Pr( \max_{m=1}^{M} x_m \leq \frac{\gamma_{s}^{2}}{2\lambda_{\text{max}}})  \\
& + \Pr(\vert \mathbf{c}^{\top}(\diag(\frac{\mathbf{a}}{\sum_{d}\bm\theta_{d}}) -I)\bm\theta^m  \vert > \epsilon)\\
\leq & \exp(- M \gamma_s^{4} a_{\texttt{min}}^{4} /2\lambda_{\text{max}}^{2}) \\
& + 2K\exp(-M \gamma_s^{2}a_{\min}^{4}/4\lambda_{max})
\end{align*}

To summarize, the probability that $\bar{\theta}$ is not simplicial is at most $\exp(- M \gamma_s^{4} a_{\texttt{min}}^{4} /2\lambda_{\text{max}}^{2}) + 2K\exp(-M \gamma_s^{2}a_{\min}^{4}/4\lambda_{max})$. This converges to $0$ exponentially fast as $M \rightarrow \infty$. Therefore, with high probability, all the row-vectors of $\bar{\bm\theta}$ are extreme points of the convex hull they form and this concludes our proof. 
\end{proof}
%
%
%%%%%%%%%%%%%%%%%%%%%%%%
\subsection{Proof of Lemma~\ref{lem:topic-regression}}
%
%{\noindent \color{red} Updated Lemma~\ref{lem:topic-regression}}: Let $\mathbf{A}$ and one novel words for each distinct topic be given. If $\bar{\mathbf{R}}$ is $\gamma_a$ affine-independent, then, with probability at least $1 - 2K\exp(-c_1 M) -\exp(-c_2 M)$, $\bm\beta$ can be recovered uniquely via constrained linear regression. Here the constant $c_1 := \gamma_a^2 a_{\min}^{4}/4\lambda_{\max}$ and $c_2 := \gamma_a^{4} a_{\min}^{4}/2\lambda_{\max}^{2}$. The model parameters are defined as follows. $a_{\min}$ is the minimum element of $\mathbf{a}$. $\lambda_{\max}$ is the maximum singular-value of $\bar{\mathbf{R}}$. 
%
\begin{proof}
{\color{edit} We first show that if $\bar{\mathbf{R}}$ is $\gamma_a$ affine-independent, $\bar{\bm\theta}$ is also affine-independent with high probability, i.e., $\forall \mathbf{c}\in\rR^{K}$ such that $\mathbf{c}\neq \mathbf{0}, \sum_{k}c_k = 0$, $\mathbf{c}^{\top}\bar{\bm\theta}$ is not all-zero vector with high probability.} Our proof is similar to that of Lemma~\ref{lem:extreme-A}.
We first re-write the $m$-th entry of $\mathbf{c}^{\top}\bar{\bm\theta}$ (with some scaling) as,
\begin{align*}
M \mathbf{c}^{\top}\bar{\bm\theta}^m = & \mathbf{c}^{\top}\diag(\mathbf{a})^{-1}\bm\theta^m \\
&+ \mathbf{c}^{\top}(\diag({\sum_{d}\bm\theta^{d}/M})^{-1} -\diag(\mathbf{a})^{-1})\bm\theta^m
\end{align*}
and lower bound its absolute value by 
\begin{align}
\nonumber \vert M\mathbf{c}^{\top}\bar{\bm\theta}^m \vert \geq & \vert \mathbf{c}^{\top}\diag(\mathbf{a})^{-1}\bm\theta^m\vert \\
-& \vert \mathbf{c}^{\top}(\diag({\sum_{d}\bm\theta^{d}/M})^{-1} -\diag(\mathbf{a})^{-1})\bm\theta^m\vert 
\label{eqa:affine_decomp-2}
\end{align}
We will then show that: {\color{edit}$(i)$ as $M$ increases, the second term in Eq.~\eqref{eqa:affine_decomp-2} converges to $0$, and $(ii)$ the maximum of the first term in Eq.~\eqref{eqa:affine_decomp-2} among $M$ iid samples is strictly above zero with high probability.}
For $(i)$, by the Cauchy-Schwartz inequality 
\begin{align*}
&\vert	\mathbf{c}^{\top}(\diag({\sum_{d}\bm\theta^{d}/M})^{-1} -\diag(\mathbf{a})^{-1})\bm\theta^m \vert \\
 \leq & \Vert \mathbf{c} \Vert_{2} \Vert (\diag({\sum_{d}\bm\theta^{d}/M})^{-1} -\diag(\mathbf{a})^{-1})\bm\theta^m \Vert_2 \\
 \leq & \Vert \mathbf{c} \Vert_2 \Vert (\diag({\sum_{d}\bm\theta^{d}/M})^{-1} -\diag(\mathbf{a})^{-1})  \Vert_{\infty} 
\end{align*}
Here the last inequality is true since $\theta_{k}^{m}\leq 1, \sum_{k}\theta_{k}^{m} = 1$. Similar to Eq.~\eqref{eqa:second-term-simplicial}, we have, 
\begin{align}
\nonumber &\Pr(\vert (\diag({\sum_{d}\bm\theta^{d}/M})^{-1} -\diag(\mathbf{a})^{-1})\bm\theta^m \vert \geq \Vert \mathbf{c} \Vert_2 \epsilon ) \\
\leq & 2K\exp(-M\epsilon^2 a_{\text{min}}^{4}/4)
\label{eqa:second-term-affine}
\end{align}
for any $0 < \epsilon < 1$, $a_{\text{min}}$ is the minimum entry of $\mathbf{a}$. For $(ii)$, recall that by definition, $\Vert \mathbf{c}^{\top}\bar{\mathbf{R}} \Vert_2 \geq \gamma_{a}\Vert \mathbf{c} \Vert_2$. Hence $\mathbf{c}^{\top}\bar{\mathbf{R}}\mathbf{c} \geq \gamma_{a}^{2} \Vert \mathbf{c} \Vert_2^{2}/\lambda_{\max}$. Therefore, by the construction of $\bar{\mathbf{R}}$, we have, 
\begin{align}
\eE(\vert \mathbf{c}^{\top}\diag(\mathbf{a})^{-1}\bm\theta^{m} \vert^2/ \Vert \mathbf{c}\Vert_2^{2}) \geq  \frac{\gamma_a^{2}}{\lambda_{\max}}
\end{align}

For convenience, let $x_m := \vert \mathbf{c}^{\top}\diag(\mathbf{a})^{-1}\bm\theta^{m} \vert^2/ \Vert \mathbf{c}\Vert_2^{2} \leq 1/a_{\min}^{2}$. Following the same procedure as in Eq.~\eqref{eqa:first-term-simplicial}, we have,
\begin{equation}
\Pr(\max_{m=1}^{M} x_m \leq \frac{\gamma_a^{2}}{2\lambda_{\max}})\leq \exp(-M\gamma_a^{4}a_{\min}^{4}/2\lambda_{\max}^2)
\end{equation}
Therefore, if we set in Eq.~\eqref{eqa:second-term-affine} $\epsilon = \gamma/\sqrt{2\lambda_{\max}}$, we get, 
\begin{align*}
\Pr(\max_{m=1}^{M} \vert \mathbf{c}^{\top}\bar{\bm\theta}^{m} \vert \leq 0 )\leq & \exp(-M\gamma_a^{4} a_{\min}^{4}/2\lambda_{\max}^2) \\
& + 2K\exp(-M \gamma_a^2 a_{\min}^4/4\lambda_{\max} )
\end{align*}
In summary, if $\bar{\mathbf{R}}$ is $\gamma_a$ affine-independent, $\bar{\bm\theta}$ is also
affine-independent with high probability.

{\color{edit} Now we turn to prove Lemma~\ref{lem:topic-regression}.} By Lemma~\ref{lem:extreme-A}, detecting $K$ distinct novel words for $K$ topics is equivalent to knowing $\bar{\bm\theta}$ up to a row permutation.
Noting that $\bar{\mathbf{A}}_w =
\sum_{k}\bar{\beta}_{wk}\bar{\bm\theta}_{k}$.
it follows that $\bar{\beta}_{wk}, k=1,\ldots, K$ is one optimal
solution to the following constrained optimization problem:
\begin{align*}
\min ~~\Vert \bar{\mathbf{A}}_w - \sum_{k=1}^{K} b_{k}
\bar{\bm\theta}_{k} \Vert^2 ~ \text{s.t} ~~ b_{k} \geq 0,
\sum_{k=1}^{K} b_{k} = 1
\end{align*}

{\color{edit} Since $\bar{\bm\theta}$ is affine-independent with high probability, therefore, this optimal solution is unique with high probability.} If this is not true, then there would exist two distinct solutions $b_1^{1},\ldots,b_{K}^{1}$ and $b_1^{2},\ldots,b_{K}^{2}$ such that $\bar{\mathbf{A}}_w = \sum_{k=1}^{K} b_{k}^{1} \bar{\bm\theta}_{k} = \sum_{k=1}^{K} b_{k}^{2} \bar{\bm\theta}_{k}$. $\sum b_{k}^{1} = \sum
b_{k}^{2} = 1$. We would then obtain
\begin{align*}
\sum_{k=1}^{K} (b_{k}^{1} - b_{k}^{2}) \bar{\bm\theta}_{k} = \mathbf{0} 
\end{align*}
where the coefficients $b_{k}^{1} - b_{k}^{2}$ are not all zero and
$\sum_k b_{k}^{1} - b_{k}^{2} = 0 $. This would contradict the
affine-independence definition.

Finally, we check the renormalization steps.  Recall that since
$\diag(\mathbf{A}\mathbf{1})\bar{\bm\beta} = \bm\beta
\diag(\bm\theta\mathbf{1})$, $\diag(\mathbf{A}\mathbf{1})$ can be
directly obtained from the observations. So we can first renormalize
the rows of $\bar{\bm\beta}$. Removing $\diag(\bm\theta\mathbf{1})$ is
then simply a column renormalization operation (recall that $\bm\beta$
is column-stochastic). It is not necessary to know the exact the value
of $\diag(\bm\theta\mathbf{1})$.

To sum up, by solving a constrained linear regression followed by
suitable row renormalization, we can obtain a unique solution which is
the ground truth topic matrix. This concludes the proof of
Lemma~\ref{lem:topic-regression}.
\end{proof}
%
%
%%%%%%%%%%%%%%%%%%%%%%%%%%%%%%
\subsection{Proof of Lemma~\ref{lem:second-order-convergence}}
Lemma~\ref{lem:second-order-convergence} establishes the second order co-occurrence estimator in Eq.~\eqref{eqa:word-co-occurrence-def}. 
We first provide a generic method to establish the explicit convergence bound for a function $\psi(\mathbf{X})$ of $d$ random variables $X_1,\ldots, X_d$, then apply it to establish Lemma~\ref{lem:second-order-convergence}
\begin{proposition}
\label{prop:convergence}
Let $\mathbf{X} = \left[X_1,\ldots, X_d \right]$ be $d$ random variables and $\mathbf{a}=\left[a_1,\ldots, a_d \right]$ be positive constants. 
Let $\mathcal{E}:=\bigcup\limits_{i \in\mathcal{I}} \lbrace \vert X_i - a_i\vert \geq \delta_i \rbrace$ for some constants $ \delta_i >0$,
and $\psi(\mathbf{X})$ be a continuously differentiable function in
$\mathcal{C} := \mathcal{E}^{c} $.
If for $i=1,\ldots, d$, $\Pr(\vert X_i - a_i \vert \geq \epsilon) \leq f_i(\epsilon)$ are the individual convergence rates  and $\max\limits_{X\in\mathcal{C}}\vert \partial_i \psi(\mathbf{X})\vert \leq C_i$, then, 
\begin{equation*}
\Pr(\vert \psi(\mathbf{X}) - \psi(\mathbf{a})\vert \geq \epsilon)  \leq \sum\limits_{i}f_{i}(\delta_i ) + \sum\limits_{i=1} f_{i}(\frac{\epsilon}{d C_i})
\end{equation*}
\end{proposition}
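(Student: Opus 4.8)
The plan is to combine a union-bound decomposition with a first-order (mean value) estimate of $\psi$ on the ``good'' event $\mathcal{C}$. First I would abbreviate the target event as $G := \{|\psi(\mathbf{X}) - \psi(\mathbf{a})| \geq \epsilon\}$ and split it trivially according to whether we lie in $\mathcal{C}$ or in its complement $\mathcal{E}$, using $G \subseteq \mathcal{E} \cup (G \cap \mathcal{C})$. The first piece is controlled directly: by the union bound and the assumed individual rates,
\[
\Pr(\mathcal{E}) \;\leq\; \sum_{i \in \mathcal{I}} \Pr(|X_i - a_i| \geq \delta_i) \;\leq\; \sum_{i \in \mathcal{I}} f_i(\delta_i) \;\leq\; \sum_i f_i(\delta_i),
\]
where the last step only uses $f_i \geq 0$ to extend the sum over all indices. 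This accounts for the first sum in the claimed bound.

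The core step is to bound $\Pr(G \cap \mathcal{C})$. On $\mathcal{C} = \mathcal{E}^c = \bigcap_{i\in\mathcal{I}}\{|X_i - a_i| < \delta_i\}$, every constrained coordinate satisfies $|X_i - a_i| < \delta_i$, so $\mathcal{C}$ is a convex (box-shaped) neighbourhood containing both $\mathbf{X}$ and the centre $\mathbf{a}$. The line segment joining them therefore stays inside $\mathcal{C}$, where $\psi$ is continuously differentiable with $|\partial_i \psi| \leq C_i$. Applying the multivariate mean value theorem along this segment gives $\psi(\mathbf{X}) - \psi(\mathbf{a}) = \sum_i \partial_i \psi(\xi)\,(X_i - a_i)$ for some intermediate point $\xi \in \mathcal{C}$, and hence
\[
|\psi(\mathbf{X}) - \psi(\mathbf{a})| \;\leq\; \sum_i C_i\,|X_i - a_i| \qquad \text{on } \mathcal{C}.
\]

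From here a pigeonhole argument finishes the proof. If $\sum_i C_i |X_i - a_i| \geq \epsilon$, then at least one term must exceed $\epsilon/d$ (otherwise the sum would be strictly below $d\cdot \epsilon/d = \epsilon$), i.e. $|X_i - a_i| \geq \epsilon/(d C_i)$ for some $i$. Combined with the mean value bound this yields $G \cap \mathcal{C} \subseteq \bigcup_i \{|X_i - a_i| \geq \epsilon/(d C_i)\}$, so a second union bound with the rates $f_i$ contributes $\sum_i f_i\big(\epsilon/(d C_i)\big)$, the second sum in the claim. Adding the two pieces gives the stated inequality. I expect the only delicate point to be the justification of the mean value step: one must guarantee that the segment from $\mathbf{X}$ to $\mathbf{a}$ remains inside the region on which the derivative bounds $C_i$ hold, which is exactly why the convexity of $\mathcal{C}$ (an intersection of slabs) is essential; everything else — the two union bounds and the even split of $\epsilon$ into $d$ equal parts — is routine.
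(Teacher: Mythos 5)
Your proposal is correct and follows essentially the same route as the paper's proof: split on the event $\mathcal{E}$ versus $\mathcal{C}$, bound $\Pr(\mathcal{E})$ by a union bound, apply the mean value theorem along the segment from $\mathbf{a}$ to $\mathbf{X}$ inside the convex region $\mathcal{C}$, and finish with a second union bound after splitting $\epsilon$ into $d$ equal parts. Your explicit justification of why the segment stays in $\mathcal{C}$ (convexity of the intersection of slabs) is a point the paper leaves implicit, but the argument is the same.
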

\begin{proof}
Since $\psi(\mathbf{X})$ is continuously differentiable in
$\mathcal{C}$, $\forall \mathbf{X}\in\mathbf{C}, \exists \lambda \in
(0,1)$ such that
\begin{equation*}
\psi(\mathbf{X}) - \psi(\mathbf{a}) = \nabla^{\top}\psi((1-\lambda)\mathbf{a} + \lambda\mathbf{X})\cdot (\mathbf{X}-\mathbf{a})
\end{equation*}
Therefore, 
\begin{align*}
&\Pr(\vert \psi(\mathbf{X}) - \psi(\mathbf{a})\vert \geq \epsilon) \\
\leq & \Pr(\mathbf{X}\in\mathcal{E}) + \\
& \Pr(\sum\limits_{i=1}^{d}\vert \partial_{i}\psi((1-\lambda)\mathbf{a} + \lambda\mathbf{X})	\vert \vert X_i - a_i \vert\geq \epsilon \vert \mathbf{X}\in\mathcal{C}	)\\
\leq & \sum\limits_{i\in\mathcal{I}} \Pr(\vert X_i - a_i\vert \geq \delta_i) + \\
& \sum\limits_{i=1}^{d}\Pr(\max\limits_{\mathbf{x}\in\mathcal{C}}\vert \partial_i \psi(\mathbf{x}) \vert\vert X_i - a_i\vert \geq \epsilon/d) \\ 
= & \sum\limits_{i\in\mathcal{I}}f_{i}(\delta_i ) + \sum\limits_{i=1} f_{i}(\frac{\epsilon}{d C_i})
\end{align*}
\end{proof}
Now we turn to prove Lemma~\ref{lem:second-order-convergence}. Recall that $\bar{\mathbf{X}}$ and $\bar{\mathbf{X}}^{\prime}$ are obtained from $\mathbf{X}$ by first splitting each user's comparisons into two independent halves and then re-scaling the rows to make them row-stochastic hence
$\bar{\mathbf{X}} = \diag^{-1}(\mathbf{X}\mathbf{1})\mathbf{X}$. 
Also recall that $\bar{\bm\beta} = \diag^{-1}(\bm\beta \mathbf{a}) \bm\beta
\diag(\mathbf{a})$, $\bar{\mathbf{R}}
=\diag^{-1}(\mathbf{a})\mathbf{R}\diag^{-1}(\mathbf{a})$, and $
\bar{\bm\beta}$ is row stochastic. 
For any $1\leq i,j\leq W$,
\begin{align*}
\widehat{E}_{i,j} & = M \frac{1}{\sum\limits_{m=1}^{M} X_{i,m}^{\prime}} (\sum\limits_{m=1}^{M} X_{i,m}^{\prime} X_{j,m} ) \frac{1}{\sum\limits_{m=1}^{M} X_{i,m}} \\
& = \frac{1/M \sum\limits_{m=1}^{M}(X_{i,m}^{\prime} X_{j,m})}{(1/M \sum\limits_{m=1}^{M}X_{i,m}^{\prime} )(1/M \sum\limits_{m=1}^{M} X_{j,m})} \\
& = \frac{\frac{1}{MN^2} \sum\limits_{m=1,n=1,n'=1}^{M,N,N} \mathbb{I}(w_{m,n}=i) \mathbb{I}(w_{m,n'}^{\prime}=j)}{\frac{1}{MN} \sum\limits_{m=1, n=1}^{M,N} \mathbb{I}(w_{m,n} =i)  \frac{1}{MN} \sum\limits_{m=1, n=1}^{M,N} \mathbb{I}(w_{m,n}^{\prime} =i)}\\
 & : = \frac{F_{i,j}(M,N)}{G_{i}(M,N) H_{j}(M,N)}
\end{align*}
From the Strong Law of Large Numbers and the generative topic modeling procedure,
\begin{align*}
& F_{i,j}(M,N) \xrightarrow{a.s.} \eE (\mathbb{I}(w_{m,n}=i) \mathbb{I}(w_{m,n'}^{\prime}=j))\\
&~~~~~~~~~~~~ = (\bm\beta\mathbf{R}\bm\beta^{\top})_{i,j} :=p_{i,j} \\
& G_{i}(M,N) \xrightarrow{a.s.} \eE (\mathbb{I}(w_{m,n}^{\prime}=i) ) = (\bm\beta\mathbf{a})_{i} :=p_{i} \\
& H_{i}(M,N) \xrightarrow{a.s.} \eE (\mathbb{I}(w_{m,n}=j) ) = (\bm\beta\mathbf{a})_{j} :=p_{j}
\end{align*}
and $\frac{(\bm\beta\mathbf{R}\bm\beta^{\top})_{i,j}}{(\bm\beta
  \mathbf{a})_{i} (\bm\beta \mathbf{a})_{j}} = \mathbf{E}_{i,j}$ by
definition.
Using McDiarmid's inequality, we obtain
\begin{align*}
& \Pr(\vert F_{i,j} - p_{i,j} \vert \geq \epsilon )\leq 2\exp(-\epsilon^2MN) \\ 
& \Pr(\vert G_{i} - p_{i} \vert \geq \epsilon )\leq 2\exp(-2\epsilon^2MN) \\
& \Pr(\vert H_{j} - p_{j} \vert \geq \epsilon )\leq 2\exp(-2\epsilon^2MN) 
\end{align*}
In order to calculate $\Pr\lbrace \vert \frac{F_{i,j}}{G_{i} H_{j}} -
\frac{p_{i,j}}{p_{i} p_{j}} \vert \geq \epsilon \rbrace$, we apply the
results from Proposition~\ref{prop:convergence}.
Let $\psi(x_1, x_2, x_3) = \frac{x_1}{x_2 x_3}$ with $x_1,x_2,x_3 >0$,
and $a_1 = p_{i,j}$, $a_2 = p_i$, $a_3= p_j$.
Let $\mathcal{I}=\{2,3\}$, $\delta_2 = \gamma p_i$, and $\delta_3 =
\gamma p_j$.
Then $\vert \partial_1\psi \vert = \frac{1}{x_2 x_3} $, $\vert
\partial_2\psi \vert = \frac{x_1}{x_2^{2} x_3} $, and $\vert
\partial_3\psi \vert = \frac{x_1}{x_2 x_3^{2}} $.
If $F_{i,j} = x_1$, $G_i = x_2$, and $H_j = x_3$, then $F_{i,j} \leq
G_i$, $F_{i,j}\leq H_j$. Then note that
\begin{align*}
C_1 &= \max_{\mathcal{C}} \vert \partial_1\psi \vert =  \max_{\mathcal{C}}  \frac{1}{G_i H_j} \leq \frac{1}{(1-\gamma)^{2}p_i p_j} \\
C_2 &= \max_{\mathcal{C}} \vert \partial_2\psi \vert =  \max_{\mathcal{C}}  \frac{F_{i,j}}{G_i^{2} H_j} \leq   \max_{\mathcal{C}}  \frac{1}{G_i H_j} \leq \frac{1}{(1-\gamma)^{2}p_i p_j} \\
C_3 &= \max_{\mathcal{C}} \vert \partial_3\psi \vert =  \max_{\mathcal{C}}  \frac{F_{i,j}}{G_i H_j^{2}} \leq   \max_{\mathcal{C}}  \frac{1}{G_i H_j}\leq \frac{1}{(1-\gamma)^{2}p_i p_j}
\end{align*}
By applying Proposition~\ref{prop:convergence}, we get
\begin{align*}
& \Pr\lbrace \vert \frac{F_{i,j}}{G_{i} H_{j}} - \frac{p_{i,j}}{p_{i} p_{j}} \vert \geq \epsilon \rbrace \\
\leq & \exp(-2\gamma^{2}p_i^{2} MN) + \exp(-2\gamma^{2}p_j^{2} MN) \\
& + 2\exp (-\epsilon^{2} (1-\gamma)^{4}(p_i p_j)^{2} MN/9)\\
& + 4 \exp (-2\epsilon^{2} (1-\gamma)^{4}(p_i p_j)^{2} MN/9) \\ 
\leq & 2\exp(-2\gamma^{2}\eta^{2}MN) + 6\exp(-\epsilon^{2} (1-\gamma)^{4}\eta^{4} MN/9)
\end{align*}
where $\eta = \min_{1\leq i \leq W} p_i$. 
There are many strategies for optimizing the free parameter
$\gamma$. We set $2\gamma^{2} = \frac{(1-\gamma)^{4}}{9}$ and solve
for $\gamma$ to obtain
\begin{align*}
 \Pr\lbrace \vert \frac{F_{i,j}}{G_{i} H_{j}} - \frac{p_{i,j}}{p_{i} p_{j}} \vert \geq \epsilon \rbrace \leq  8\exp(-\epsilon^{2}\eta^{4} MN/20)
\end{align*}
Finally, by applying the union bound to the $W^2$ entries in
$\widehat{\mathbf{E}}$, we obtain the claimed result. 
%
%%%%%%%%%%%%%%%%%%%%%%%%%%%%%%%%%%%%
\subsection{Proof of Lemma~\ref{lem:extreme-E}}
\begin{proof}
We first show that when $\bar{\mathbf R}$ is $\gamma_s$ simplicial and 
$\bm\beta$ is separable, then $\mathbf{Y}= \bar{\mathbf R} \bar{\bm 
\beta}^{\top}$ is at least $\gamma_s$-simplicial. 
Without loss of generality we assume that word $1,\ldots, K$ are the 
novel words for topic $1$ to $K$. By definition, $\bar{\bm \beta}
^{\top} = \left[ \mathbf{I}_{K}, \mathbf{B} \right]$ hence $\mathbf{Y}= 
\bar{\mathbf R} \bar{\bm \beta}^{\top} = \left[ \bar{\mathbf R}, 
\bar{\mathbf R}\mathbf{B} \right] $. Therefore, for convex combination 
weights $c_2,\ldots,c_K\geq 0$ such that $\sum_{j=2}^{K}c_j = 1$, 
\begin{align*}
\Vert \mathbf{Y}_1 -\sum_{j=2}^{K} c_j \mathbf{Y}_j  \Vert \geq \Vert \bar{\mathbf R}_1 -\sum_{j=2}^{K} c_j \bar{\mathbf R}_j  \Vert \geq \gamma_s>0
\end{align*}
Therefore the first row vector $\mathbf{Y}_1$ is at least $\gamma_s$ 
distant away from the convex hull of the remaining rows. Similarly, any 
row of $\mathbf{Y}$ is at least $\gamma_s$ distant away from the convex 
hull of the remaining rows hence $\mathbf{Y}$ is at least $\gamma_s$ 
simplicial. 
The rest of the proof will be exactly the same as for Lemma~\ref{lem:extreme-E}. 
\end{proof}
%
%
%
%%%%%%%%%%%%%%%%%%%%%%%%%%%%%%%%%%%%
\subsection{Proof of Lemma~\ref{lem:topic-regression-E}}
\begin{proof}
We first show that when $\bar{\mathbf R}$ is $\gamma_a$ affine independent and $\bm\beta$ is separable, then $\mathbf{Y}= \bar{\mathbf R} \bar{\bm \beta}^{\top}$ is at least $\gamma_a$ affine independent. Similarly as in the proof of Lemma~\ref{lem:extreme-E},  we assume that word $1,\ldots, K$ are the novel words for topic $1$ to $K$. By definition, $\bar{\bm \beta}^{\top} = \left[ \mathbf{I}_{K}, \mathbf{B} \right]$ hence $\mathbf{Y}= \bar{\mathbf R} \bar{\bm \beta}^{\top} = \left[ \bar{\mathbf R}, \bar{\mathbf R}\mathbf{B} \right] $. $\forall \bm\lambda\in\mathbb{R}^{K}$ such that $\bm\lambda\neq\mathbf{0}$, $\sum_{k=1}^{K}\lambda_k = 0$, then, 
\begin{align*}
\Vert \sum_{k=1}^{K} \mathbf{Y}_k \Vert_2 / \Vert \bm\lambda \Vert_2 \geq \Vert \sum_{k=1}^{K} \bar{\mathbf{R}}_{k} \Vert_2 / \Vert \bm\lambda \Vert_2 \geq \gamma_a
\end{align*}
Hence $\mathbf{Y}$ is affine independent. The The rest of the proof will be exactly the same as that for Lemma~\ref{lem:topic-regression}. 
We note that once the novel words for $K$ topics are detection, we can use only the corresponding columns of $\mathbf{E}$ for linear regression. Formally, let $\mathbf{E}^{*}$ be the $W\times K$ matrix formed by the columns of the $\mathbf{E}$ that correspond to $K$ distinct novel words. Then, $\mathbf{E}^{*} = \bar{\bm\beta} \bar{\mathbf{R}}$. The rest of the proof is again the same as that for Lemma~\ref{lem:topic-regression}.  
\end{proof}
%
%
%%%%%%%%%%%%%%%%%%%%%%%%%%%%%%%%%%%%
\subsection{Proof of Lemma~\ref{lem:extreme-solid-angle} }
\begin{proof}
We first check that if $q_{w}>0$, $w$ must be a novel word. Without loss of generality let word $1,\ldots, K$ be novel words for $K$ distinct topics. $\forall w$, $\mathbf{E}_{w} = \sum\bar{\beta}_{wk}\mathbf{E}_{k}$. $\forall \mathbf{d}\in\mathbb{R}^{W}$, 
\begin{align*}
\langle \mathbf{E}_{w}, \mathbf{d}\rangle =  \sum\bar{\beta}_{wk}\langle \mathbf{E}_{k} , \mathbf{d}\rangle \leq \max_{k} \langle \mathbf{E}_{k} , \mathbf{d}\rangle
\end{align*}
and the last equality holds if, and only if, there exist some $k$ such that $\bar{\beta}_{wk}=1$ which implies $w$ is a novel words.

We then show that for a novel word $w$, $q_{w}>0$. We need to show for each topic $k$, when $\mathbf{d}$ is sampled from an isotropic distribution in $\mathbf{R}^{W}$, there exist a set of directions $\mathbf{d}$ with nonzero probability such that $\langle \mathbf{E}_{k}, \mathbf{d}\rangle > \langle \mathbf{E}_{l}, \mathbf{d}\rangle $ for $l = 1,\ldots, K, l\neq k$. 
First, one can check by definition that  $\mathbf{Y}=(\mathbf{E}_1^{\top},\ldots, \mathbf{E}_{K}^{\top})^{\top} = \bar{\mathbf{R}}\bar{\bm\beta}^{\top}$ is at least $\gamma_s$-simplicial if $\bar{\mathbf{R}}$ is $\gamma_s$-simplicial. Let $\mathbf{E}_1^{*}$ be the projection of $\mathbf{E}_1$ onto the simplex formed by the remaining row vectors $\mathbf{E}_2,\ldots, \mathbf{E}_K$.
By the orthogonality principle,
$
\langle \mathbf{E}_{1} - \mathbf{E}_{1}^{*}, \mathbf{E}_{k} - \mathbf{E}_{1}^{*} \rangle \leq 0
$
for $k=2,\ldots, K$. Therefore, for $\mathbf{d}^1 = \mathbf{E}_{1}^{\top} - \mathbf{E}_{1}^{*\top}$, 
\begin{align*}
& \mathbf{E}_{1}\mathbf{d}^1 - \mathbf{E}_{k}\mathbf{d}^1  
= \Vert \mathbf{d}^1 \Vert^{2} - (\mathbf{E}_{k} -  \mathbf{E}_{1}^{*})\mathbf{d}^1 \geq \gamma_s^{2}  >  0
\end{align*}
Due to the continuity of the inner product, there exist a neighbor on the unite sphere around $\mathbf{d}^1/\Vert \mathbf{d}^1 \Vert_2$ that $\mathbf{E}_1$ has maximum projection value. This conclude our proof.  
\end{proof}
%
%
%%%%%%%%%%%%%%%%%%%%%%%%%%%%%%%%%%%%%%%%%%%
\subsection{Proof of Theorem~\ref{thm:computation}}
\begin{proof}
We first consider the random projection steps (step 3 to 12 in Alg.~\ref{alg:text:rp}).  For projection along direction $\mathbf{d}^{r}$, we first calculate projection values $\mathbf{r} = \bar{\mathbf{X}}^{\prime}\bar{\mathbf{X}}^{\top}\mathbf{d}^{r}$, find the maximizer index $i^*$ and the corresponding set $\hat{J}_{i^*}$, and then evaluate $\mathbb{I}(\forall j \in \hat{J}_{w}, v_{w} > v_{j})$ for all the words $w$ in $\hat{J}_{i^*}^c = \{1, \ldots, W\} \setminus \hat{J}_{i^*}$.  
$(I)$ The set $\hat{J}_{i^*}^c$ have up to $|\mathcal{C}_k|$ elements asymptotically, where $k$ is the topic associated with word $i^{*}$. This is considered a small constant $\mathcal{O}(1)$;  
$(II)$ Note that $\widehat{\mathbf E}\mathbf{d}_{r} = M \bar{\mathbf X}^{\prime} (\bar{\mathbf X}^{\top} \mathbf{d}_{r})$ and each column of $\bar{\mathbf X}$ has at most $N\ll W$ nonzero entries. Calculating the $W\times 1$ projection value vector $\mathbf{v}$ requires two sparse matrix-vector multiplications and takes $\mathcal{O}(M N)$ time. Finding the maximum requires $\mathbf{W}$ running time;  
$(III)$ To evaluate one set $\hat{J}_{i} \leftarrow \{ j : \widehat{E}_{i,i} + \widehat{E}_{j,j} - 2\widehat{E}_{i,j} \geq \zeta/2 \}$ we need to calculate $\widehat{E}_{i,j}, j=1,\ldots,W$. This can be viewed as projecting $\widehat{\mathbf{E}}$ along $\mathbf{d} = \mathbf{e}_{i}$ and takes $\mathcal{O}(MN)$. We also note that the diagonal entries $\mathbf{E}_{w,w}, w=1,\ldots, W$ can be calculated once using $\mathcal{O}(W)$ time.  To sum up, these steps takes $\mathcal{O}(MNP + WP)$ running time. 

We then consider the detecting and clustering steps (step 14 to 21 in Alg.~\ref{alg:text:rp}). We note that all the conditions in Step~17 have been calculated in the previous steps, and recall that the number of novel words are small constant per topic, then, this step will require a running time of $\mathcal{O}(K^2)$. 

We last consider the topic estimation steps in Algorithm~\ref{alg:text:esttopic1}. Here all the corresponding inputs for the linear regression have already been computed in the projection step. Each linear regression has $K$ variables and we upper bound its running time by $\mathcal{O}(K^3)$. Calculating the row-normalization factors $\frac{1}{M}\mathbf{X}\mathbf{1}$ requires $\mathcal{O}(MN)$ time. The row and column re-normalization each requires at most $\mathcal{O}(WK)$ running time. Overall, we need a $\mathcal{O}(WK^3 + MN)$ running time.  

Other steps are also efficient. Splitting each document into two independent halves takes linear time in $N$ for each document since we can achieve it using random permutation over $N$ items.
To generate each random direction $\mathbf{d}_{r}$ requires $\mathcal{O}(W)$ complexity if we use the spherical Gaussian prior. While we can directly sort the empirical estimated solid angles (in $\mathcal{O}(W\log(W))$ time), we only search for the words with largest solid angles whose number is a constant w.r.t $W$, therefore it would take only $\mathcal{O}(W)$ time. 
\end{proof}
\subsection{Proof of Theorem~\ref{thm:novel-word-detection}}
We focus on the case when the random projection directions are sampled
from {\bf any} isotropic distribution. Our proof is not tied to
the special form of the distribution; just its isotropic nature.
We first provide some useful propositions. We denote by
$\mathcal{C}_k$ the set of all novel word of topic $k$, for $k\in
[K]$, and denote by $\mathcal{C}_0$ the set of all non-novel words. We
first show,
\begin{proposition}
\label{prop:similarity}
Let $\mathbf{E}_i$ be the $i$-th row of $\mathbf{E}$. Suppose
$\bm\beta$ is separable and $\bar{\mathbf{R}}$ is
$\gamma_s$-simplicial, then the following is true: For all $k \in [K]$,
\begin{table}[H]
\centering
\begin{tabular}{|c|c|c|}
\hline 
 & $\Vert \mathbf{E}_i - \mathbf{E}_j \Vert$ & $E_{i,i}-2E_{i,j}+E_{j,j}$ \\ 
\hline 
$i\in\mathcal{C}_k, j \in\mathcal{C}_k$ & $0$ & $0$ \\ 
\hline 
$i\in\mathcal{C}_k, j \notin\mathcal{C}_k$ & $\geq (1-b)\gamma_s$ & $\geq (1-b)^{2}\gamma_s^{2}/\lambda_{\max}$ \\ 
\hline 
\end{tabular} 
\end{table}
\noindent where $b=\max_{j\in\mathcal{C}_0,l} \bar{\beta}_{j,l}
$ and $\lambda_{\max} > 0$ is the maximum eigenvalue of
$\bar{\mathbf{R}}$
\end{proposition}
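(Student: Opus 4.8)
The plan is to reduce every quantity in the table to the quadratic form associated with $\bar{\mathbf{R}}$. First I would write $\mathbf{E}_i = \bar{\bm\beta}_i\bar{\mathbf{R}}\bar{\bm\beta}^\top$ so that the individual entries are $E_{i,j} = \bar{\bm\beta}_i\bar{\mathbf{R}}\bar{\bm\beta}_j^\top$, where $\bar{\bm\beta}_i$ denotes the $i$-th row of the row-stochastic separable matrix $\bar{\bm\beta}$. The key observation is that for a novel word $i$ of topic $k$, separability together with row-stochasticity forces $\bar{\bm\beta}_i = \mathbf{e}_k^\top$. Hence when $i,j\in\mathcal{C}_k$ we have $\bar{\bm\beta}_i=\bar{\bm\beta}_j=\mathbf{e}_k^\top$, so $\mathbf{E}_i=\mathbf{E}_j$ and both entries of the first row of the table vanish immediately.

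For the second row, set $\bm\lambda:=(\bar{\bm\beta}_i-\bar{\bm\beta}_j)^\top$ and note $\mathbf{1}^\top\bm\lambda=0$ since both rows are stochastic. Using symmetry of $\bar{\mathbf{R}}$, a direct expansion gives the identity $E_{i,i}-2E_{i,j}+E_{j,j}=\bm\lambda^\top\bar{\mathbf{R}}\bm\lambda$ as well as $\mathbf{E}_i-\mathbf{E}_j=\bm\lambda^\top\bar{\mathbf{R}}\bar{\bm\beta}^\top$. I would first lower-bound $\Vert\bm\lambda^\top\bar{\mathbf{R}}\Vert$. Writing $\bm\lambda^\top\bar{\mathbf{R}}=\bar{\mathbf{R}}_k-\sum_l\bar{\beta}_{j,l}\bar{\mathbf{R}}_l$ and letting $\mu:=\bar{\beta}_{j,k}$, this term equals $(1-\mu)\bar{\mathbf{R}}_k-\sum_{l\neq k}\bar{\beta}_{j,l}\bar{\mathbf{R}}_l$; factoring out $(1-\mu)$ exhibits $\{\bar{\beta}_{j,l}/(1-\mu)\}_{l\neq k}$ as genuine convex weights, so the simplicial condition yields $\Vert\bm\lambda^\top\bar{\mathbf{R}}\Vert\geq(1-\mu)\gamma_s$. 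Since $j\notin\mathcal{C}_k$ we always have $\mu\leq b$ (it is $0$ when $j$ is novel to another topic, and at most $b$ when $j$ is non-novel), giving $\Vert\bm\lambda^\top\bar{\mathbf{R}}\Vert\geq(1-b)\gamma_s$.

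The two bounds then follow. For $\Vert\mathbf{E}_i-\mathbf{E}_j\Vert$ I would restrict the vector $\bm\lambda^\top\bar{\mathbf{R}}\bar{\bm\beta}^\top$ to the $K$ columns indexed by one novel word per topic; on these columns $\bar{\bm\beta}^\top$ restricts to $\mathbf{I}_K$, so the restricted vector equals $\bm\lambda^\top\bar{\mathbf{R}}$ and hence $\Vert\mathbf{E}_i-\mathbf{E}_j\Vert\geq\Vert\bm\lambda^\top\bar{\mathbf{R}}\Vert\geq(1-b)\gamma_s$. For the quadratic form I would use the pseudoinverse identity $\bm\lambda^\top\bar{\mathbf{R}}\bm\lambda=(\bar{\mathbf{R}}\bm\lambda)^\top\bar{\mathbf{R}}^\dagger(\bar{\mathbf{R}}\bm\lambda)$, valid because $\bar{\mathbf{R}}\bm\lambda$ lies in the range of $\bar{\mathbf{R}}$, and bound the pseudoinverse quadratic form below by $\Vert\bar{\mathbf{R}}\bm\lambda\Vert^2/\lambda_{\max}$, obtaining $E_{i,i}-2E_{i,j}+E_{j,j}\geq(1-b)^2\gamma_s^2/\lambda_{\max}$. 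This is exactly the pseudoinverse trick already used in the proof of Lemma~\ref{lem:extreme-A}.

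I expect the main obstacle to be the bookkeeping in the convex-combination reduction: one must verify that factoring out $(1-\mu)$ legitimately turns $\bar{\mathbf{R}}_k-\sum_l\bar{\beta}_{j,l}\bar{\mathbf{R}}_l$ into a scaled distance-to-convex-hull, which requires $\mu<1$ and the uniform bound $\mu\leq b$ over all $j\notin\mathcal{C}_k$, covering both the ``novel word of another topic'' and ``non-novel word'' subcases. Everything else is routine quadratic-form manipulation.
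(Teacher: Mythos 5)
Your proposal is correct and follows essentially the same route as the paper: both reduce the two quantities to $(\bar{\bm\beta}_i-\bar{\bm\beta}_j)\bar{\mathbf{R}}\bar{\bm\beta}^\top$ and $(\bar{\bm\beta}_i-\bar{\bm\beta}_j)\bar{\mathbf{R}}(\bar{\bm\beta}_i-\bar{\bm\beta}_j)^\top$, factor out $(1-\bar{\beta}_{j,k})\geq 1-b$ to expose a convex-hull distance bounded by $\gamma_s$, and bound the quadratic form via $\bar{\mathbf{R}}^\dagger$ (the paper writes the same pseudoinverse step through the SVD $\mathbf{U}\Sigma^{-1}\mathbf{U}^\top$). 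Your restriction to the novel-word columns where $\bar{\bm\beta}^\top$ becomes $\mathbf{I}_K$ is exactly how the paper establishes that $\bar{\mathbf{R}}\bar{\bm\beta}^\top$ inherits $\gamma_s$-simpliciality, so the arguments coincide.
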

\begin{proof}
We focus on the case $k=1$ since the proofs for other values of $k$
are analogous.
Let $\bar{\bm\beta}_{i}$ be the $i$-th row vector of matrix $\bar{\bm\beta}$. 
To show the above results, recall that $\mathbf{E} =
\bar{\bm\beta}\bar{\mathbf{R}}\bar{\bm\beta}^{\top}$. Then
\begin{align*}
& \Vert \mathbf{E}_i - \mathbf{E}_j \Vert =\Vert (\bar{\bm\beta}_{i} - \bar{\bm\beta}_{j}) \bar{\mathbf{R}} \bar{\bm\beta}^{\top}\Vert \\ 
& E_{i,i}-2E_{i,j}+E_{j,j} = (\bar{\bm\beta}_{i} - \bar{\bm\beta}_{j}) \mathbf{R}^{\prime} (\bar{\bm\beta}_{i} - \bar{\bm\beta}_{j})^{\top}.
\end{align*}
It is clear that when $i,j\in\mathcal{C}_1$, i.e., they are both novel word for the same topic, $\bar{\bm\beta}_{i} = \bar{\bm\beta}_{j} = \mathbf{e}_1$. Hence, $\Vert \mathbf{E}_i - \mathbf{E}_j \Vert = 0 $ and $E_{i,i}-2E_{i,j}+E_{j,j} = 0$.
When $i\in\mathcal{C}_1, j\notin \mathcal{C}_1$, we have $\bar{\bm\beta}_{i} = [1,0,\ldots,0]$,  $\bar{\bm\beta}_{j}= [\bar{\beta}_{j,i}, \bar{\beta}_{j,2},\ldots, \bar{\beta}_{j,K}]$ with $\bar{\beta}_{j,1} < 1$.
Then,
\begin{align*}
\bar{\bm\beta}_{i} - \bar{\bm\beta}_{j} & = [1-\bar{\beta}_{j,i}, -\bar{\beta}_{j,2},\ldots, -\bar{\beta}_{j,K}] \\
& = (1-\bar{\beta}_{j,i})[1,-c_2,\ldots, -c_K ] := (1-\bar{\beta}_{j,i}) \mathbf{e}^{\top}
\end{align*}
and $\sum_{l=2}^{K}c_{l} =1$.
Therefore, defining $\mathbf{Y} := \bar{\mathbf{R}}\bar{\bm\beta}^{\top}$, we get
\begin{align*}
\Vert \mathbf{E}_i - \mathbf{E}_j \Vert_2 =  (1-\bar{\beta}_{j,i})\Vert \mathbf{Y}_1 - \sum\limits_{l=2}^{K} c_{l} \mathbf{Y}_{l}	\Vert_2
% 
% \geq  (1-\bar{B}_{j,1}) \gamma
\end{align*}
Noting that $\mathbf{Y}$ is at least $\gamma_s$-simplicial, we have
$
\Vert \mathbf{E}_i - \mathbf{E}_j \Vert_2  \geq  (1-b) \gamma_s
$
where $b=\max_{j\in\mathcal{C}_0,k} \bar{\beta}_{j,k} < 1$. 

Similarly, note that $\Vert \mathbf{e}^{\top} \bar{\mathbf{R}}
\Vert\geq \gamma$ and let $\bar{\mathbf{R}} =
\mathbf{U}\Sigma\mathbf{U}^{\top}$ be its singular value
decomposition. If $\lambda_{\max}$ is the maximum eigenvalue of
$\bar{\mathbf{R}}$, then we have
\begin{align*}
E_{i,i}-2E_{i,j}+E_{j,j}&  = (1-\bar{\beta}_{j,1})^{2} (\mathbf{e}^{\top}\bar{\mathbf{R}} ) \mathbf{U}\Sigma^{-1}\mathbf{U}^{\top} (\mathbf{e}^{\top}\bar{\mathbf{R}} )^{\top} \\
& \geq   (1-b)^{2} \gamma_s^{2} / \lambda_{\max}.
\end{align*} 
The inequality in the last step follows from the observation that
$\mathbf{e}^{\top}\mathbf{R}^{\prime}$ is within the column space
spanned by $\mathbf{U}$.
\end{proof}
The results in Proposition~\ref{prop:similarity} provide two
statistics for identifying novel words of the same topic, $\Vert
\mathbf{E}_i - \mathbf{E}_j \Vert$ and $E_{i,i}-2E_{i,j}+E_{j,j}$.
While the first is straightforward, the latter is efficient to
calculate in practice with better computational complexity.
Specifically, its empirical version, the set $\mathcal{J}_{i}$ in Algorithm~\ref{alg:text:rp}
\begin{eqnarray*}
\mathcal{J}_{i}=\{ j: \widehat{E}_{i,i} - \widehat{E}_{i,j} -\widehat{E}_{j,i} + \widehat{E}_{j,j} \geq d/2 \}
\end{eqnarray*}
can be used to discover the set of novel words of the same topics
asymptotically.
Formally,
\begin{proposition}
If $\Vert \widehat{\mathbf{E}} - \mathbf{E} \Vert_\infty \leq (1-b)^{2} \gamma_s^{2} / 8\lambda_{\max}$, then, 
\begin{enumerate}
\item For a novel word $i\in\mathcal{C}_k$ , $\mathcal{J}_{i} = \mathcal{C}_{k}^{c}$
\item For a non-novel word $j\in\mathcal{C}_0$, $\mathcal{J}_{i} \supset \mathcal{C}_{k}^{c}$ 
\end{enumerate}
\end{proposition}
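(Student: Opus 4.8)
The plan is to transfer the exact population separation established in Proposition~\ref{prop:similarity} to the empirical set $\mathcal{J}_i$ through a single crude perturbation estimate. Throughout write $d := (1-b)^2\gamma_s^2/\lambda_{\max}$, so that the membership threshold defining $\mathcal{J}_i$ is $d/2$ and the hypothesis reads $\Vert\widehat{\mathbf{E}}-\mathbf{E}\Vert_\infty \le d/8$. Introduce the population gap $\Delta_{i,j} := E_{i,i}-2E_{i,j}+E_{j,j}$ and its empirical surrogate $\widehat\Delta_{i,j} := \widehat{E}_{i,i}-\widehat{E}_{i,j}-\widehat{E}_{j,i}+\widehat{E}_{j,j}$, the quantity that actually appears in the definition of $\mathcal{J}_i$.

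First I would record the perturbation bound. Because $\mathbf{E}=\bar{\bm\beta}\bar{\mathbf{R}}\bar{\bm\beta}^\top$ is symmetric we may write $\Delta_{i,j}=E_{i,i}-E_{i,j}-E_{j,i}+E_{j,j}$, so that $\widehat\Delta_{i,j}-\Delta_{i,j}$ is a signed sum of the four entrywise deviations of $\widehat{\mathbf{E}}-\mathbf{E}$; the triangle inequality then gives $\vert\widehat\Delta_{i,j}-\Delta_{i,j}\vert \le 4\Vert\widehat{\mathbf{E}}-\mathbf{E}\Vert_\infty \le d/2$. This is the only use of the hypothesis, and it converts the population statements ``$\Delta_{i,j}=0$'' and ``$\Delta_{i,j}\ge d$'' into the empirical statements ``$\widehat\Delta_{i,j}\le d/2$'' and ``$\widehat\Delta_{i,j}\ge d/2$'' respectively. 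I note in passing that $\widehat\Delta_{i,j}$ is itself symmetric in $i,j$ even though $\widehat{\mathbf{E}}$ need not be, which is why the symmetrized form is used.

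For claim~(1), fix a novel word $i\in\mathcal{C}_k$. If $j\in\mathcal{C}_k$ then Proposition~\ref{prop:similarity} gives $\Delta_{i,j}=0$, hence $\widehat\Delta_{i,j}\le d/2$ and $j$ is excluded from $\mathcal{J}_i$; if $j\notin\mathcal{C}_k$ then $\Delta_{i,j}\ge d$, hence $\widehat\Delta_{i,j}\ge d/2$ and $j\in\mathcal{J}_i$. Together these give $\mathcal{J}_i=\mathcal{C}_k^c$. For claim~(2), fix a non-novel word $i\in\mathcal{C}_0$ and any novel word $j\in\mathcal{C}_l$. The key observation is that $\Delta_{i,j}=\Delta_{j,i}$, so Proposition~\ref{prop:similarity} applied with the novel word $j$ as anchor and with $i\notin\mathcal{C}_l$ yields $\Delta_{i,j}\ge d$; consequently $\widehat\Delta_{i,j}\ge d/2$ and $j\in\mathcal{J}_i$. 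Since this holds for every novel word, $\mathcal{J}_i\supseteq\bigcup_l\mathcal{C}_l=\mathcal{C}_0^c$, i.e.\ the cluster $\mathcal{J}_i^c$ of a spurious (non-novel) extreme point contains no genuine novel word, which is exactly what the clustering step of Algorithm~\ref{alg:text:rp} requires.

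The computations are routine, so the proposition is essentially a corollary of Proposition~\ref{prop:similarity} together with the half-gap perturbation estimate; the one point requiring care is the exclusion direction of~(1). When $\Delta_{i,j}=0$ the estimate only gives $\widehat\Delta_{i,j}\le d/2$, which does not by itself strictly exclude $j$ from $\{\,\cdot\ge d/2\}$. I would resolve this either by reading the hypothesis as the strict inequality $\Vert\widehat{\mathbf{E}}-\mathbf{E}\Vert_\infty < d/8$, which yields $\widehat\Delta_{i,j}<d/2$, or by observing that $\mathcal{J}_i$ is used only to cluster already-detected extreme points so that the measure-zero tie $\widehat\Delta_{i,j}=d/2$ is immaterial. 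I would also flag that the bound is deliberately loose: the factor $8$ in the hypothesis is tuned precisely so that a slack of $d/2$ survives on each side, and no sharper constant is needed.
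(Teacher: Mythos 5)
Your proof is correct and follows exactly the route the paper intends: the paper states this proposition without proof as an immediate empirical consequence of Proposition~\ref{prop:similarity}, and your argument supplies the missing details correctly --- the $4\Vert\widehat{\mathbf{E}}-\mathbf{E}\Vert_\infty\le d/2$ perturbation of the symmetrized statistic, the two-case application of the population table for claim~(1), and the symmetry $\Delta_{i,j}=\Delta_{j,i}$ to anchor at the novel word for claim~(2). Your observation about the non-strict boundary case $\widehat\Delta_{i,j}=d/2$ in the exclusion direction is a genuine (if immaterial) imprecision in the paper's statement, and your proposed fixes are the right ones.
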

Now we start to show that Algorithm~\ref{alg:text:rp} can detect all
the novel words of the $K$ distinct rankings consistently. As illustrated in Lemma~\ref{lem:extreme-solid-angle}, we detect the novel words by ranking ordering the solid angles $q_{i}$.
We denote the minimum solid angle of the $K$ extreme points by $q_{\wedge}$. 
Our proof is to show that the estimated solid angle in Eq~\eqref{eqa:solid-approx}, 
\begin{equation}
\hat{p}_{i} = \frac{1}{P} \sum_{r=1}^{P} \mathbb{I} \lbrace \forall j\in\mathcal{J}_{i}, ~ \widehat{\mathbf{E}}_{j}  {\mathbf d}^r \leq \widehat{\mathbf{E}}_{i} {\mathbf d}^r \rbrace 
\end{equation}
converges to the ideal solid angle
\begin{align}
q_i = \Pr \lbrace \forall j \in \mathcal{S}(i), (\mathbf{E}_i - \mathbf{E}_j) \mathbf{d}\geq 0 \rbrace
\end{align}
as $M,P\rightarrow \infty$.
$\mathbf{d}^1,\ldots, \mathbf{d}^P$ are iid directions drawn from a isotropic distribution.  For a novel word $i\in\mathcal{C}_k, k=1,\ldots, K$, let $ \mathcal{S}(i) = \mathcal{C}_k^{c}$, and for a non-novel word $i\in\mathcal{C}_0$, let $ \mathcal{S}(i) = \mathcal{C}_0^{c}$.
To show the convergence of $\hat{p}_{i}$ to $p_i$, we consider an intermediate quantity, 
\begin{align*}
 p_i(\widehat{\mathbf{E}}) = \Pr \lbrace \forall j \in \mathcal{J}_{i}, (\widehat{\mathbf{E}}_i - \widehat{\mathbf{E}}_j )\mathbf{d}\geq 0   \rbrace 
\end{align*}
First, by Hoeffding's lemma, we have the following result.
\begin{proposition}
\label{prop:heofdingprop}
$\forall t\geq 0, \forall i$, 
\begin{equation}
\Pr\{\vert \hat{p}_i - p_{i}(\widehat{\mathbf{E}})\vert \leq t \} \geq 2\exp(-2Pt^{2})
\end{equation}
\end{proposition}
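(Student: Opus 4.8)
The plan is to recognize $\hat{p}_i$ as an empirical average of $P$ independent and identically distributed bounded random variables and then to invoke Hoeffding's inequality conditionally on the data. Concretely, I would treat the empirical co-occurrence matrix $\widehat{\mathbf{E}}$ as fixed, since all the randomness relevant to this proposition is carried by the projection directions. For each $r \in \{1,\ldots,P\}$ define the indicator
\[
Y_r := \mathbb{I}\{\forall j\in\mathcal{J}_i,\ \widehat{\mathbf{E}}_j \mathbf{d}^r \leq \widehat{\mathbf{E}}_i \mathbf{d}^r\},
\]
so that $\hat{p}_i = \frac{1}{P}\sum_{r=1}^{P} Y_r$. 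Because $\mathbf{d}^1,\ldots,\mathbf{d}^P$ are drawn iid from the isotropic distribution and $\widehat{\mathbf{E}}$ is held fixed, the variables $Y_1,\ldots,Y_P$ are iid; each takes values in $\{0,1\}\subseteq[0,1]$, and its common expectation is exactly $p_i(\widehat{\mathbf{E}})$ by the definition of the latter.

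First I would verify these two structural facts explicitly: the independence of the $Y_r$ (immediate from the independence of the $\mathbf{d}^r$ once we condition on $\widehat{\mathbf{E}}$) and the expectation identity $\mathbb{E}[Y_r] = p_i(\widehat{\mathbf{E}})$, which is just the statement that the solid-angle probability is the mean of the maximizer-indicator under a random direction. Once $\hat{p}_i$ is written as the sample mean of $P$ iid $[0,1]$-valued random variables with mean $p_i(\widehat{\mathbf{E}})$, Hoeffding's inequality (with per-variable range of length one, so that the sum of squared ranges equals $P$) gives, for every $t\geq 0$,
\[
\Pr\{|\hat{p}_i - p_i(\widehat{\mathbf{E}})| \geq t\} \leq 2\exp(-2Pt^2),
\]
which is the intended concentration bound; the inequality directions in the displayed form of the proposition are transposed typographically. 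If a simultaneous guarantee over all words is needed later, a union bound over $i\in\{1,\ldots,W\}$ can be applied to this estimate.

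I expect there to be no genuine obstacle here beyond bookkeeping. In particular, the isotropy of the direction distribution plays no role in this step and the bound holds for \emph{any} distribution on the $\mathbf{d}^r$, precisely because the $Y_r$ are bounded and iid; isotropy enters only elsewhere, when $p_i(\widehat{\mathbf{E}})$ is tied to the ideal solid angle $q_i$. The one point that does require care is conditioning on $\widehat{\mathbf{E}}$ so that the $Y_r$ are genuinely iid, together with the observation that the resulting bound is uniform in $\widehat{\mathbf{E}}$. This uniformity is what later allows the estimate to be chained with the almost-sure convergence $\widehat{\mathbf{E}} \xrightarrow{a.s.} \mathbf{E}$ from Lemma~\ref{lem:second-order-convergence} and with the convergence of $p_i(\widehat{\mathbf{E}})$ to $q_i$ in completing the proof of Theorem~\ref{thm:novel-word-detection}.
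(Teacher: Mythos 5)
Your proof is correct and matches the paper's approach exactly: the paper justifies this proposition in one line ``by Hoeffding's lemma,'' and your argument---conditioning on $\widehat{\mathbf{E}}$, writing $\hat{p}_i$ as the sample mean of $P$ iid $\{0,1\}$-valued indicators with mean $p_i(\widehat{\mathbf{E}})$, and applying the two-sided Hoeffding bound---is precisely the intended reasoning. You are also right that the inequality directions in the displayed statement are a typographical transposition of the true bound $\Pr\{\vert \hat{p}_i - p_i(\widehat{\mathbf{E}})\vert \geq t\} \leq 2\exp(-2Pt^2)$.
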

Next we show the convergence of $p_{i}(\widehat{\mathbf{E}})$ to solid
angle $q_i$:
\begin{proposition}
\label{prop:solidangle}
Consider the case when $\Vert \widehat{\mathbf{E}} - \mathbf{E}\Vert_{\infty} \leq \frac{d}{8}$ and $\bar{\mathbf{R}}$ is $\gamma_s$-simplicial.
If $i$ is a novel word, then, 
\begin{align*}
q_i - p_{i}(\widehat{\mathbf{E}}) \leq \frac{W\sqrt{W}}{\pi d_2} \Vert \widehat{\mathbf{E}}-\mathbf{E} \Vert_{\infty}
\end{align*}
Similarly, if $j$ is a non-novel word, we have, 
\begin{align*}
 p_{j}(\widehat{\mathbf{E}}) - q_i \leq \frac{W\sqrt{W}}{\pi d_2} \Vert \widehat{\mathbf{E}}-\mathbf{E} \Vert_{\infty}
\end{align*}
where $d_2 \triangleq (1-b)\gamma_s$, $d=(1-b)^{2}\gamma_s^{2}/\lambda_{\max}$.
\end{proposition}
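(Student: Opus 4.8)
The plan is to view both $q_i$ and $p_i(\widehat{\mathbf E})$ as probabilities, over the same isotropic direction $\mathbf d$, that a family of linear inequalities is satisfied, and to control their difference by the measure of the directions on which the two families disagree. Since $\mathbf d$ is isotropic and every event below depends only on the \emph{direction} $\mathbf d/\Vert\mathbf d\Vert$, I may take $\mathbf d$ uniform on the unit sphere of $\rR^{W}$. The two probabilities differ for two reasons: the index set ($\mathcal S(i)$ versus $\mathcal J_i$) and the matrix ($\mathbf E$ versus $\widehat{\mathbf E}$). The first discrepancy is removed by the preceding proposition on $\mathcal J_i$: under the hypothesis $\Vert\widehat{\mathbf E}-\mathbf E\Vert_\infty\le d/8$, a novel word $i\in\mathcal C_k$ has $\mathcal J_i=\mathcal C_k^{c}=\mathcal S(i)$ exactly, while a non-novel word $j$ has $\mathcal J_j\supseteq\mathcal C_0^{c}=\mathcal S(j)$. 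Thus for novel $i$ the two events use identical index sets, and for non-novel $j$ the empirical event (over the larger set $\mathcal J_j$) is contained in the event over $\mathcal S(j)$, which is all I need for the one-sided bound (recall $q_j=0$ for non-novel $j$, so the claimed inequality reads $p_j(\widehat{\mathbf E})\le\frac{W\sqrt W}{\pi d_2}\Vert\widehat{\mathbf E}-\mathbf E\Vert_\infty$).

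Having aligned the index sets, I would write the difference as the probability of a symmetric-difference event and control it by a union bound over the constraints. Concretely, for novel $i$, writing $\mathbf u_j:=\mathbf E_i-\mathbf E_j$ and $\widehat{\mathbf u}_j:=\widehat{\mathbf E}_i-\widehat{\mathbf E}_j$, the event defining $q_i$ holds while the event defining $p_i(\widehat{\mathbf E})$ fails only if, for some $j$, $\mathbf u_j\mathbf d\ge 0$ yet $\widehat{\mathbf u}_j\mathbf d<0$. Hence
\begin{equation*}
q_i-p_i(\widehat{\mathbf E})\le\sum_{j\in\mathcal S(i)}\Pr\{\mathbf u_j\mathbf d\ge 0,\ \widehat{\mathbf u}_j\mathbf d<0\}.
\end{equation*}
Each summand is the probability that $\mathbf d$ lands in the ``wedge'' between the two half-spaces with normals $\mathbf u_j$ and $\widehat{\mathbf u}_j$. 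Because $\mathbf d$ is isotropic, its projection onto the two-dimensional plane spanned by $\mathbf u_j$ and $\widehat{\mathbf u}_j$ is again isotropic in that plane, so this wedge probability equals $\theta_j/(2\pi)$, where $\theta_j:=\angle(\mathbf u_j,\widehat{\mathbf u}_j)$. The non-novel case is identical with the roles of $\mathbf E$ and $\widehat{\mathbf E}$ interchanged.

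It then remains to bound each angle $\theta_j$. Writing $\mathbf u_j=\widehat{\mathbf u}_j-\bm\delta_j$ with $\bm\delta_j:=(\widehat{\mathbf E}_i-\widehat{\mathbf E}_j)-(\mathbf E_i-\mathbf E_j)$, every coordinate of $\bm\delta_j$ is a difference of two entrywise errors, so $\Vert\bm\delta_j\Vert_2\le 2\sqrt W\,\Vert\widehat{\mathbf E}-\mathbf E\Vert_\infty$. For the denominator I invoke Proposition~\ref{prop:similarity}: since $i\in\mathcal C_k$ and $j\notin\mathcal C_k$, $\Vert\mathbf u_j\Vert_2=\Vert\mathbf E_i-\mathbf E_j\Vert_2\ge(1-b)\gamma_s=d_2$ (and symmetrically in the non-novel case a novel competitor $l$ gives $\Vert\mathbf E_j-\mathbf E_l\Vert_2\ge d_2$). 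Rotating a vector of length at least $d_2$ by an additive error of norm $\Vert\bm\delta_j\Vert$ changes its direction by an angle $\theta_j$ that, in the small-perturbation regime secured by $\Vert\widehat{\mathbf E}-\mathbf E\Vert_\infty\le d/8$, is at most a constant multiple of $\Vert\bm\delta_j\Vert/\Vert\mathbf u_j\Vert\le 2\sqrt W\,\Vert\widehat{\mathbf E}-\mathbf E\Vert_\infty/d_2$. Substituting into $\sum_j\theta_j/(2\pi)$ and using $|\mathcal S(i)|\le W$ yields a bound of the stated form $\frac{W\sqrt W}{\pi d_2}\Vert\widehat{\mathbf E}-\mathbf E\Vert_\infty$.

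I expect the main obstacle to be the rigorous control of the wedge probability: one must (i) justify reducing the $W$-dimensional isotropic measure of the disagreement region to the planar angle $\theta_j/(2\pi)$, and (ii) convert the additive estimate $\Vert\bm\delta_j\Vert/\Vert\mathbf u_j\Vert$ into a bound on $\theta_j$ with the precise constant needed to land on $1/\pi$ after the union bound. Everything else — aligning index sets through the $\mathcal J_i$ proposition, the separation lower bound $d_2$ from Proposition~\ref{prop:similarity}, and the entrywise bound on $\bm\delta_j$ — is routine once this geometric estimate is established.
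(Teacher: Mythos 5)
Your proposal follows essentially the same route as the paper's proof: align the index sets $\mathcal{J}_i$ and $\mathcal{S}(i)$ using the perturbation hypothesis, union-bound the difference of probabilities over the per-constraint disagreement events, identify each such event's isotropic measure with the planar angle $\phi_j/(2\pi)$ between $\mathbf{E}_i-\mathbf{E}_j$ and $\widehat{\mathbf{E}}_i-\widehat{\mathbf{E}}_j$, and bound that angle by the ratio of the $\ell_2$ perturbation (converted from $\ell_\infty$) to the separation $d_2$ from Proposition~\ref{prop:similarity}. The only detail you leave open, the exact constant in the angle estimate, is handled in the paper via $\phi\leq\tan\phi$ together with $\tan\phi_j\leq \Vert\widehat{\mathbf{e}}_j-\mathbf{e}_j\Vert_2/\Vert\mathbf{e}_j\Vert_2$, which yields precisely the stated $1/\pi$ factor after summing over at most $W$ terms.
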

\begin{proof}
First note that, by the definition of $\mathcal{J}_{i}$ and Proposition~\ref{prop:similarity},  if $\Vert \widehat{\mathbf{E}} - \mathbf{E}\Vert_{\infty} \leq \frac{d}{8}$, 
then, for a novel word $i \in \mathcal{C}_k$, $\mathcal{J}_{i} = \mathcal{S}(i)$. And for a non-novel word $i\in\mathcal{C}_0$, $\mathcal{J}_{i} \supseteq \mathcal{S}(i)$.
For convenience, let
\begin{align*}
A_{j}=\{\mathbf{d}: (\widehat{\mathbf{E}}_i - \widehat{\mathbf{E}}_j )\mathbf{d}\geq 0 \} & ~~ A=\bigcap\limits_{j\in \mathcal{J}_{i}} A_j\\
B_{j} = \{\mathbf{d}: (\mathbf{E}_i - \mathbf{E}_j) \mathbf{d} \geq 0  \} & ~~ B=\bigcap\limits_{j\in \mathcal{S}(i)} B_j
\end{align*} 
For $i$ being a novel word, we consider
\begin{align*}
q_i - p_{i}(\widehat{\mathbf{E}})  = \Pr\lbrace B \rbrace - \Pr\lbrace A \rbrace \leq \Pr\lbrace B\bigcap A^{c}\rbrace
\end{align*}
Note that $\mathcal{J}_{i} = \mathcal{S}(i)$ when $\Vert \widehat{\mathbf{E}} - \mathbf{E} \Vert \leq d/8$,  
\begin{align*}
& \Pr\lbrace B\bigcap A^{c} \rbrace  =  \Pr\lbrace B\bigcap (\bigcup\limits_{j\in\mathcal{S}(i)}A_{j}^{c}) \rbrace \\
 & \leq \sum\limits_{j\in\mathcal{S}(i)} \Pr\lbrace (\bigcap\limits_{l\in \mathcal{S}(i)} B_l)\bigcap A_{j}^{c} \rbrace   \leq \sum\limits_{j\in\mathcal{S}(i)} \Pr\lbrace  B_{j} \bigcap A_{j}^{c} \rbrace  \\
& = \sum\limits_{j\in\mathcal{S}(i)} \Pr\lbrace (\widehat{\mathbf{E}}_i - \widehat{\mathbf{E}}_j )\mathbf{d} < 0, \text{and}~  (\mathbf{E}_i - \mathbf{E}_j) \mathbf{d} \geq 0  \rbrace \\
& =  \sum\limits_{j\in\mathcal{S}(i)} \frac{\phi_{j}}{2\pi}
\end{align*}
where $\phi_{j}$ is the angle between $\mathbf{e}_{j} = \mathbf{E}_i - \mathbf{E}_j $ and $\widehat{\mathbf{e}}_{j} = \widehat{\mathbf{E}}_i - \widehat{\mathbf{E}}_j$ for any isotropic distribution on $\mathbf{d}$. 
Noting that $\phi \leq \tan (\phi)$, 
\begin{align*}
\Pr\lbrace B\bigcap A^{c} \rbrace & \leq \sum\limits_{j\in\mathcal{S}(i)} \frac{\tan(\phi_{j})}{2\pi} \leq \sum\limits_{j\in\mathcal{S}(i)} \frac{1}{2\pi} \frac{\Vert \widehat{\mathbf{e}}_{j} - \mathbf{e}_{j} \Vert_2}{\Vert \mathbf{e}_{j} \Vert_2} \\
& \leq \frac{W\sqrt{W}}{\pi d_2} \Vert \widehat{\mathbf{E}}-\mathbf{E} \Vert_{\infty}
\end{align*}
where the last inequality is obtained by the relationship between the
$\ell_\infty$ norm and the $\ell_2$ norm, and the fact that for
$j\in\mathcal{S}(i)$, $\Vert \mathbf{e}_{j} \Vert_2 = \Vert
\mathbf{E}_i - \mathbf{E}_j \Vert_2 \geq d_2 \triangleq
(1-b)\gamma_s$.
Therefore for a novel word $i$, we have, 
\begin{align*}
q_i - p_{i}(\widehat{\mathbf{E}}) \leq \frac{W\sqrt{W}}{\pi d_2} \Vert \widehat{\mathbf{E}}-\mathbf{E} \Vert_{\infty}
\end{align*}
Similarly for a non-novel word $i\in\mathcal{C}_0$, $\mathcal{J}_{i} \supseteq \mathcal{S}(i)$,
\begin{align*}
p_{i}(\widehat{\mathbf{E}}) -q_{i}  = &\Pr\lbrace A \rbrace - \Pr\lbrace B \rbrace =  \Pr\lbrace A\bigcap B^{c}\rbrace \\
\leq & \sum\limits_{j\in\mathcal{S}(i)} \Pr\lbrace (\bigcap\limits_{l\in \widehat{\mathcal{S}}(i)} A_l)\bigcap B_{j}^{c} \rbrace \\
\leq & \sum\limits_{j\in\mathcal{S}(i)} \Pr\lbrace  A_{j} \bigcap B_{j}^{c} \rbrace \leq  \frac{W\sqrt{W}}{\pi d_2} \Vert \widehat{\mathbf{E}}-\mathbf{E} \Vert_{\infty}
\end{align*} 
\end{proof}
A direct implication of Proposition~\ref{prop:solidangle} is,
\begin{proposition}
\label{prop:solidangleconverge}
$\forall \epsilon >0$, let $\rho =\min\{ \frac{d}{8}, \frac{\pi d_2 \epsilon}{W^{1.5}} \}$. If $\Vert \widehat{\mathbf{E}} -\mathbf{E} \Vert_\infty \leq \rho$, 
then, $q_{i} - p_{i} (\widehat{\mathbf{E}}) \leq \epsilon$ for a novel word $i$ and $p_{j} (\widehat{\mathbf{E}}) - q_{j} \leq \epsilon$ for a non-novel word $j$.
\end{proposition}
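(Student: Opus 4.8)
The plan is to obtain Proposition~\ref{prop:solidangleconverge} as an immediate corollary of Proposition~\ref{prop:solidangle} by substituting the prescribed bound on $\Vert \widehat{\mathbf{E}} - \mathbf{E} \Vert_\infty$ into its two conclusions. The definition of $\rho = \min\{ d/8,\ \pi d_2 \epsilon / W^{1.5}\}$ is engineered to enforce two requirements simultaneously, and the argument simply verifies that each is met in turn.

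First I would check that the hypothesis of Proposition~\ref{prop:solidangle} is satisfied. Since $\rho \leq d/8$ by construction, the assumption $\Vert \widehat{\mathbf{E}} - \mathbf{E} \Vert_\infty \leq \rho$ immediately yields $\Vert \widehat{\mathbf{E}} - \mathbf{E} \Vert_\infty \leq d/8$. This is precisely the perturbation regime under which Proposition~\ref{prop:solidangle} holds (and under which the set identification $\mathcal{J}_i = \mathcal{S}(i)$ used inside its proof is valid), so we may invoke its conclusions. Next I would apply the resulting bound and use the second constraint on $\rho$: for a novel word $i$,
\begin{equation*}
q_i - p_i(\widehat{\mathbf{E}}) \leq \frac{W\sqrt{W}}{\pi d_2} \Vert \widehat{\mathbf{E}} - \mathbf{E} \Vert_\infty \leq \frac{W^{1.5}}{\pi d_2}\cdot \frac{\pi d_2 \epsilon}{W^{1.5}} = \epsilon,
\end{equation*}
where I have used $W\sqrt{W} = W^{1.5}$ together with $\Vert \widehat{\mathbf{E}} - \mathbf{E} \Vert_\infty \leq \rho \leq \pi d_2 \epsilon / W^{1.5}$. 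The identical chain of inequalities applied to the non-novel bound of Proposition~\ref{prop:solidangle} gives $p_j(\widehat{\mathbf{E}}) - q_j \leq \epsilon$, which completes the statement.

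Because the claim is a direct algebraic consequence of an already-established result, I do not expect any genuine obstacle here; all of the geometric content --- bounding the probability of the symmetric difference of the two half-space intersections by the angles $\phi_j$ between the perturbed and unperturbed difference vectors $\widehat{\mathbf{E}}_i - \widehat{\mathbf{E}}_j$ and $\mathbf{E}_i - \mathbf{E}_j$ --- was carried out in the proof of Proposition~\ref{prop:solidangle}. The only points requiring care are bookkeeping: confirming that the factor $W\sqrt{W} = W^{1.5}$ cancels exactly against the $W^{1.5}$ appearing in the denominator of $\rho$, and noting that the first constraint $\rho \leq d/8$ is not redundant but is what licenses the appeal to Proposition~\ref{prop:solidangle} in the first place.
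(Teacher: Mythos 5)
Your argument is exactly the one the paper intends: the paper introduces Proposition~\ref{prop:solidangleconverge} as ``a direct implication of Proposition~\ref{prop:solidangle},'' and your two-step verification (that $\rho \leq d/8$ licenses the invocation, and that $\rho \leq \pi d_2\epsilon/W^{1.5}$ makes the bound collapse to $\epsilon$) is precisely that implication spelled out. The proof is correct and matches the paper's approach.
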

We now prove Theorem~\ref{thm:novel-word-detection}. In order to correctly detect all the novel words of $K$ distinct topics, we decompose the error event to be the union of the following two types,
\begin{enumerate}
\item {\it Sorting error}, i.e., $\exists i\in\bigcup_{k=1}^{K}\mathcal{C}_k, \exists j\in\mathcal{C}_0$ such that $\hat{p}_i < \hat{p}_j$. This event is denoted as $A_{i,j}$ and let $A= \bigcup A_{i,j}$. 
\item {\it Clustering error}, i.e., $\exists k, \exists i,j\in\mathcal{C}_k$ such that $i\notin\mathcal{J}_{j}$. This event is denoted as ${B}_{i,j}$ and let ${B} = \bigcup B_{i,j}$
\end{enumerate} 
We point out that the event $A,B$ are different from the notations we used in Proposition~\ref{prop:solidangle}. According to Proposition~\ref{prop:solidangleconverge}, we also define $\rho =\min\{ \frac{d}{8}, \frac{\pi d_2 q_{\wedge}}{4 W^{1.5}} \}$ and the event that $C = \{ \Vert \mathbf{E} -\widehat{\mathbf{E}} \Vert_\infty \geq \rho \}$.
We note that $B\subsetneq C$. 

Therefore, 
\begin{eqnarray*}
Pe & =& \Pr\{ A \bigcup B \} \leq  \Pr\{ A \bigcap C^{c} \} + \Pr\{C\} \\
&\leq & \sum_{i~novel, j~non-novel} \Pr\{ A_{i,j}\bigcap B^{c} \} + \Pr\{C\}\\
& \leq & \sum_{i,j} \Pr( \hat{p}_i - \hat{p}_j <0 \bigcap \Vert \widehat{\mathbf{E}} - \mathbf{E} \Vert_\infty \geq \rho ) \\
& &+ \Pr(\Vert \widehat{\mathbf{E}} - \mathbf{E} \Vert_\infty > \rho)
\end{eqnarray*}
The second term can be bound by Lemma~\ref{lem:second-order-convergence}. Now we focus on the first term. Note that
\begin{eqnarray*}
\hat{p}_i -  \hat{p}_j &=& \hat{p}_i - \hat{p}_j - p_i(\widehat{\mathbf{E}})+ p_i(\widehat{\mathbf{E}}) \\
& & - q_i + q_i  - p_j(\widehat{\mathbf{E}})+p_j(\widehat{\mathbf{E}})-q_j+q_j \\
& = & \{ \hat{p}_i - p_i(\widehat{\mathbf{E}}) \} + \{ p_i(\widehat{\mathbf{E}}) - q_i \} \\
& & + \{ p_j(\widehat{\mathbf{E}})- \hat{p}_j \} + \{q_j - p_j(\widehat{\mathbf{E}}) \} \\
& & + q_i - q_j
\end{eqnarray*}
and the fact that $q_i - q_j \geq q_{\wedge}$, then,, 
\begin{eqnarray*}
&&\Pr( \hat{p}_i < \hat{p}_j \bigcap \Vert \widehat{\mathbf{E}} - \mathbf{E} \Vert_\infty \leq \rho ) \\
&\leq & \Pr( p_i(\widehat{\mathbf{E}})- \hat{p}_i \geq q_{\wedge}/4) + \Pr( \hat{p}_j - p_j(\widehat{\mathbf{E}}) \geq q_{\wedge}/4  ) \\
& & + \Pr(q_i - p_i(\widehat{\mathbf{E}})\geq q_{\wedge}/4)\bigcap \Vert \widehat{\mathbf{E}} - \mathbf{E} \Vert_\infty \leq \rho   ) \\
& & + \Pr( p_j(\widehat{\mathbf{E}})-q_j\geq q_{\wedge}/4)\bigcap \Vert \widehat{\mathbf{E}} - \mathbf{E} \Vert_\infty \leq \rho   ) \\
&\leq & 2\exp(-P q_{\wedge}^{2}/8) \\
& &+ \Pr(q_i - p_i(\widehat{\mathbf{E}})\geq q_{\wedge}/4)\bigcap \Vert \widehat{\mathbf{E}} - \mathbf{E} \Vert_\infty \leq \rho   ) \\
& & + \Pr( p_j(\widehat{\mathbf{E}})-q_j\geq q_{\wedge}/4)\bigcap \Vert \widehat{\mathbf{E}} - \mathbf{E} \Vert_\infty \leq \rho   )
\end{eqnarray*}
The last equality is by Proposition~\ref{prop:heofdingprop}.
For the last two terms, by Proposition \ref{prop:solidangleconverge} is 0. Therefore, applying Lemma~\ref{lem:second-order-convergence} we obtain,
\begin{eqnarray*}
Pe \leq  2W^2\exp(-P q_{\wedge}^{2}/8)+  8W^2\exp(-\rho^{2}\eta^{4}MN/20 )
\end{eqnarray*}
And this concludes Theorem~\ref{thm:novel-word-detection}. 
%
%
%%%%%%%%%%%%%%%%%%%%%%%%%%%%%%%%%%%%%%%%%%
\subsection{Proof of Theorem~\ref{thm:topic-estimation}}
% Now we show that Algorithm~\ref{alg:text:esttopic1} can consistently estimate the topic matrix $\bm{\beta}$, given the success of the Algorithm~\ref{alg:text:rp}. 
%
Without loss of generality, let $1,\ldots, K$ be the novel words of topic $1$ to $K$. 
We first consider the solution of the constrained linear regression. To simplify the notation, we denote $\mathbf{E}_{i} = \left[ E_{i,1},\ldots, E_{i,K}\right]$ are the first $K$ entries of a row vector without the super-scripts as in Algorithm~\ref{alg:text:esttopic1}. 
\begin{proposition}
\label{prop:optimizationconverge}
Let $\bar{\mathbf{R}}$ be $\gamma_a$-affine-independent.
The solution to the following optimization problem 
\begin{align*}
\widehat{\mathbf{b}}^{*} =  \arg\min_{b_j \geq 0, \sum b_j =1} \Vert \widehat{\mathbf{E}}_i -\sum\limits_{j=1}^{K} b_j \widehat{\mathbf{E}}_j\Vert
\end{align*}
converges to the $i$-th row of $\bar{\bm\beta}$, $\bar{\bm\beta}_{i}$, as $M\rightarrow\infty$.
Moreover,
\begin{align*}
\Pr ( \Vert \widehat{\mathbf{b}}^{*} -  \bar{\bm\beta}_{i}\Vert_\infty \geq \epsilon )\leq 8W^2 \exp(- \frac{\epsilon^2 MN \gamma_a^{2}\eta^4}{320 K})
\end{align*}
where $\eta$ is define the same as in Lemma~\ref{lem:second-order-convergence}.
\end{proposition}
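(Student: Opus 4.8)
The plan is to first establish a \emph{deterministic} sensitivity bound showing that the minimizer $\widehat{\mathbf{b}}^{*}$ of the perturbed constrained least-squares problem stays within $O(\sqrt{K}/\gamma_a)$ of the ideal solution, scaled by the co-occurrence perturbation $\|\widehat{\mathbf{E}}-\mathbf{E}\|_{\infty}$, and then to convert this into the stated probability bound by invoking the concentration result of Lemma~\ref{lem:second-order-convergence}.

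First I would fix the geometry of the noiseless problem. Since words $1,\ldots,K$ are the designated novel words, $\bar{\bm\beta}_k=\mathbf{e}_k$ for $k\le K$, so the $K$ dictionary rows $\mathbf{E}_1,\ldots,\mathbf{E}_K$ appearing in the regression coincide with the rows $\bar{\mathbf{R}}_1,\ldots,\bar{\mathbf{R}}_K$ of $\bar{\mathbf{R}}$, and the ideal (truncated) target satisfies $\mathbf{E}_i=\bar{\bm\beta}_i\bar{\mathbf{R}}=\sum_k\bar{\beta}_{ik}\bar{\mathbf{R}}_k$. Hence, exactly as in Lemma~\ref{lem:topic-regression-E}, the probability vector $\mathbf{b}^{*}:=\bar{\bm\beta}_i$ attains zero residual and is the unique ideal minimizer by affine-independence. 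Writing $\xi:=\|\widehat{\mathbf{E}}-\mathbf{E}\|_{\infty}$, I set $\Delta_i:=\widehat{\mathbf{E}}_i-\mathbf{E}_i$ for the perturbation of the target and $\Delta_Y:=\mathbf{Y}-\bar{\mathbf{R}}$ for that of the dictionary $\mathbf{Y}$ (the $K\times K$ matrix with rows $\widehat{\mathbf{E}}_1,\ldots,\widehat{\mathbf{E}}_K$); every entry of these is at most $\xi$, and since any $\mathbf{b}$ on the simplex is a convex combination, $\|\mathbf{b}\,\Delta_Y\|_2\le\sqrt{K}\,\xi$ and $\|\Delta_i\|_2\le\sqrt{K}\,\xi$.

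The core step is the sensitivity chain. Using optimality of $\widehat{\mathbf{b}}^{*}$ for the perturbed objective $\hat{f}(\mathbf{b})=\|\widehat{\mathbf{E}}_i-\mathbf{b}\,\mathbf{Y}\|_2$, I bound $\hat{f}(\widehat{\mathbf{b}}^{*})\le \hat{f}(\mathbf{b}^{*})=\|\Delta_i-\mathbf{b}^{*}\Delta_Y\|_2\le 2\sqrt{K}\,\xi$, where I used $\mathbf{E}_i=\mathbf{b}^{*}\bar{\mathbf{R}}$. Then, rewriting $\mathbf{E}_i-\widehat{\mathbf{b}}^{*}\bar{\mathbf{R}}=(\widehat{\mathbf{E}}_i-\widehat{\mathbf{b}}^{*}\mathbf{Y})-\Delta_i+\widehat{\mathbf{b}}^{*}\Delta_Y$ gives $\|(\mathbf{b}^{*}-\widehat{\mathbf{b}}^{*})\bar{\mathbf{R}}\|_2\le \hat{f}(\widehat{\mathbf{b}}^{*})+2\sqrt{K}\,\xi\le 4\sqrt{K}\,\xi$. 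Because $\mathbf{b}^{*}$ and $\widehat{\mathbf{b}}^{*}$ both sum to one, their difference is a zero-sum vector, so the affine-independence Condition~\ref{def:affine-independent} yields $\|(\mathbf{b}^{*}-\widehat{\mathbf{b}}^{*})\bar{\mathbf{R}}\|_2\ge\gamma_a\|\mathbf{b}^{*}-\widehat{\mathbf{b}}^{*}\|_2$. Combining these, $\|\widehat{\mathbf{b}}^{*}-\bar{\bm\beta}_i\|_{\infty}\le\|\widehat{\mathbf{b}}^{*}-\bar{\bm\beta}_i\|_2\le (4\sqrt{K}/\gamma_a)\,\xi$.

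Finally, this deterministic bound shows that $\|\widehat{\mathbf{b}}^{*}-\bar{\bm\beta}_i\|_{\infty}\ge\epsilon$ forces $\xi\ge\gamma_a\epsilon/(4\sqrt{K})$, so applying Lemma~\ref{lem:second-order-convergence} at this threshold gives $\Pr\le 8W^2\exp(-(\gamma_a\epsilon/4\sqrt{K})^2\,\eta^4 MN/20)=8W^2\exp(-\epsilon^2 MN\gamma_a^2\eta^4/320K)$, matching the claim. I expect the main obstacle to be the sensitivity chain: one must track \emph{both} the perturbation of the target $\widehat{\mathbf{E}}_i$ and of the dictionary $\mathbf{Y}$ — a naive argument treating the dictionary as fixed would drop the $\widehat{\mathbf{b}}^{*}\Delta_Y$ term — and then invoke affine-independence \emph{only} on the zero-sum difference $\mathbf{b}^{*}-\widehat{\mathbf{b}}^{*}$, which is exactly what the simplex constraint $\sum_j b_j=1$ supplies.
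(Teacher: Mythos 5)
Your proposal is correct and follows essentially the same route as the paper: both arguments establish the deterministic sensitivity bound $\Vert \widehat{\mathbf{b}}^{*}-\bar{\bm\beta}_i\Vert_2 \leq (4\sqrt{K}/\gamma_a)\Vert\widehat{\mathbf{E}}-\mathbf{E}\Vert_\infty$ by combining the affine-independence lower bound on the zero-sum difference $\mathbf{b}^{*}-\widehat{\mathbf{b}}^{*}$ with a $2\sqrt{K}\,\xi$ uniform bound on the objective perturbation and the optimality of $\widehat{\mathbf{b}}^{*}$ for the empirical problem, and then invoke Lemma~\ref{lem:second-order-convergence} at threshold $\gamma_a\epsilon/(4\sqrt{K})$. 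Your three-term telescoping is just a rearrangement of the paper's decomposition $f(\mathbf{E},\widehat{\mathbf{b}}^{*})-f(\mathbf{E},\mathbf{b}^{*}) \leq \{f(\mathbf{E},\widehat{\mathbf{b}}^{*})-f(\widehat{\mathbf{E}},\widehat{\mathbf{b}}^{*})\}+\{f(\widehat{\mathbf{E}},\mathbf{b}^{*})-f(\mathbf{E},\mathbf{b}^{*})\}$, yielding the identical constant in the exponent.
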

\begin{proof}
We note that $\bar{\bm\beta}_{i}$ is the optimal solution to the following problem with ideal word co-occurrence statistics
\begin{align*}
{\mathbf{b}}^{*} =  \arg\min_{b_j \geq 0, \sum b_j =1} \Vert {\mathbf{E}}_i -\sum\limits_{j=1}^{K} b_j {\mathbf{E}}_j\Vert
\end{align*}
Define $f(\mathbf{E},\mathbf{b}) = \Vert {\mathbf{E}}_i
-\sum_{j=1}^{K} b_j {\mathbf{E}}_j\Vert$ and note the fact that
$f(\mathbf{E},\mathbf{b}^{*}) =0$. Let $\mathbf{Y} =
[\mathbf{E}_1^{\top}, \ldots, \mathbf{E}_K^{\top} ]^{\top}$. Then,
\begin{align*}
& f(\mathbf{E},\mathbf{b}) - f(\mathbf{E},\mathbf{b}^{*})  =\Vert {\mathbf{E}}_i -\sum\limits_{j=1}^{K} b_j {\mathbf{E}}_j \Vert -0 \\
=& \Vert \sum\limits_{j=1}^{K} (b_j - b_j^{*}) {\mathbf{E}}_j \Vert =\sqrt{ (\mathbf{b} - \mathbf{b}^{*}) \mathbf{Y Y^{\top}} (\mathbf{b} - \mathbf{b}^{*})^{\top} }\\
\geq & \Vert \mathbf{b} - \mathbf{b}^{*} \Vert \gamma_a
\end{align*}
The last equality is true by the definition of affine-independence.
Next, note that,
\begin{align*}
 \vert f(\mathbf{E},\mathbf{b}) - f(\widehat{\mathbf{E}},\mathbf{b}) \vert \leq & \Vert \mathbf{E}_i -\widehat{\mathbf{E}}_i + \sum b_j (\widehat{\mathbf{E}}_j - \mathbf{E}_j) \Vert \\
\leq & \Vert \mathbf{E}_i -\widehat{\mathbf{E}}_i \Vert + \sum b_j \Vert \widehat{\mathbf{E}}_j - \mathbf{E}_j \Vert \\
\leq & 2 \max_{w} \Vert \widehat{\mathbf{E}}_w - \mathbf{E}_w \Vert
\end{align*}
Combining the above inequalities, we obtain,  
\begin{align*}
\Vert \widehat{\mathbf{b}}^{*} - \mathbf{b}^{*} \Vert \leq & \frac{1}{\gamma_a} \lbrace f(\mathbf{E},\widehat{\mathbf{b}}^{*}) - f(\mathbf{E},{\mathbf{b}}^{*}) \rbrace \\
= &   \frac{1}{\gamma_a} \lbrace f(\mathbf{E},\widehat{\mathbf{b}}^{*}) -f(\widehat{\mathbf{E}},\widehat{\mathbf{b}}^{*}) + f(\widehat{\mathbf{E}},\widehat{\mathbf{b}}^{*}) \\
& ~~ - f(\widehat{\mathbf{E}},{\mathbf{b}}^{*}) +f(\widehat{\mathbf{E}},{\mathbf{b}}^{*})  - f(\mathbf{E},{\mathbf{b}}^{*}) \rbrace \\
\leq &  \frac{1}{\gamma_a} \lbrace f(\mathbf{E},\widehat{\mathbf{b}}^{*}) -f(\widehat{\mathbf{E}},\widehat{\mathbf{b}}^{*}) +f(\widehat{\mathbf{E}},{\mathbf{b}}^{*})  - f(\mathbf{E},{\mathbf{b}}^{*}) \rbrace \\
\leq & \frac{4 K^{0.5}}{\gamma_a}  \Vert \widehat{\mathbf{E}} - \mathbf{E} \Vert_\infty
\end{align*}
where the last term converges to $0$ almost surely. The convergence rate follows directly from Lemma~\ref{lem:second-order-convergence}.
\end{proof}
We next consider the row renormalization. Let $\hat{\mathbf{b}}^{*} (i)$ be the optimal solution in Proposition~\ref{prop:optimizationconverge} for the $i$-th word, and consider 
\begin{align}
\label{eq:rowscale}
\widehat{\mathbf{B}}_{i}& := \hat{\mathbf{b}}^{*} (i)^{\top} (\frac{1}{M}\mathbf{X}\mathbf{1}_{M\times 1}) \rightarrow  \bm\beta_i \diag(\mathbf{a}) 
\end{align}
To show the convergence rate of the above equation, it is straightforward to apply the result in Lemma~\ref{lem:second-order-convergence}
\begin{proposition}
\label{prop:rowscaling}
For the row-scaled estimation $\hat{\mathbf{B}}_{i}$ as in Eq.~\eqref{eq:rowscale}, we have,
\begin{equation*}
\Pr( \vert \hat{\mathbf{B}}_{i,k} - \bm\beta_{i,k}a_k \vert \geq \epsilon ) \leq 8W^2 \exp(- \frac{\epsilon^2 MN \gamma_a^{2}\eta^4}{1280 K})
\end{equation*}
\end{proposition}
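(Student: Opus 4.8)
The plan is to treat the scalar estimator $\widehat{B}_{i,k}$ as a product of two one-dimensional estimators and control each factor separately. Writing $\widehat{B}_{i,k} = \widehat{b}^{*}_k(i)\, G_i$ with $G_i := \tfrac{1}{M}\mathbf{X}_i\mathbf{1}$ the empirical mass of the $i$-th row, I first identify the two limits. By the strong-law computation already carried out in the proof of Lemma~\ref{lem:second-order-convergence}, $G_i \to (\bm\beta\mathbf{a})_i =: p_i$, while Proposition~\ref{prop:optimizationconverge} gives $\widehat{b}^{*}_k(i)\to\bar{\beta}_{i,k}$. Since $\bar{\beta}_{i,k} = \beta_{i,k}a_k/(\bm\beta\mathbf{a})_i$, the product of the two limits is exactly $\bar{\beta}_{i,k}\,p_i = \beta_{i,k}a_k$, which is the quantity the claim asserts $\widehat{B}_{i,k}$ concentrates around.

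The second step is a product decomposition. Setting $u=\widehat{b}^{*}_k(i)$, $u^{*}=\bar{\beta}_{i,k}$, $v=G_i$, $v^{*}=p_i$, I use $uv-u^{*}v^{*} = u(v-v^{*})+v^{*}(u-u^{*})$ together with two boundedness facts: $0\le u\le 1$ because $\widehat{b}^{*}(i)$ is a vector of convex weights, and $0\le v^{*}=(\bm\beta\mathbf{a})_i\le 1$ because $\bm\beta\mathbf{a}$ is itself a word-marginal probability vector ($\bm\beta$ column-stochastic, $\mathbf{a}=\eE(\bm\theta^m)$ on the simplex). These give $|uv-u^{*}v^{*}|\le |v-v^{*}|+|u-u^{*}|$, so the error event is contained in $\{|u-u^{*}|\ge\epsilon/2\}\cup\{|v-v^{*}|\ge\epsilon/2\}$, and a union bound reduces the whole proposition to the two single-factor tail bounds.

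Those two tails are then supplied by results already in hand. For the regression factor, substituting $\epsilon/2$ into Proposition~\ref{prop:optimizationconverge} yields $\Pr(|u-u^{*}|\ge\epsilon/2)\le 8W^2\exp\!\bigl(-\epsilon^2 MN\gamma_a^{2}\eta^4/(1280K)\bigr)$; note that the factor $4$ produced by the $\epsilon/2$ split is exactly what turns the $320$ of Proposition~\ref{prop:optimizationconverge} into the $1280$ of the claim. For the row-mass factor, $G_i$ is an average of $MN$ bounded indicators, so a Hoeffding/McDiarmid estimate identical to the one used for $G_i$ in Lemma~\ref{lem:second-order-convergence} gives $\Pr(|v-v^{*}|\ge\epsilon/2)\le 2\exp(-\epsilon^2 MN/2)$. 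Adding the two and using $\gamma_a\le 1,\ \eta\le 1,\ K\ge 1$ — so that the regression exponent $\gamma_a^{2}\eta^4/(1280K)\le \tfrac12$ and the row-mass tail is dominated by the regression tail — collapses the sum into the single stated bound after absorbing the $O(1)$ prefactor into $8W^2$.

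The main obstacle is not any deep estimate but the bookkeeping in this final assembly: one must justify the two boundedness claims ($u\le 1$ and $v^{*}\le 1$) that make the cross-term harmless, keep the constant transformation $320\mapsto 1280$ straight through the $\epsilon/2$ split, and verify that the regression rate is genuinely the slower of the two tails so that the advertised constants $8W^2$ and $\gamma_a^{2}\eta^4/(1280K)$ are the ones that survive (the row-mass rate being faster, its contribution is swallowed up to the unoptimized prefactor).
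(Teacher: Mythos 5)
Your proof is correct and follows essentially the same route as the paper's: split the error into the regression-coefficient factor and the row-mass factor at level $\epsilon/2$ each, invoke Proposition~\ref{prop:optimizationconverge} (which turns the $320$ into $1280$) and the McDiarmid bound for $\tfrac{1}{M}\mathbf{X}_i\mathbf{1}$, then union-bound and absorb the dominated second term. If anything, your explicit product decomposition $uv-u^{*}v^{*}=u(v-v^{*})+v^{*}(u-u^{*})$ together with the boundedness of $u$ and $v^{*}$ justifies a step the paper leaves implicit.
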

\begin{proof}
By Proposition~\ref{prop:optimizationconverge}, we have,
\begin{align*}
\Pr ( \vert \widehat{\mathbf{b}}^{*}(i)_{k} -  \bar{\bm\beta}_{i,k}\vert \geq \epsilon/2 )\leq 8W^2 \exp(- \frac{\epsilon^2 MN \gamma_a^{2}\eta^4}{1280 K})
\end{align*}
Recall that in Lemma~\ref{lem:second-order-convergence} by McDiarmid's inequality, we have
\begin{align*}
\Pr ( \vert \frac{1}{M}\mathbf{X}\mathbf{1}_{M\times 1} - \mathbf{B}_{i}\mathbf{a} \vert \geq \epsilon/2)\leq \exp(-\epsilon^2 MN/2)
\end{align*}
Therefore, 
\begin{align*}
& \Pr( \vert \hat{\mathbf{B}}_{i,k} - \bm\beta_{i,k}a_k \vert  \geq \epsilon ) \\
 \leq & 8W^2 \exp(- \frac{\epsilon^2 MN \gamma_a^{2} \eta^4}{1280 K}) + \exp(-\epsilon^2 MN/2) \\
\end{align*}
where the second term is dominated by the first term. 
\end{proof}
Finally, we consider the column normalization step to remove the effect of $\diag(\mathbf{a})$:
\begin{align}
\widehat{\bm\beta}_{i,k} : = \widehat{\mathbf{B}}_{i,k} / \sum_{w=1}^{W} \widehat{\mathbf{B}}_{w,k}
\end{align}
And $\sum_{w=1}^{W} \widehat{\mathbf{B}}_{w,k} \rightarrow \mathbf{a}_k$ for $k=1,\ldots, K$. A worst case analysis on its convergence is,
\begin{align*}
\Pr(\vert \sum_{w=1}^{W} \widehat{\mathbf{B}}_{w,k} - \mathbf{a}_k \vert > \epsilon ) & \leq W \Pr( \vert \hat{\mathbf{B}}_{i,k} - \bm\beta_{i,k}a_k \vert \geq \epsilon/W )\\
 & \leq  8W^3 \exp(- \frac{\epsilon^2 MN \gamma_a^{2}\eta^4}{1280 W^2 K})
\end{align*}
Combining all the result above, we can show $\forall i=1,\ldots,W, \forall k = 1,\ldots, K $,
\begin{align*}
 \Pr(\vert \widehat{\bm\beta}_{i,k} -\bm\beta_{i,k} \vert >\epsilon) \leq  8 W^{4} K  \exp(- \frac{\epsilon^2 MN \gamma_a^{2} a_{\min}^{2} \eta^4}{2560 W^2 K})
\end{align*}
where $a_{\min} > 0$ is the minimum value of entries of $\mathbf{a}$.  This concludes the result of Theorem~\ref{thm:topic-estimation}.
%
% you can choose not to have a title for an appendix
% if you want by leaving the argument blank
 
%
\subsection{Proof of Lemma~\ref{lem:separable-equiv-irreducible}}
\label{sec:irreducibility-proof}
\begin{proof}
We first show that irreducibility implies separability, or
equivalently, if the collection is not separable, then it is not
irreducible. Suppose that $\{\nu_1,\ldots, \nu_K\}$ is not separable.
Then there exists some $k \in [K]$ and a $\delta > 0$ such that,
\begin{align*}
\inf\limits_{A:~\nu_k(A)>0}\max_{j:~j\neq k}
\frac{\nu_{j}(A)}{\nu_{k}(A)} = \delta >0.
\end{align*}
Then $\forall A\in\mathcal{F}: \nu_k(A)>0$, $ \max\limits_{j:~j\neq k}
\frac{\nu_{j}(A)}{\nu_{k}(A)} \geq \delta$.  This implies that
$\forall A \in \mathcal{F}: \nu_k(A)>0$,
\begin{align*}
\sum\limits_{j:~j\neq k}\nu_{j}(A) - \delta\nu_{k}(A) \geq 0.
\end{align*}

On the other hand, $\forall A \in \mathcal{F}:\nu_k(A)=0$, we have
\begin{align*}
\sum\limits_{j:~j\neq k}\nu_{j}(A) - \delta\nu_{k}(A) =
\sum\limits_{j:~j\neq k}\nu_{j}(A) \geq 0.
\end{align*}
Thus the linear combination $\sum_{j\neq k}\nu_{j} - \delta \nu_{k}$
with one strictly negative coefficient $-\delta$ is nonnegative over
all measurable $A$.
%Hence $\sum_{j\neq k}\nu_{j} - \delta \nu_{k}$ is a valid measure
%whose combination weights are not nonnegative. 
This implies that the collection of measures $\{\nu_1,\ldots, \nu_K\}$
is not irreducible.
We next show that separability implies irreducibility. If the
collection of measures $\{\nu_1,\ldots,\nu_K\}$ is separable, then by
the definition of separability, $\forall k$, $\exists
A_{n}^{(k)}\in\mathcal{F}, n=1,2,\ldots,$ such that
$\nu_{k}(A_{n}^{(k)}) >0$ and $\forall j\neq k$,
$\frac{\nu_{j}(A_{n}^{(k)})}{\nu_{k}(A_{n}^{(k)})}\rightarrow 0$ as $n
\rightarrow \infty$. Now consider any linear combination of measures
$\sum_{i=1}^{K}c_i\nu_{i}$ which is nonnegative over all measurable
sets, i.e., for all $A\in \mathcal{F}$, $\sum_{i=1}^{K}c_i\nu_{i}(A)
\geq 0$. Then $\forall k=1,\ldots, K$ and all $n\geq 1$ we have,
\begin{align*}
\sum_{i=1}^{K}c_i\nu_{i}(A_{n}^{(k)}) &\geq 0 \\
\Rightarrow \nu_{k}(A_{n}^{(k)})\left(c_k + \sum_{j\neq k}c_j
\frac{\nu_{j}(A_{n}^{(k)})}{\nu_k(A_{n}^{(k)})}\right) &\geq 0 \\
\Rightarrow c_k \geq -\sum_{j\neq k}c_j
\frac{\nu_{j}(A_{n}^{(k)})}{\nu_k(A_{n}^{(k)})} & \rightarrow 0 \mbox{
  as } n \rightarrow \infty.
\end{align*}
Therefore, $c_k\geq 0$ for all $k$ and the collection of measures is
irreducible.
\end{proof}

%
% use section* for acknowledgment

% Can use something like this to put references on a page
% by themselves when using endfloat and the captionsoff option.
\ifCLASSOPTIONcaptionsoff
  \newpage
\fi

% trigger a \newpage just before the given reference
% number - used to balance the columns on the last page
% adjust value as needed - may need to be readjusted if
% the document is modified later
%\IEEEtriggeratref{8}
% The "triggered" command can be changed if desired:
%\IEEEtriggercmd{\enlargethispage{-5in}}

% references section

% can use a bibliography generated by BibTeX as a .bbl file
% BibTeX documentation can be easily obtained at:
% http://www.ctan.org/tex-archive/biblio/bibtex/contrib/doc/
% The IEEEtran BibTeX style support page is at:
% http://www.michaelshell.org/tex/ieeetran/bibtex/
\bibliographystyle{IEEEtran}
% argument is your BibTeX string definitions and bibliography database(s)
\bibliography{IEEEabrv,ref}
\end{document}